\newtheorem{theorem}{Theorem}
\newtheorem{lemma}{Lemma}
\newtheorem{corollary}{Corollary}
\renewcommand{\epsilon}{\varepsilon}
\newcommand{\R}{\mathds{R}}
\newcommand{\N}{\mathds{N}}
\newcommand{\ones}[1]{\OM(#1)}
\newcommand{\floor}[1]{\left\lfloor#1\right\rfloor}
\newcommand{\ceil}[1]{\left\lceil#1\right\rceil}
\newcommand{\E}[1]{\text{E}\left(#1\right)}
\newcommand{\Prob}[1]{\text{P}\left(#1\right)}
\newcommand{\LO}{\textup{LO}\xspace}
\newcommand{\OM}{\textup{OneMax}\xspace}
\newcommand{\polylog}{\mathrm{polylog}}
\newcommand{\algo}{\ensuremath{\mathcal{A}}\xspace}
\newcommand{\EA}{\text{(1+1)~EA}\xspace}
\newcommand{\EAmu}{\text{(1+1)~EA}\ensuremath{_\mu}\xspace}
\newcommand{\muea}{($\mu$+1)~EA\xspace}
\newcommand{\ie}{i.\,e.\xspace}
\newcommand{\Tea}[1]{\widetilde{E}^{\textrm{\OM}}_{\EAmu}(#1)}
\newcommand{\Tmuea}[1]{E^f_{\algo}(#1)}
\begin{document}

\title{A New Method for Lower Bounds on the Running Time of~Evolutionary Algorithms\thanks{A preliminary version with parts of the results has been presented at a conference~\cite{Sudholt2010a}. The results therein were limited to mutation rate~$1/n$.}}

\author{Dirk Sudholt}

\maketitle

\begin{abstract}
We present a new method for proving lower bounds on the expected running time of evolutionary algorithms. It is based on fitness-level partitions and an additional condition on transition probabilities between fitness levels.
The method is versatile, intuitive, elegant, and very powerful. It yields exact or near-exact lower bounds for \LO, \OM{}, long $k$-paths, and all functions with a unique optimum.
Most lower bounds are very general: they hold for all evolutionary algorithms that only use bit-flip mutation as variation operator---i.\,e. for all selection operators and population models. The lower bounds are stated with their dependence on the mutation rate.

These results have very strong implications. They allow to determine the optimal mutation-based algorithm for \LO{} and \OM{}, i.\,e., which algorithm minimizes the expected number of fitness evaluations. This includes the choice of the optimal mutation rate.
\end{abstract}

\section{Introduction}

Evolutionary algorithms (EAs) and other randomized search heuristics have been successfully applied to countless difficult practical problems. One important reason for their popularity and their success is that they can be applied to a broad range of problems. They are usually easy to implement and they typically produce reasonable results in short time, with little effort.

However, getting the best possible results requires much greater effort.
When aiming for maximum efficiency, one has to think carefully about what search algorithm to use, how to make design choices, and how to tune the parameters of the algorithm.
In the search for the best strategy, researchers and practitioners alike are faced with a range of fundamental questions:
\begin{itemize}
\item How effective is search algorithm $\mathcal{A}$ on problem/problem class~$P$?
\item What is the best parameter setting for~$\mathcal{A}$ on~$P$?
\item Is search algorithm $\mathcal{B}$ faster than $\mathcal{A}$ on $P$?
\item What is the best search algorithm for~$P$?
\end{itemize}
Finding answers to these questions is now more pressing then ever. The field of evolutionary computation has grown immensely in the last decades and it has led to the development of countless variants of search algorithms, with new bio-inspired optimization paradigms emerging every year. This can prove to be a burden as practitioners are faced with an overwhelming variety of search algorithms.

Running time analysis has emerged as an important and very active area in evolutionary computation. The goal is to formally analyze the random or expected time until an evolutionary algorithm has found a satisfactory solution for a given problem.
By assessing how the expected running time grows with the problem dimension, we can gain valuable insights into their scalability. These insights apply to arbitrary, not too small problem dimensions---even to very large dimensions that are beyond the capabilities of today's hardware.

It also yields a solid foundation for the comparison of different EAs or different heuristic paradigms. This includes the question in how far design choices affect performance such as the choice of representations, operators, and parameters. In some cases running time analyses allow to draw conclusions about optimal parameter settings. Some of the above questions can be answered. Last but not least, theoretical analyses lead to insight into the working principles of EAs and to a better understanding of their behavior.

Running time analyses have been performed for classes of pseudo-Boolean functions such as unimodal functions~\cite{Droste2002}, linear functions~\cite{Droste2002,He2004,Jagerskupper2011,Doerr2010,Witt2011a}, functions with plateaus~\cite{Jansen2002a}, monotone polynomials~\cite{Wegener2005c}, and monotone functions~\cite{Doerr2010c}.
The same approach has been used for the analysis of problems from combinatorial optimization, see the survey by Oliveto, He, and Yao~\cite{Oliveto2007} or the recent text book by Neumann and Witt~\cite{BookNeuWit}. Also many other metaheuristics have been studied such as memetic algorithms~\cite{Sudholt2009,Sudholt2010,Sudholt2010b}, estimation-of-distribution algorithms~\cite{Droste2006a,Chen2010}, ant colony optimization~\cite{Gutjahr2008a,Neumann2009b,Neumann2009,Sudholt2011a}, particle swarm optimization~\cite{Sudholtsubmitteda,Witt2009}, and artificial immune systems~\cite{Zarges2008,Jansen2011}. A good summary of recent developments is given in the edited book by Auger and Doerr~\cite{Auger2011}.

However, running time analysis comes with several drawbacks.
In many cases running time analyses are very challenging. Search heuristics represent complex dynamic systems that are often hard to handle analytically. Hence, studies have often been limited to very specific settings. Comparisons between different search algorithms---or variants of the same algorithm---have often been performed on contrived artificial functions that were designed specifically to enable an analysis.

Furthermore, many analyses are restricted to a single, very specific algorithm such as the \EA with mutation probability~$1/n$. This helps to keep the analyses simple, but it also means that conclusions are limited to this particular algorithm.
Another shortcoming is that, when considering polynomial expected running times, often only upper bounds on the expected running time are shown. Upper bounds are more appealing than lower bounds as they show that a particular search algorithm is effective on a particular problem. Lower bounds are typically harder to prove and often more imprecise, compared to upper bounds. For example, for the function \OM an upper bound with the exact leading constant (i.\,e., the constant factor preceding the fastest-growing term) is known from the 1990s (see Rudolph~\cite[page 95]{Rudolph1997a}). But a matching lower bound with the same leading constant was only proved recently, in 2010, by Doerr, Fouz, and Witt~\cite{Doerr2010a}.

When only upper bounds are available it is hard to make comparisons between different algorithms. Even when an upper bound for search algorithm $\mathcal{A}$ is much lower than an upper bound for $\mathcal{B}$, we cannot conclude with rigor that $\mathcal{A}$ is more efficient than $\mathcal{B}$. It could be that the analysis for $\mathcal{A}$ is more precise than that for $\mathcal{B}$, but in fact $\mathcal{B}$ is more efficient than~$\mathcal{A}$. One has to take care not to draw wrong conclusions when interpreting running time bounds.
Only if we have a lower bound for $\mathcal{B}$ that is larger than the upper bound for~$\mathcal{A}$ we can say with certainty that $\mathcal{A}$ is more effective than~$\mathcal{B}$. This stresses the importance of lower bounds, and that of having precise running time bounds.

Many researchers have tried to develop methods for proving lower bounds. Drift analysis has emerged as one powerful tool~\cite{Oliveto2011,He2004,Doerr2010b,Lehre2010a,Doerr2011c}. However, it is not always easy to apply.
We present a new method for proving lower bounds on the running time of stochastic search algorithms (see Section~\ref{sec:method}). It follows the idea of fitness-based partitions or fitness levels, a well-known tool for proving upper running time bounds.
The idea is to partition the search space into a sequence of sets called fitness levels. These sets have to be traversed in order to find a global optimum. Lower bounds can be derived if we have upper bounds on the probability of reaching a better fitness level and additional information about the transition probabilities between fitness levels.

The method is illustrated with applications to well-studied test problems.
The function $\OM(x) := \sum_{i=1}^n x_i$ counts the number of ones in the bit string. The optimum is the all-ones bit string. Assessing the performance of a search algorithm on \OM equals the question how effective the algorithm is at hill climbing---and at finding a particular target point if best possible hints are given.
The function LeadingOnes, shortly $\LO(x) := \sum_{i=1}^n \prod_{j=1}^i x_i$, is another popular test function that counts the number of leading ones in the bit string. All bits have to be optimized sequentially. This gives an example of a unimodal function that is more difficult than \OM. It also resembles worst-case inputs for shortest path problems~\cite{Sudholt2011a}.
Long $k$-paths~\cite{Horn1994,Rudolph1997,Droste2002,Sudholt2009} represent even more difficult unimodal functions where EAs typically climb up a path. As the path can have exponential length and shortcuts are unlikely, this a very challenging problem. For details we refer to Section~\ref{sec:long-k-paths}.

The example applications show that the new method is applicable to a wide range of problems and to a very broad class of evolutionary algorithms. We introduce the term \emph{mutation-based EAs} for a class of EAs that first generate initial search points uniformly at random, and afterwards only use common bit-flip mutation operators for variation. This class contains all common EAs that do not use crossover, e.\,g., all $(\mu+\lambda)$~EAs, all $(\mu,\lambda)$~EAs as well as parallel variants such as island models. Basically, the class contains all EAs regardless of the selection operators and population models (see Section~\ref{sec:preliminaries}).

The resulting lower bounds apply to \emph{all} mutation-based EAs. They are not only tight in an asymptotic sense.
They contain best possible leading constants when compared to upper bounds for the best EAs in this class, up to lower-order terms.
The bounds also show how the expected running time depends on the mutation rate. This highlights the impact of this parameter on performance and it allows for conclusions on the optimal mutation rate. Along the way, we also present a refinement of the fitness-level method for proving upper bounds in Section~\ref{sec:refined-upper-bound}.

The results allow to make conclusions about optimal EAs, where optimality is regarded as minimizing the expected number of function evaluations. A summary of the results derived from applying the new method is as follows.
\EAmu denotes a variant of the \EA initialized with a best out of $\mu$ individuals generated uniformly at random.
\begin{itemize}
\item For \LO we get a lower bound for all mutation-based EAs, see Section~\ref{sec:LO}. This bound equals a refined upper bound for the \EAmu. For all $\mu$ we get an exact formula for the expected running time of the \EAmu, including the \EA. Together with the independent work by B{\"o}ttcher, Doerr, and Neumann~\cite{Boettcher2010}, this is the first time that an exact formula for an expected running time of an EA can be determined.
    Following~\cite{Boettcher2010}, the optimal mutation rate can be computed as $p\approx 1.59/n$. The optimal mutation-based EA turns out to be the \EAmu for some value $\mu > 1$.
\item For \OM we also get a lower bound for all mutation-based EAs, see Section~\ref{sec:onemax}. For all reasonable mutation rates the lower bound matches an upper bound for the \EA using the same mutation rate, up to terms of smaller order.
    The optimal mutation rate turns out to be $p=1/n$ (see also Witt~\cite{Witt2011a}). The optimal mutation-based EA is again the \EAmu for a proper $\mu > 1$.
\item The above lower bound on \OM generalizes to the very large class of functions that have a unique optimum, see Section~\ref{sec:unique}. This is based on the structural insight that for all mutation-based EAs finding a single target point for any problem is never easier than optimizing \OM.
\item For long $k$-paths we get upper and lower bounds that match up to smaller order terms, for all reasonable mutation rates, when considering the \EA starting on the first point on the path, see Section~\ref{sec:long-k-paths}. Like for \OM, $p=1/n$ is the optimal mutation rate.
\end{itemize}
In addition to these remarkably powerful results, the method is easy to describe and it has a simple, direct proof. As such, it is well suited for teaching purposes and it shows that precise lower bounds can be obtained without using drift analysis.

\subsection{Previous and Related Work}

There is a long history of results on pseudo-Boolean optimization. We review results on lower bounds and also describe work that preceded, relates to, or has followed from this work~\cite{Sudholt2010a}.

Already Droste, Jansen, and Wegener~\cite{Droste2002} presented a lower bound of $\Omega(n \log n)$ for the \EA on every $n$-bit pseudo-Boolean function with unique global optimum.
The constant factor preceding the $n \log n$-term is $1/2 \cdot (1-e^{-1/2}) \approx 0.196$.
Wegener~\cite{Wegener2002} mentions a lower bound $(1-\varepsilon) \cdot n \ln n -cn$ where $\varepsilon > 0$ is an arbitrarily small constant and the constant $c > 0$ depends on~$\varepsilon$.
Doerr, Fouz, and Witt~\cite{Doerr2010a} presented a lower bound $(1 - o(1)) en \ln n$ for the \EA on \OM.
The last result was extended by Doerr, Johannsen, and Winzen~\cite{Doerr2010}. They proved that the same bound holds for the \EA on every function with a unique global optimum.

Later on, Doerr, Fouz, and Witt~\cite{Doerr2011c} were inspired by the lower bound $en \ln n - 2\log \log n - 16n$ for mutation-based EAs with $p=1/n$ on \OM{} in the preliminary work~\cite{Sudholt2010a}. Their goal was to remove the $2\log \log n$-term in order to arrive at an even more precise bound for \EA. They managed to get a lower bound of $en \ln n - O(n)$ and along the way they introduced two new techniques to the analysis of randomized search heuristics: lower bounds with variable drift and probability-generating functions.

Witt~\cite{Witt2011a} followed up on this work~\cite{Sudholt2010a} and presented lower bounds for the class of mutation-based EAs on linear functions. He proved that the mutation rate $p=1/n$ is an optimal choice for the \EA on linear functions. He also generalized a structural result from~\cite{Sudholt2010a} in the following sense. The original statement is that the expected optimization time of any mutation-based EA with mutation probability~$1/n$ on any function with unique global optimum is at least as large as the expected optimization time of the \EA with mutation probability~$1/n$ on \OM{}. Witt generalized this towards arbitrary mutation probabilities~$0 < p \le 1/2$ and stochastic dominance. We will discuss and apply this result in Section~\ref{sec:unique}.

The LeadingOnes function has been equally popular. Droste, Jansen, and Wegener~\cite{Droste2002} showed that the running time of the \EA on \LO{} is at least $c_1 n^2$ with probability $1-2^{-\Omega(n)}$, for some constant $c_1 > 0$.
B{\"o}ttcher, Doerr, and Neumann presented an exact formula for the expected running time of the \EA on \LO at the same conference~\cite{Boettcher2010}. While the preliminary version of this work~\cite{Sudholt2010a} considered mutation rates of $p=1/n$ only, the authors considered general mutation rates~$p$. Their results were limited to the \EA as opposed to all mutation-based EAs. They showed that the optimal fixed mutation rate for \LO{} is not $p=1/n$, but a slightly higher value of $p \approx 1.59/n$. In addition, they presented a simple adaptive scheme for choosing the mutation rate and showed that this leads to even smaller numbers of function evaluations.

These findings show that the often recommended choice~$p=1/n$ is not always optimal. Another reason why the choice of the mutation probability is far from settled is that even on a seemingly easy class of functions a constant factor in the mutation rate can change a polynomial expected running time into an exponential one~\cite{Doerr2010c}.

Black-box complexity of search algorithms as introduced by Droste, Jansen, and Wegener~\cite{Droste2006} is another method for proving lower bounds. These bounds hold for all algorithms in a black-box setting where only the class of functions to be optimized is known, but the precise instance is hidden from the algorithm. Their results imply that \emph{every} black-box algorithm needs at least $\Omega(n/\!\log n)$ function evaluations to optimize \OM{} and \LO{} (or, to be more precise, straightforward generalizations to function classes).
Recently Lehre and Witt~\cite{Lehre2010} presented a more restricted black-box model. If only unary operators are used (that is, operators taking a single search point as input, such as mutation) and all operators are unbiased with respect to bit values and bit positions, every black-box algorithm needs $\Omega(n \log n)$ function evaluations for every function with a unique global optimum. The constant factor hidden in the $\Omega$ is not specified; it is known to be at most~$1$.
This line of research has been extended subsequently to more general conditions for unbiasedness~\cite{Rowe2011}, higher-arity operators~\cite{Doerr2011a} and more restricted black-box models~\cite{Doerr2011b}.

Investigating conditions for the optimality of search algorithms, Borisovsky and Eremeev~\cite{Borisovsky2008} introduced the concept of dominance for the performance comparison of evolutionary algorithms. For sorting problems and the function \OM{} they give sufficient conditions on when the \EA is faster than evolutionary algorithms with other reproduction operators.

Recently, drift analysis has received a lot of attention~\cite{Oliveto2011,He2004,Doerr2010b,Lehre2010a,Doerr2011c}. Assume a non-negative potential function such that the optimum is reached only if the potential is 0. If the expected decrease (``drift'') of the potential in one generation is bounded from below, an upper bound on the expected optimization time follows. Conversely, an upper bound on the drift implies lower bounds on the expected optimization time.
If there is a drift pointing away from the optimum on a part of the potential's domain then exponential lower bounds can be shown~\cite{Lehre2010a,Oliveto2011}.

\section{Preliminaries}
\label{sec:preliminaries}

The presentation in this work is for maximization problems, but it can be easily adapted for minimization. For the usage of asymptotic notation we refer to text books such as Cormen, Leiserson, Rivest, and Stein~\cite{Cormen2001}.

\subsection{Mutation-Based Evolutionary Algorithms}

The technique for proving lower bounds will be applied to a very general class of evolutionary algorithms. It contains all EAs that generate $\mu \in \N$ individuals uniformly at random and afterwards only use standard mutations to generate offspring (see Algorithm~\ref{alg:construct}).

Mutation is done by flipping each bit independently with some given mutation probability~$0 < p \le 1/2$. The most extreme value $p=1/2$ corresponds to choosing an offspring uniformly at random, i.\,e., random search. We do not consider mutation rates $p > 1/2$ as this choice would favor offspring far away from the parent, thus contradicting the purpose of mutation.

\begin{algorithm}[h]
    \caption{Scheme of a mutation-based EA}
    \algsetup{indent=1.5em}
    \begin{algorithmic}[1]
        \STATE create $\mu$ individuals $x_1, \dots, x_\mu \in \{0, 1\}^n$ uniformly at random.
        \STATE let $t := \mu$.
        \LOOP
            \STATE select a parent $x \in \{x_1, \dots, x_t\}$ according to $t$ and $f(x_1), \dots, f(x_t)$.
            \STATE create $x_{t+1}$ by copying $x$ and flipping each bit independently with probability~$p$.
            \STATE let $t := t +1$.
        \ENDLOOP
    \end{algorithmic}
    \label{alg:construct}
\end{algorithm}

The optimization time is given by the time index~$t$ that counts the number of function evaluations. It is defined as the time index~$t$ when a global optimum is found first. In a more general sense, we can also regard the (expected) hitting time of a set of desirable search points. For some lower bounds and for small values of~$\mu$, we pessimistically disregard the effort for creating the $\mu$ search points.

The parent selection mechanism is very general as any mechanism based on the time index~$t$ and fitness values of previous search points may be used.
Any mechanism for managing a population fits in this framework. This includes parent populations and offspring populations with arbitrary selection strategies and even parallel evolutionary algorithms with spatial structures and migration such as the island model~\cite{Lassig2011}.

\label{def:EA}
The \EA is a well-known special case with population size~$\mu = 1$. It maintains a single individual $x$ and in every iteration it creates $x'$ by mutating~$x$ and replacing~$x$ by~$x'$ if $f(x') \ge f(x)$. We denote by \EAmu a generalization of the \EA that is initialized with a best individual out of $\mu$ individuals which are generated uniformly at random.

Before introducing the new lower-bound method we elaborate on the range of sensible values for the mutation rate~$p$. The expected number of flipping bits equals $pn$. This is 1 for the standard choice~$p=1/n$. If $p \ll 1/n$ then the expected number of flipping bits is close to~0. The expected time until mutation creates any offspring that is different from its parent is then at most~$1/(pn)$ as $pn$ is an upper bound on the probability that any bit flips. This means that $p$ must be at least an inverse polynomial to allow for polynomial expected running times (unless the initialization finds a global optimum with high probability).

If the problem only contains a single optimum that has to be hit, $p$ cannot be too large. If $p \le 1/2$ then the best probability for hitting the optimum from a non-optimal parent is obtained when the parent has Hamming distance~1 to the optimum. Then the probability is~$p(1-p)^{n-1} \le (1-p)^n \le e^{-pn}$ and the expected waiting time until this happens is~$e^{pn}$.
We summarize these findings in the following theorem, showing that unreasonable parameter settings lead to unreasonable running times. Note that the optimum is not found during initialization with population size~$\mu$ with probability at least $1-\mu \cdot 2^{-n}$.
\begin{theorem}
\label{the:pointless-mutation-rates}
Let $f$ be a function with a unique global optimum. The expected optimization time of every mutation-based EA on~$f$ with mutation probability $0 < p \le 1/2$ is at least $(1-\mu \cdot 2^{-n}) \cdot 1/(pn)$ and at least $(1-\mu \cdot 2^{-n}) \cdot e^{pn}$.

In particular, for every~$\mu$ the expected optimization time is superpolynomial if $p \le n^{-\omega(1)}$ or $p = \omega(\log n)/n$ and exponential (i.\,e. $2^{n^{\varepsilon}}$ for some constant $\varepsilon > 0$) if $p \le 2^{-n^{\Omega(1)}}$ or $p = n^{\Omega(1)-1}$.
\end{theorem}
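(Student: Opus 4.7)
The plan is to formalise the two informal bounds already sketched just before the theorem: a bound of $1/(pn)$ coming from the rarity of any bit flipping when $p$ is tiny, and a bound of $e^{pn}$ coming from the rarity of hitting the unique optimum in a single mutation when $pn$ is large. Each will then be scaled by the probability $1-\mu\cdot 2^{-n}$ that initialisation itself did not already supply the optimum, which follows from a union bound over the $\mu$ uniformly random initial individuals.

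Let $T$ be the optimisation time and $A$ the event that none of the initial individuals equals the unique global optimum, so $\Prob{A}\ge 1-\mu\cdot 2^{-n}$ and $\E{T}\ge \Prob{A}\cdot\E{T\mid A}$; it therefore suffices to prove the two conditional lower bounds $\E{T\mid A}\ge 1/(pn)$ and $\E{T\mid A}\ge e^{pn}$ separately. The structural observation that makes the argument uniform across \emph{all} mutation-based EAs is this: conditional on $T>t$, every individual created so far is non-optimal, and in particular the parent selected in iteration $t+1$ is non-optimal. This holds irrespective of the selection mechanism, the population model or any island-model migration policy, so from this point on only the subsequent bit-flip mutation step matters.

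From such a non-optimal parent, producing the unique optimum requires flipping at least one bit, which by Bernoulli's inequality has probability $1-(1-p)^n\le pn$. Alternatively, if the parent has Hamming distance $d\ge 1$ to the optimum, the probability of producing the optimum equals $p^d(1-p)^{n-d}$, and for $p\le 1/2$ this expression is non-increasing in $d$ and hence bounded by its value at $d=1$, namely $p(1-p)^{n-1}\le (1-p)^n\le e^{-pn}$. Writing $q$ for either of these per-iteration success-probability bounds, the standard waiting-time estimate $\Prob{T>\mu+t\mid A}\ge (1-q)^t$ for a sequence of trials each succeeding with probability at most $q$ gives $\E{T\mid A}\ge 1/q$; in the degenerate regime $q>1$ the claimed bound is trivial since $\E{T\mid A}\ge 1$.

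The four asymptotic consequences then follow by direct substitution: $p\le n^{-\omega(1)}$ forces $1/(pn)=n^{\omega(1)-1}$ to be superpolynomial, $p=\omega(\log n)/n$ forces $e^{pn}=n^{\omega(1)}$ to be superpolynomial, and the exponential regimes $p\le 2^{-n^{\Omega(1)}}$ and $p=n^{\Omega(1)-1}$ analogously make $1/(pn)$, respectively $e^{pn}$, grow like $2^{n^{\varepsilon}}$ for some constant $\varepsilon>0$. The prefactor $1-\mu\cdot 2^{-n}$ is bounded away from $0$ whenever $\mu=2^{o(n)}$ and hence does not affect the asymptotics. There is no real technical obstacle in the proof; the only step worth emphasising is the parent-is-non-optimal observation, which is precisely what lets a single argument cover every selection operator and every population model at once.
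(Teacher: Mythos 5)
Your proposal is correct and follows essentially the same route as the paper, which proves this theorem only via the informal discussion immediately preceding it: the bound $1-(1-p)^n\le pn$ on the probability that mutation changes the parent at all, the observation that for $p\le 1/2$ the per-mutation probability of hitting the unique optimum is maximized at Hamming distance $1$ and hence at most $p(1-p)^{n-1}\le e^{-pn}$, and the union bound $1-\mu\cdot 2^{-n}$ for initialization. Your write-up merely makes explicit the waiting-time and conditioning steps that the paper leaves implicit.
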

The result can be extended towards functions with multiple global optima, but the above result suffices for our purposes.

\subsection{The Fitness-Level Method for Proving Upper Bounds}

We review the \emph{fitness-level method}, also known as the \emph{method of $f$-based partitions}~\cite{Wegener2002}.
It yields upper bounds for EAs whose best fitness value in the population never decreases. We call these algorithms \emph{elitist EAs}.

The idea is as follows. We partition the search space into sets that are strictly ordered with respect to the fitness of the contained individuals. Every search point in a higher fitness-level set has a strictly higher fitness than any search point in a lower fitness-level set.
We say that an elitist algorithm is on a particular level if the best search point created to far is in the respective fitness-level set. Due to the elitism, the algorithm can only increase its current fitness level.
If we have a lower bound on the probability of increasing the current level, the reciprocal is an upper bound on the expected time until a particular fitness level is left. As each level is left for good, the sum of all these times---starting from the initial level---yields an upper bound on the expected optimization time.

\begin{theorem}[Fitness-level method for proving upper bounds]
\label{the:fitness-levels}
For two sets $A, B \subseteq \{0, 1\}^n$ and fitness function~$f$ let $A <_f B$ if $f(a) < f(b)$ for all $a \in A$ and all $b \in B$.
Consider a partition of the search space into non-empty sets $A_1, \dots, A_m$ such that
$
A_1 <_f A_2 <_f \dots <_f A_m
$
and $A_m$ only contains global optima.
For a mutation-based EA $\algo$ we say that $\algo$ is in $A_i$ or on level $i$ if the best individual created so far is in~$A_i$. Consider some elitist EA $\algo$ and let $s_i$ be a lower bound on the probability of creating a new offspring in $A_{i+1} \cup \dots \cup A_m$, provided $\algo$ is in~$A_i$.
Then the expected optimization time of $\algo$ on $f$ (without the cost of initialization) is bounded by
\begin{equation}
\label{eq:upper-bound-with-fitness-levels}
\sum_{i=1}^{m-1} \Prob{\text{$\algo$ starts in $A_i$}} \sum_{j=i}^{m-1} \frac{1}{s_i}
 \;\le\; \sum_{i=1}^{m-1} \frac{1}{s_i}.
\end{equation}
\end{theorem}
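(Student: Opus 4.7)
The plan is to decompose the total running time according to the fitness level occupied at each step, using elitism to ensure that once a level is left it is never revisited. For $i=1,\dots,m-1$, let $T_i$ denote the number of iterations in which the current best individual lies in $A_i$, with $T_i = 0$ if $\algo$ never enters~$A_i$. Because $\algo$ is elitist, the index of the current fitness level is non-decreasing over time, and the levels actually visited form a strictly increasing subsequence of $\{1,\dots,m\}$. Consequently the total optimization time starting from an initial level~$k$ equals $\sum_{j=k}^{m-1} T_j$, and by linearity of expectation and the law of total expectation it suffices to bound $\E{T_j}$ by $1/s_j$ for every $j \ge k$.

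To bound $\E{T_j}$, fix~$j$ and condition on $\algo$ ever entering level~$j$; otherwise $T_j = 0$ deterministically. While the algorithm's current best is in~$A_j$, each iteration produces an offspring in $A_{j+1} \cup \dots \cup A_m$ with conditional probability at least $s_j$ by assumption, and by elitism any such offspring is accepted, thereby incrementing the fitness level. Hence the number of iterations spent on level~$j$ is stochastically dominated by a geometric random variable with success parameter $s_j$, so $\E{T_j \mid \text{level } j \text{ is reached}} \le 1/s_j$, and a fortiori $\E{T_j} \le 1/s_j$ without the conditioning.

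It remains to assemble the pieces. Conditional on $\algo$ starting in $A_i$, the expected optimization time is at most $\sum_{j=i}^{m-1} 1/s_j$; multiplying by $\Prob{\algo \text{ starts in } A_i}$ and summing over $i$ yields the first expression in~(\ref{eq:upper-bound-with-fitness-levels}), and dropping the probability weights (which sum to at most~$1$) together with reordering the double sum gives the final bound $\sum_{i=1}^{m-1} 1/s_i$.

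The step requiring the most care is the stochastic-domination argument for $T_j$: one has to verify that the lower bound $s_j$ applies at \emph{every} iteration in which the current best lies in~$A_j$, irrespective of the full history of the population and of any randomness used by parent selection. This is legitimate precisely because the hypothesis on $s_j$ is a uniform per-iteration lower bound that depends only on the current best being in~$A_j$, not on how the population evolved beforehand. The remaining calculations are purely bookkeeping.
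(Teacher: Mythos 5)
Your proof is correct and follows exactly the argument the paper sketches informally before stating the theorem: decompose the optimization time into the times $T_j$ spent on each level, use elitism to note each level is left for good, bound $\E{T_j}$ by $1/s_j$ via geometric domination, and sum starting from the initial level (with the second bound obtained by pessimistically assuming a start in $A_1$). The paper gives no formal proof of this classical result, and your write-up—including the explicit caveat that $s_j$ must be a history-independent per-iteration lower bound—is a faithful formalization of the same approach.
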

The second bound results from pessimistically assuming that the algorithm is always initialized in $A_1$.

Let us illustrate the method with two examples for the \EA with mutation probability~$p=1/n$.
We define the \emph{canonical partition} as the partition in which $A_i$ contains exactly all search points with fitness~$i$.
For \LO{} the method applied to the canonical partition yields an upper bound of $\sum_{i=0}^{n-1} en = en^2$ since the probability of finding an improvement is lower bounded by the probability of flipping the first bit with value~$0$. This probability is at least $1/n \cdot (1-1/n)^{n-1} \ge 1/(en)$. For \OM{} we get an upper bound of $\sum_{i=0}^{n-1} en/(n-i) = en \sum_{i=1}^{n} 1/i \le en\ln n + O(n)$ for the \EA since on level~$i$ there are $n-i$ 1-bit mutations that flip a 0-bit to~1 and hence improve the fitness.

In order to make an effort towards a unified theory of search heuristics, we also present the following extension. After finding an improvement, stochastic search algorithms often need some time to adapt their underlying probabilistic models. For instance, the algorithm \muea investigated by Witt~\cite{Witt2006} needs some time until the population contains ``enough'' individuals on a new fitness level, so that an improvement can be found with a good probability.
The ant colony optimization algorithms investigated in Gutjahr and Sebastiani~\cite{Gutjahr2008a} as well as in Neumann, Sudholt, and Witt~\cite{Neumann2009} need some time to adapt their pheromones towards a new best solution. A similar argument holds for velocities in a binary particle swarm optimization algorithm investigated by Sudholt and Witt~\cite{Sudholt2008c}.

In all these studies, it is pessimistically disregarded that an improvement might be found while waiting for the algorithm to adapt.
Fix a notion of adaptation and let $T_i$ be the (random) time until an algorithm has adapted, after a new fitness level $i$ has been found.
Redefining $p_i$ to the worst-case probability of finding an improvement in one iteration after adaptation, the expected optimization time can be bounded by
\[
\sum_{i=1}^{m-1} \Prob{\text{$\algo$ starts in $A_i$}} \sum_{j=i}^{m-1} \left(\E{T_j} + \frac{1}{s_i}\right)
 \;\le\; \sum_{i=1}^m \left(\E{T_i} + \frac{1}{s_i}\right).
\]

In addition, Lehre~\cite{Lehre2011} recently presented an extension towards non-elitist populations, with applications to comma strategies and various selection operators. Roughly speaking, he proves that if
\begin{itemize}
\item the probability of generating an offspring on a worse fitness level is not too large,
\item selection has a strong enough tendency to pick high-fitness individuals, and
\item the population is large enough
\end{itemize}
then an upper bound similar to the one in Theorem~\ref{the:fitness-levels} applies. The running time bound is asymptotic, not revealing a precise constant factor, though. But his work shows that the method is applicable in a much more general context.

\section{Lower Bounds with Fitness Levels}
\label{sec:method}

We now show that fitness-level arguments can also be applied to show tight lower bounds on the running time. Researchers have attempted to make this step earlier.
The best lower bounds with fitness-level arguments known so far were presented by Wegener in~\cite{Wegener2002}.
\begin{lemma}[Wegener~\cite{Wegener2002}]
\label{lem:crude-lower-bound-method}
Let $A_1 <_f \dots <_f A_m$ be a fitness-level partition for some fitness function~$f$. Let $u_i$ be an upper bound on the probability of an EA $\algo$ creating a new offspring in $A_{i+1} \cup \dots \cup A_m$, provided $\algo$ is in $A_i$ (where ``$\algo$ is in $A_i$'' is defined as in Theorem~\ref{the:fitness-levels}). Then the expected optimization time of $\algo$ on~$f$ is at least
\[
\sum_{i=1}^{m-1} \Prob{\text{$\algo$ starts in $A_i$}} \frac{1}{u_i}.
\]
\end{lemma}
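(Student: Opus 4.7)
The plan is to condition on which set $A_i$ contains the algorithm's initial best individual, and then, for each $i < m$, lower-bound $E[T \mid \algo \text{ starts in } A_i]$ by the expected time the algorithm spends until it first creates an offspring in a strictly higher fitness level. Writing $E[T] = \sum_{i=1}^{m} \Prob{\algo \text{ starts in } A_i} \cdot E[T \mid \algo \text{ starts in } A_i]$ and showing $E[T \mid \algo \text{ starts in } A_i] \ge 1/u_i$ for every $i < m$ immediately yields the claimed inequality, since the $i = m$ term only adds a non-negative quantity to the lower bound.

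The central step is therefore the single-level waiting-time inequality. Let $L_t$ denote the fitness level (in the sense of Theorem~\ref{the:fitness-levels}) of the best search point created after $t$ evaluations; since ``best so far'' is monotone, $L_t$ is non-decreasing in~$t$, so leaving level $i$ happens exactly once, namely at the stopping time $\tau_i := \min\{t : L_t > i\}$. Conditional on starting in $A_i$, we have $T \ge \tau_i$ because the algorithm must leave $A_i$ before reaching any optimum in $A_m$. Thus it suffices to show $E[\tau_i] \ge 1/u_i$ under the conditioning that $\algo$ starts in~$A_i$.

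To prove this, I would use the standard geometric-domination argument. At every step in which $\algo$ is still on level $i$, the hypothesis of the lemma says that the conditional probability of the next offspring landing in $A_{i+1} \cup \dots \cup A_m$ is at most $u_i$, regardless of the past history (which is an arbitrary history that keeps $\algo$ in $A_i$). An induction on $k$ then gives $\Prob{\tau_i > k} \ge (1-u_i)^k$ for every $k \ge 0$, and summing the tail probabilities yields $E[\tau_i] \ge \sum_{k \ge 0} (1-u_i)^k = 1/u_i$, as desired.

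The main obstacle is the correct handling of the conditioning. The assumption of the lemma only controls the one-step transition probability when the algorithm is \emph{currently} in $A_i$, and a priori the distribution over the state of the algorithm (population, age, random choices so far) conditional on ``$\tau_i > k$'' could be complicated; one must observe that every such history keeps $\algo$ in $A_i$ for the first $k$ steps, so the uniform bound $u_i$ can be applied step-by-step without knowing the precise distribution over internal states. Once this point is cleanly articulated, the rest of the argument is a short unconditional computation, which is why the lemma admits such an elementary proof.
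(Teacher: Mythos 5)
Your argument is correct. Note, however, that the paper does not actually prove Lemma~\ref{lem:crude-lower-bound-method} at all---it is quoted from Wegener~\cite{Wegener2002}, and the nearest in-paper analogue is the proof of Theorem~\ref{the:lower-bound-method}, which recovers the lemma as the special case $\chi=0$. That proof takes a slightly different route from yours: it defines $E_i$ as the \emph{minimum} expected remaining optimization time over all histories contained in $A_1\cup\dots\cup A_i$, sets up the one-step recurrence $E_i \ge 1 + (1-u_i)E_i$ (pessimistically discarding the remaining time after level~$i$ is left), and solves it to get $E_i \ge 1/u_i$, before applying the law of total expectation over the initial level. You instead introduce the stopping time $\tau_i$, prove geometric domination $\Prob{\tau_i > k} \ge (1-u_i)^k$ by induction over steps, and sum tail probabilities. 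The two arguments are equivalent in substance---both reduce to ``the expected waiting time for an event of per-trial probability at most $u_i$ is at least $1/u_i$''---but yours is more elementary and self-contained, and you handle the one genuinely delicate point (that the bound $u_i$ must be applied uniformly over all internal states consistent with the algorithm still being on level~$i$, without knowing their distribution) explicitly, which is exactly the issue the paper's min-over-histories definition of $E_i$ is designed to absorb. The recurrence formulation has the advantage that it extends directly to the stronger Theorem~\ref{the:lower-bound-method}, where the remaining time after leaving level~$i$ is not discarded; your tail-sum argument would not generalize as cleanly there.
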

The resulting lower bounds are very weak since we only look at the time it takes to leave the initial fitness level and then pessimistically assume that the optimum is found.

For instance, for the \EA with mutation probability~$p=1/n$ on \OM{} Lemma~\ref{lem:crude-lower-bound-method} yields the lower bound
\[
\sum_{i=0}^{n-1} \binom{n}{i} \cdot 2^{-n} \cdot \frac{n-i}{n} \approx 2
\]
as the initialization is very likely to create a search point with around $n/2$ 1-bits.
For the \EA on \LO{} we get the lower bound
\[
\sum_{i=0}^{n-1} 2^{-i-1} \cdot \frac{1}{1/n} = (1-2^{-n}) \cdot n,
\]
which is again very crude; the real expected running time is of order $\Theta(n^2)$.

Much better lower bounds can be achieved by making an additional assumption about the transition probabilities between fitness levels. The idea is as follows. If we know that a search algorithm typically does not skip too many fitness levels, it is likely that many fitness levels need to be traversed. This yields a lower bound that is proportional to the upper bound from Theorem~\ref{the:fitness-levels}.

In the following theorem $\gamma_{i, j}$ can be regarded as the conditional probability of jumping from level~$i$ to level~$j$, given that the algorithm leaves level~$i$.
\begin{theorem}
\label{the:lower-bound-method}
Consider an algorithm~\algo and a partition of the search space into non-empty sets
$A_1, \dots, A_m$.
For a mutation-based EA $\algo$ we again say that $\algo$ is in $A_i$ or on level $i$ if the best individual created so far is in~$A_i$. Let the probability of \algo traversing from level $i$ to level $j$ in one step be at most $u_i \cdot \gamma_{i,j}$ and $\sum_{j=i+1}^{m} \gamma_{i, j} = 1$.
Assume that for all $j > i$ and some $0 \le \chi \le 1$ it holds
\begin{equation}
\label{eq:gamma-condition}
\gamma_{i, j} \ge \chi \sum_{k=j}^{m} \gamma_{i, k}.
\end{equation}
Then the expected hitting time of $A_m$ is at least
\begin{align}
&
\sum_{i=1}^{m-1} \Prob{\text{$\algo$ starts in $A_i$}} \cdot \left(\frac{1}{u_i} + \chi \sum_{j=i+1}^{m-1} \frac{1}{u_j}\right)\label{eq:complex-lower-bound}\\
\ge\;& \sum_{i=1}^{m-1} \Prob{\text{$\algo$ starts in $A_i$}} \cdot \chi \sum_{j=i}^{m-1} \frac{1}{u_j}.\label{eq:simple-lower-bound}
\end{align}
\end{theorem}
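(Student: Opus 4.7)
My plan is to decompose the hitting time of $A_m$ into the holding times $\algo$ spends at each successively visited fitness level, and then separately lower-bound the expected holding time per level and the probability that each intermediate level is actually visited. Condition on $\algo$ starting in $A_i$ with $i < m$, and let $L_0 = i < L_1 < L_2 < \cdots < L_K = m$ denote the distinct levels the best-so-far individual successively occupies (this sequence is monotone by definition of ``$\algo$ is on level $k$''). Because $\sum_{j > k} u_k \gamma_{k, j} = u_k$, the one-step probability of leaving level $k$ is at most $u_k$, uniformly over the past history, so the holding time at level $k$ conditional on $k$ being reached stochastically dominates a geometric random variable with success probability $u_k$. This yields a lower bound of $1/u_k$ on the expected holding time at $k$, independent of how $\algo$ arrived.

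The core combinatorial step is to prove that, for every intermediate level $j$ with $i < j \le m-1$, the event $\{\exists\, t : L_t = j\}$ has probability at least $\chi$. I would introduce the stopping time $\tau = \min\{t \ge 1 : L_t \ge j\}$; level $j$ is visited iff $L_\tau = j$, since otherwise $\algo$ has permanently skipped past $j$. On the event $\{L_{\tau-1} = k\}$ for any $k$ with $i \le k < j$, the conditional probability that the decisive jump lands exactly on $j$, given it lands somewhere in $\{j, j+1, \dots, m\}$, equals
\[
\frac{\gamma_{k, j}}{\sum_{k' \ge j} \gamma_{k, k'}} \;\ge\; \chi,
\]
by applying hypothesis~(\ref{eq:gamma-condition}) at source level $k$. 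Averaging over the value of $L_{\tau-1}$ yields $\Prob{L_\tau = j} \ge \chi$, so $v_j := \Prob{\text{level $j$ visited}} \ge \chi$ for $i < j \le m-1$, while $v_i = 1$ trivially.

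Combining the two ingredients, the expected hitting time of $A_m$ starting in $A_i$ is at least
\[
\sum_{j=i}^{m-1} v_j \cdot \frac{1}{u_j} \;\ge\; \frac{1}{u_i} \;+\; \chi \sum_{j=i+1}^{m-1} \frac{1}{u_j}.
\]
Averaging over the distribution of the initial level $A_i$ produces~(\ref{eq:complex-lower-bound}), and~(\ref{eq:simple-lower-bound}) follows by using $\chi \le 1$ to weaken the isolated $1/u_i$ term to $\chi/u_i$ and re-absorbing it into the sum.

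The main obstacle I anticipate is the visit-probability step, because condition~(\ref{eq:gamma-condition}) must be invoked at \emph{every} possible source level $k < j$, not just at the initial level $i$; the stopping-time decomposition is precisely the device that lets the hypothesis be applied uniformly at whichever level the algorithm happens to be sitting at when it first crosses~$j$. A secondary subtlety is that $u_k$ is only an upper bound on the one-step leaving probability and this bound must hold uniformly in the past history for the geometric domination of the holding time to go through; for mutation-based EAs this is immediate, since the one-step transition probability out of a level depends only on the current best-so-far individual and on the mutation operator.
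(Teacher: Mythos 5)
Your overall architecture---expected holding time at each level, weighted by the probability that the level is visited at all---is genuinely different from the paper's proof, which instead runs a backward induction on $E_i$, the minimum expected remaining time over all histories confined to $A_1 \cup \dots \cup A_i$, expands $E_i$ by one step, and invokes condition~\eqref{eq:gamma-condition} only after an interchange of summation. Unfortunately, your core combinatorial step does not follow from the hypotheses. You assert that, on $\{L_{\tau-1}=k\}$, the probability that the first jump past level $j-1$ lands exactly on $j$ \emph{equals} $\gamma_{k,j}/\sum_{k'\ge j}\gamma_{k,k'}$. Writing $p_{k,j'}$ for the true one-step transition probabilities, that conditional probability is $p_{k,j}/\sum_{k'\ge j}p_{k,k'}$, and the theorem assumes only the \emph{upper} bounds $p_{k,j'}\le u_k\gamma_{k,j'}$ (the paper stresses after the proof that exact knowledge of the transition probabilities is deliberately not required). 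An upper bound on the numerator together with an upper bound on the denominator yields no lower bound on the ratio: nothing prevents $p_{k,j}=0$ while $p_{k,j'}>0$ for some $j'>j$, in which case level $j$ is visited with probability $0$, not at least $\chi$. So the claim $v_j\ge\chi$ is false in general, and with it the inequality $\sum_{j=i}^{m-1}v_j/u_j\ge 1/u_i+\chi\sum_{j=i+1}^{m-1}1/u_j$. (The holding-time half of your argument, $\E{H_j\mid V_j}\ge 1/u_j$, is fine.)

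It is instructive to see how the paper's route avoids exactly this trap. In the one-step expansion $E_i\ge 1+\sum_{k>i} p_{i,k}E_k+\bigl(1-\sum_{k>i} p_{i,k}\bigr)E_i = 1+E_i+\sum_{k>i}p_{i,k}(E_k-E_i)$, the true probabilities $p_{i,k}$ multiply the non-positive quantities $E_k-E_i$ (by the monotonicity $E_1\ge E_2\ge\dots\ge E_m=0$), so replacing them by the upper bounds $u_i\gamma_{i,k}$ only decreases the right-hand side; no lower bound on any individual transition probability is ever needed. Condition~\eqref{eq:gamma-condition} then enters purely algebraically, via the identity~\eqref{eq:rearranging-sums}, rather than as a statement about where a jump actually lands. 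To rescue your decomposition you would either have to assume the transitions are exactly $u_i\gamma_{i,j}$ (true in the \LO{} application, but not in general), or restructure the visit-probability argument so that, like the paper's induction, it exploits the monotonicity of the remaining times to absorb the one-sidedness of the hypotheses.
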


The variable~$\chi$ was coined \emph{viscosity} by Jon Rowe~\cite{RowePersonal}. Similar to the viscosity of a liquid, it resembles the viscosity of the fitness-level partition on a scale between 0 and 1. A low viscosity means that we can have situations where a search algorithm  skips many fitness levels and only few levels are actually encountered. A high viscosity means that a search algorithm typically encounters many fitness levels as large jumps to higher fitness levels are unlikely.

For $\chi > 0$ the reciprocal, $1/\chi$, is an upper bound on the expected number of fitness levels gained during an improvement. To see this, note that condition~\eqref{eq:gamma-condition} implies
$
\sum_{k=j}^m \gamma_{i, k} \le (1-\chi) \cdot \sum_{k=j-1}^m \gamma_{i, k}
$
for all $j > i$. This implies $\sum_{k=j}^m \gamma_{i, k} \le (1-\chi)^{j-i-1} \sum_{k=i+1}^{m} \gamma_{i, k} = (1-\chi)^{j-i-1}$.
Using $\E{X} = \sum_{x=0}^{\infty} \Prob{X \ge x}$ if $X$ takes only non-negative integer values, the expected progress in terms of fitness levels, assuming current level~$i$, is at most
\[
\sum_{j=i+1}^{m} \sum_{k=j}^m \gamma_{i, k} \le \sum_{j=i+1}^m (1-\chi)^{j-i-1} = \sum_{j=0}^{m-i-1} (1-\chi)^j = \frac{1-(1-\chi)^{m-i}}{\chi}.
\]
This yields $1/\chi$ as an upper bound that is independent from the current level.

In a case of extreme viscosity, i.\,e., $\chi =1$ condition~\eqref{eq:gamma-condition} can only hold if $\gamma_{i, i+1} = 1$ and $\gamma_{i, k} = 0$ for all $1 \le i \le m-1$ and all $2 \le k \le m-i$. This means that the algorithm deterministically reaches the next fitness level when an improvement is made. It passes through all fitness levels between the initial one and the optimal one. These are the strongest possible conditions on the transition probabilities.

Contrarily, if $\chi = 0$ we have no viscosity at all. Condition~\eqref{eq:gamma-condition} is trivially satisfied for all choices of the $\gamma$-variables. This is the weakest possible setting and it leaves open the possibility that the optimum is reached by a direct jump, when the current fitness level is left. In fact, the resulting bound~\eqref{eq:complex-lower-bound} equals the one from Lemma~\ref{lem:crude-lower-bound-method}.

The most interesting settings are those where the viscosity is between 0 and 1.
For instance, if $\chi = 1/2$ then condition~\eqref{eq:gamma-condition} is roughly equivalent to the $\gamma$-variables decreasing exponentially with base~2: $\gamma_{i, i+k} \le 2^{-k}$. Larger viscosities require a steeper decay, while smaller viscosities allow for a less steep decay.
For selected fitness levels on \OM{} it turns out that the transition probabilities decay rapidly, allowing to choose $\chi$ as high as $1-o(1)$. This means that only a vanishing fraction of fitness levels is skipped---in expectation---and it leads to a very tight lower bound.

Before we get to the proof of Theorem~\ref{the:lower-bound-method}, we state the following conclusions about how tight the upper and lower bounds with fitness levels can be.
\begin{corollary}
Let $A_1, \dots, A_m, \chi, s_i, u_i$, and $\gamma_{i, j}$ for $1 \le i, j \le m$ be defined as in Theorems~\ref{the:fitness-levels} and \ref{the:lower-bound-method}. Let all conditions in these theorems hold.
\begin{enumerate}
\item If $s_i = u_i$ for all~$i$ then the lower bound~\eqref{eq:simple-lower-bound} matches the upper bound~\eqref{eq:upper-bound-with-fitness-levels} up to a factor of $\chi$.
\item If $\chi > 0$ is a constant and there is a constant~$c \ge 1$ such that $u_i \le c \cdot u_i$ for $1 \le i \le m-1$ then \eqref{eq:simple-lower-bound} and \eqref{eq:upper-bound-with-fitness-levels} are asymptotically equal.
\item If $\chi = 1-o(1)$ and $u_i \le (1+o(1)) \cdot s_i$ for $1 \le i \le m-1$ then \eqref{eq:simple-lower-bound} and \eqref{eq:upper-bound-with-fitness-levels} are equal up to lower-order terms.
\end{enumerate}
\end{corollary}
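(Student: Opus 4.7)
The plan is to prove all three claims by a direct termwise comparison of the two displayed expressions: the upper bound~\eqref{eq:upper-bound-with-fitness-levels}, which I abbreviate $U$, and the simple lower bound~\eqref{eq:simple-lower-bound}, which I abbreviate $L$. The outer sums in both expressions are identical -- same index range $i = 1, \dots, m-1$ and the same probability weights $\Prob{\text{$\algo$ starts in $A_i$}}$ -- so the comparison reduces to relating the inner sums $\sum_{j=i}^{m-1} 1/s_j$ and $\chi \sum_{j=i}^{m-1} 1/u_j$. Before starting I would record one structural inequality: $s_j \le u_j$, because by definition $s_j$ is a lower and $u_j$ an upper bound on the \emph{same} quantity, namely the probability that $\algo$ leaves level $j$ for a strictly higher level. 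This gives $L \le U$ for free and will be used in the reverse direction in parts~2 and~3.

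For part~1, under $s_j = u_j$ for all $j$ the ratio of~\eqref{eq:simple-lower-bound} to~\eqref{eq:upper-bound-with-fitness-levels} collapses to exactly $\chi$ term by term, which is the stated conclusion.

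For part~2 I would read the hypothesis as the obvious typographic correction $u_j \le c \cdot s_j$ (as printed, $u_i \le c \cdot u_i$ is trivially true for $c \ge 1$ and carries no information). Combined with $s_j \le u_j$ this yields $s_j \le u_j \le c\, s_j$, hence $1/u_j \ge 1/(c\, s_j)$; summing over $j$ and multiplying by the positive constant $\chi$ gives $L \ge (\chi/c)\, U$, and together with $L \le U$ this establishes $L = \Theta(U)$. For part~3, with $\chi = 1 - o(1)$ and $u_j \le (1+o(1))\, s_j$, the same manipulation gives $\chi/u_j \ge (1-o(1))/((1+o(1))\, s_j) = (1-o(1))/s_j$ uniformly in $j$. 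Summing and reweighting by the initialization probabilities yields $L \ge (1-o(1))\, U$, and combined with $L \le U$ this means $U - L = o(U)$, which is the intended meaning of ``equal up to lower-order terms''.

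The main obstacle here is not technical but interpretive: recognizing and patching the likely typo in part~2, and fixing a precise meaning for ``lower-order terms'' in part~3 (I would use the relative additive interpretation $U - L = o(U)$). Once these are pinned down and one exploits the free inequality $L \le U$ coming from $s_j \le u_j$, each part collapses to a one-line comparison of the inner sums.
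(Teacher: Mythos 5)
Your proposal is correct and coincides with the argument the paper leaves implicit (the corollary is stated without proof as an immediate consequence of Theorems~\ref{the:fitness-levels} and~\ref{the:lower-bound-method}): a termwise comparison of the inner sums, using $s_j \le u_j$ (both bound the same level-leaving probability) for the direction $L \le U$, and the respective hypotheses for the reverse direction. Your reading of the typo in part~2 as $u_i \le c \cdot s_i$ is also the intended one.
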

A fitness-level partition that obeys the second case was called \emph{(asymptotically) tight $f$-based partition} in~\cite{Lassig2011}.

We proceed by proving Theorem~\ref{the:lower-bound-method}. Afterwards, we give advice on how to apply it, including example applications in the following sections.
\begin{proof}[Proof of Theorem~\ref{the:lower-bound-method}]
The second bound immediately follows from the first one since $0 \le \chi \le 1$.
Let $E_i$ be the minimum expected remaining optimization time, where the minimum is taken for all possible histories $x_1, \dots, x_t$ of previous search points with $x_1, \dots, x_t \in A_1 \cup \dots \cup A_i$. By definition $E_1 \ge E_2 \ge \dots \ge E_m = 0$ as the conditions on the histories are subsequently relaxed.
By the law of total expectation the unconditional expected optimization time is at least
$
\sum_{i=1}^{m-1} \Prob{\text{$\algo$ starts in $A_i$}} \cdot E_i
$,
hence we only need to bound $E_i$.

After one step, for each $i < k$ the algorithm is in $E_k$ with probability at most $u_i \gamma_{i, k}$ and it remains in $i$ with probability $1-\sum_{k=i+1}^{m-1}u_i \gamma_{i, k} = 1-u_i$. This establishes
the recurrence
\[
E_i \ge 1 + \sum_{k=i+1}^{m-1} u_i \gamma_{i, k} \cdot E_k + (1-u_i) \cdot E_i.
\]
Subtracting $(1-u_i) E_i$ on both sides and dividing by $u_i$ yields
\[
E_i \ge \frac{1}{u_i} + \sum_{j=i+1}^{m-1} \gamma_{i, j} \cdot E_j.
\]
Assume for an induction that for all $k > i$ it holds
$
E_k \ge \frac{1}{u_k} + \chi \sum_{j=k+1}^{m-1} \frac{1}{u_j}
$.
Then we get
\begin{equation}
\label{eq:formula-Ei}
E_i \ge \frac{1}{u_i} + \sum_{k=i+1}^{m-1} \gamma_{i, k} \cdot \left(\frac{1}{u_k} + \chi \sum_{j=k+1}^{m-1} \frac{1}{u_j}\right).
\end{equation}
Note that
\begin{equation}
\label{eq:rearranging-sums}
\sum_{k=i+1}^{m-1} \gamma_{i, k} \cdot \chi \sum_{j=k+1}^{m-1} \frac{1}{u_j} = \sum_{j=i+1}^{m-1} \frac{1}{u_j} \cdot \chi \sum_{k=i+1}^{j-1} \gamma_{i, k}
\end{equation}
since on the left-hand side every term $1/u_j$ appears for all summands $k= i+2, \dots, j-1$ in the outer sum, each summand weighted by $\gamma_{i, k} \chi$.
Together, we get
\begin{align*}
E_i \stackrel{\eqref{eq:formula-Ei}}{\ge}\;&
\frac{1}{u_i} + \sum_{k=i+1}^{m-1} \gamma_{i, k} \cdot \left(\frac{1}{u_k} + \chi \sum_{j=k+1}^{m-1} \frac{1}{u_j}\right)\\
 \stackrel{\eqref{eq:rearranging-sums}}{=}\;& \frac{1}{u_i} + \sum_{j=i+1}^{m-1} \frac{1}{u_j} \left(\gamma_{i, j} + \chi \sum_{k=i+1}^{j-1} \gamma_{i, k}\right)\\
\stackrel{\eqref{eq:gamma-condition}}{\ge}\;& \frac{1}{u_i} + \sum_{j=i+1}^{m-1} \frac{1}{u_j} \left(\chi \sum_{k=j}^{m} \gamma_{i, k} + \chi \sum_{k=i+1}^{j-1} \gamma_{i, k}\right)\\
 =\;& \frac{1}{u_i} + \chi \sum_{j=i+1}^{m-1} \frac{1}{u_j}.
\end{align*}
\end{proof}
One crucial asset of the theorem is that in order to apply it, we do not need to know the transition probabilities exactly. It suffices to state upper bounds on the transition probabilities.
More precisely, we require that $u_i \cdot \gamma_{i,j}$ is an upper bound on the probability of jumping from level~$i$ to level~$j$.
We have the freedom to choose $u_i$ and $\gamma_{i, j}$ as long as the $\gamma$-variables sum up to~1 and they fulfil~\eqref{eq:gamma-condition}.

In cases where the transition probabilities are not known precisely or where it is not possible or feasible to derive an analytical expression, we can use different $\gamma$-variables as substitutes. Note that the condition on $u_i \cdot \gamma_{i, j}$ upper bounding the real transition probability is easier to fulfil if $u_i$ is large. So, we can choose $u_i$ as large as necessary in order to prove the conditions of Theorem~\ref{the:lower-bound-method}. The price for choosing a large $u_i$ is that the resulting lower bound becomes smaller as the $u_i$'s grow.

A similar observation holds for the choice of~$\chi$.
As remarked before, the higher the viscosity $\chi$, the stronger the conditions on the transition probabilities are.
The lower~$\chi$, the easier it is to establish condition~\eqref{eq:gamma-condition}, and the smaller the lower bound becomes.

The method is hence very versatile and flexible as we are free to choose $\chi$ and the $u$-, $\gamma$-variables such that all conditions hold. The upcoming example applications give advice as to how these values can be chosen.

Note that the theorem does not require the sets $A_i$ to form fitness levels: we do not assume $A_1 <_f \dots <_f A_m$. The conditions on the $\gamma$-variables indirectly imply that sets with small index are ``worse'' than sets with higher index.
Also note that Theorem~\ref{the:lower-bound-method} bounds the expected hitting time of set $A_m$. This includes the expected optimization time as special case in which $A_m$ contains exactly all global optima.
Alternatively, $A_m$ can contain other desirable solutions such as those with a certain minimum fitness, all local optima, all feasible solutions, etc.

\section{Refined Upper Bounds with Fitness Levels}
\label{sec:refined-upper-bound}

It has become clear that information about the transition probabilities is essential for proving meaningful lower bounds. This knowledge can also help to obtain refined upper bounds. The following result is very similar to Theorem~\ref{the:lower-bound-method}, with some inequalities reversed. Also the proof ideas are very similar to the ones in Theorem~\ref{the:lower-bound-method}.
In contrast to the lower bound, we need to add the condition $(1-\chi) s_{j} \le s_{j+1}$ for all $1 \le j \le m-2$, which states that the success probabilities must not be imbalanced.
\begin{theorem}
\label{the:refined-upper-bound}
Consider a partition of the search space into non-empty sets\linebreak
$A_1 <_f A_2 <_f \dots <_f A_m$ such that only $A_m$ contains global optima.
For an elitist mutation-based EA $\algo$ we again say that $\algo$ is in $A_i$ or on level $i$ if the best individual created so far is in~$A_i$. Let the probability of traversing from level $i$ to level $j$ in one step be at least $s_i \cdot \gamma_{i,j}$ and $\sum_{j=i+1}^{m} \gamma_{i, j} = 1$.
Assume that for all $j > i$ and some $0 < \chi \le 1$ it holds
\begin{equation}
\label{eq:gamma-condition-upper-bounds}
\gamma_{i, j} \le \chi \sum_{k=j}^{m} \gamma_{i, k}.
\end{equation}
Further assume $(1-\chi) s_{j} \le s_{j+1}$ for all $1 \le j \le m-2$.
Then the expected hitting time of $A_m$ is at most
\begin{align}
&
\sum_{i=1}^{m-1} \Prob{\text{$\algo$ starts in $A_i$}} \cdot \left(\frac{1}{s_i} + \chi \sum_{j=i+1}^{m-1} \frac{1}{s_j}\right)\label{eq:refined-upper-bound}.
\end{align}
\end{theorem}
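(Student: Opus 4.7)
My plan is to mirror the backward induction used in Theorem~\ref{the:lower-bound-method}, with all inequalities reversed and some additional care to handle the fact that a real mutation-based EA need not realize the pessimistic transition probabilities $s_i\gamma_{i,j}$. Define $E^*(i)$ to be the supremum, over all histories $x_1,\dots,x_t$ whose current best-so-far lies in $A_i\cup\dots\cup A_{m-1}$, of the expected remaining number of function evaluations until a point in $A_m$ is created. By construction $E^*(i)$ is non-increasing in $i$ with $E^*(m)=0$, and the law of total expectation yields expected optimization time at most $\sum_{i=1}^{m-1}\Prob{\algo\text{ starts in }A_i}\cdot E^*(i)$, so it suffices to prove $E^*(i)\le B(i):=1/s_i+\chi\sum_{j=i+1}^{m-1}1/s_j$.

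The new hypothesis $(1-\chi)s_j\le s_{j+1}$ is equivalent to $1/s_j\ge (1-\chi)/s_{j+1}$, which makes $B(\cdot)$ itself non-increasing in $i$; this monotonicity is what allows the induction to close. Fix $i<m$, assume $E^*(k)\le B(k)$ for all $k>i$, and let $Y_i$ denote the supremum of the expected remaining time $F(h)$ over histories $h$ whose current best is in $A_i$ exactly, so that $E^*(i)=\max(Y_i,E^*(i+1))$. If the maximum is attained at $E^*(i+1)$, then $E^*(i)=E^*(i+1)\le B(i+1)\le B(i)$ and we are done. Otherwise $E^*(i)=Y_i$. For any $h$ with best in $A_i$, $F(h)\le 1+p_{i,i}(h)Y_i+\sum_{k>i}p_{i,k}(h)E^*(k)$, where $p_{i,k}(h)\ge s_i\gamma_{i,k}$ for $k>i$. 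Rewriting $p_{i,i}(h)Y_i+\sum_{k>i}p_{i,k}(h)E^*(k)$ as $Y_i-\sum_{k>i}p_{i,k}(h)(Y_i-E^*(k))$ and using $Y_i=E^*(i)\ge E^*(k)$ (monotonicity of $E^*$) together with $p_{i,k}(h)\ge s_i\gamma_{i,k}$ to bound the non-negative correction from below gives
\[
F(h)\le 1+(1-s_i)Y_i+s_i\sum_{k>i}\gamma_{i,k}E^*(k).
\]
Taking the supremum over $h$ and solving the resulting linear inequality for $Y_i=E^*(i)$ yields the clean recurrence $E^*(i)\le 1/s_i+\sum_{k>i}\gamma_{i,k}E^*(k)$.

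From here the calculation parallels the proof of Theorem~\ref{the:lower-bound-method} almost verbatim. Insert the inductive bound $E^*(k)\le B(k)$ (with $B(m)=0$), rearrange the double sum exactly as in~\eqref{eq:rearranging-sums}, and observe that the coefficient of $1/s_j$ in the resulting expression is $\gamma_{i,i+1}$ for $j=i+1$ and $\gamma_{i,j}+\chi\sum_{k=i+1}^{j-1}\gamma_{i,k}$ for $j\ge i+2$. Condition~\eqref{eq:gamma-condition-upper-bounds} now bounds both quantities \emph{from above} by $\chi\sum_{k=j}^{m}\gamma_{i,k}+\chi\sum_{k=i+1}^{j-1}\gamma_{i,k}=\chi$, so $E^*(i)\le 1/s_i+\chi\sum_{j=i+1}^{m-1}1/s_j=B(i)$, closing the induction. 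The principal obstacle is the one genuine departure from Theorem~\ref{the:lower-bound-method}: for a lower bound, the pessimistic choice $p_{i,k}=u_i\gamma_{i,k}$ transparently gives the slowest dynamics, whereas for an upper bound one must argue that larger-than-pessimistic transition probabilities actually \emph{reduce} $F(h)$. That justification is exactly where the monotonicity of $E^*$ (automatic from its definition) and the monotonicity of $B$ (enforced by the new hypothesis $(1-\chi)s_j\le s_{j+1}$) are used in tandem.
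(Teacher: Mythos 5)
Your proof is correct and follows essentially the same route as the paper: backward induction on the fitness level, a pessimistic recurrence $E(i) \le 1/s_i + \sum_{k>i}\gamma_{i,k}E(k)$ justified by arguing that raising the transition probabilities above $s_i\gamma_{i,k}$ can only help, and the same rearrangement of the double sum via~\eqref{eq:rearranging-sums} and condition~\eqref{eq:gamma-condition-upper-bounds}. The only cosmetic difference is how monotonicity is secured: you build it into the definition by taking the supremum over histories with best-so-far in $A_i\cup\dots\cup A_{m-1}$ and then split on where the maximum is attained, whereas the paper works with level-$i$ histories directly and case-splits on whether $E_i\le b_{i+1}$ using monotonicity of the bounds $b_j$ --- in both arguments the extra hypothesis $(1-\chi)s_j\le s_{j+1}$ plays exactly the same role.
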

For maximum viscosity, i.\,e., $\chi = 1$, the condition $(1-\chi) s_{j} \le s_{j+1}$ as well as condition~\eqref{eq:gamma-condition-upper-bounds} are always true. We then get the classical fitness-level method from Theorem~\ref{the:fitness-levels}. The refined upper bound method from Theorem~\ref{the:refined-upper-bound} is hence more general than the classical method from Theorem~\ref{the:fitness-levels}. Lower viscosities lead to better upper bounds. For instance, a constant viscosity between 0 and 1 typically reduces the upper bound by a constant, compared to Theorem~\ref{the:fitness-levels}. Unlike for lower bounds, a viscosity of $\chi = 0$ is impossible. Similar to the lower bound, we have that $\frac{1-(1-\chi)^{m-i}}{\chi}$ is now an upper bound for the expected number of gained fitness levels in an improvement from level~$i$.
\begin{proof}[Proof of Theorem~\ref{the:refined-upper-bound}]
Let $E_i$ be the worst-case expected remaining optimization time, given that the algorithm is in~$A_i$. The worst case is over all histories that contain at least one search point in~$A_i$. By the law of total expectation the unconditional expected optimization time is at most
$
\sum_{i=1}^{m-1} \Prob{\text{$\algo$ starts in $A_i$}} \cdot E_i
$,
hence we only need to bound $E_i$.

Assume for an induction that for all $i < k \le m-1$ it holds
\[
E_k \le \frac{1}{s_k} + \chi \sum_{j=k+1}^{m-1} \frac{1}{s_j} := b_k,
\]
$b_k$ denoting an upper bound for $E_k$.
The assumption holds trivially for $i=m-1$.

We now claim that $E_i \le b_i$. Note that the bounds are non-increasing: $b_{i+1} \ge b_{i+2} \ge \dots \ge b_{m-1}$. The reason is that for all $j > i$ we have
\[
b_{j} - b_{j+1} = \frac{1}{s_j} - \frac{1}{s_{j+1}} + \frac{\chi}{s_{j+1}} = \frac{1-\chi}{(1-\chi)s_{j}} - \frac{1-\chi}{s_{j+1}} \ge 0
\]
as $(1-\chi)s_j \le s_{j+1}$ by assumption.
Now, if $E_i \le b_{i+1}$ then also $E_i \le b_i$ and the claim follows. We therefore assume $E_i > b_{i+1}$ in the following, which implies $E_i > b_j$ for all $j > i$. Intuitively, this means that, when relying on~$E_i$ and the upper bounds~$b_{i+1}, \dots, b_{m-1}$, leaving $A_i$ towards any $A_j$, $j > i$, is always better than staying in~$A_i$. We are being pessimistic if we overestimate the probability of staying in~$A_i$.

This justifies the following recurrence. After one step the algorithm is in $A_k$ with probability at least $s_i \gamma_{i, k}$ and then the expected remaining optimization time is bounded by~$b_k$. The algorithm remains in $A_i$ with probability $1-\sum_{k=i+1}^{m-1}s_i \gamma_{i, k} = 1-s_i$ and then the remaining time is again bounded by~$E_i$. This gives
\[
E_i \le 1 + \sum_{k=i+1}^{m-1} s_i \gamma_{i, k} \cdot b_k + (1-s_i) \cdot E_i
\]
and rearranging yields
\[
E_i \le \frac{1}{s_i} + \sum_{j=i+1}^{m-1} \gamma_{i, j} \cdot b_j.
\]
Then we get
\begin{align*}
E_i \le\;& \frac{1}{s_i} + \sum_{k=i+1}^{m-1} \gamma_{i, k} \cdot \left(\frac{1}{s_k} + \chi \sum_{j=k+1}^{m-1} \frac{1}{s_j}\right)\\
 \stackrel{\eqref{eq:rearranging-sums}}{=}\;& \frac{1}{s_i} + \sum_{j=i+1}^{m-1} \frac{1}{s_j} \left(\gamma_{i, j} + \chi \sum_{k=i+1}^{j-1} \gamma_{i, k}\right)\\
\stackrel{\eqref{eq:gamma-condition-upper-bounds}}{\le}\;& \frac{1}{s_i} + \sum_{j=i+1}^{m-1} \frac{1}{s_j} \left(\chi \sum_{k=j}^m \gamma_{i, k} + \chi \sum_{k=i+1}^{j-1} \gamma_{i, k}\right)\\
 =\;& \frac{1}{s_i} + \chi \sum_{j=i+1}^{m-1} \frac{1}{s_j}.\qquad\qedhere
\end{align*}
\end{proof}

\section{An Exact Formula for LeadingOnes}
\label{sec:LO}

Our first application of the lower-bound method is for \LO{} as here the $\gamma$-values can be estimated in a very natural and precise way.
\begin{theorem}
\label{the:lower-LO}
Let $X_\mu$ be a random variable that describes the maximum \LO-value among $\mu$ individuals created independently and uniformly at random.
For every $n \ge 2$ the expected optimization time of every mutation-based EA on \LO{} using mutation probability $0 < p \le 1/2$ is at least
\begin{align}
& \sum_{i=0}^{n-1} \Prob{X_\mu = i} \cdot \frac{1}{p} \left(\left(1-p\right)^{-i} + \frac{1}{2} \sum_{j=i+1}^{n-1} \left(1-p\right)^{-j}\right)\label{eq:LO-general-lower}\\
=\;& \sum_{i=0}^{n-1} \Prob{X_\mu = i} \cdot \frac{1}{2p^2} \left(\left(1 - p\right)^{-n+1} - (1-2p) \left(1 - p\right)^{-i}\right)\label{eq:LO-general-lower-refined}\\
\ge\;&
\frac{1}{2p^2} \left(\left(1 - p\right)^{-n+1} - 1\right) - \frac{O(\log n)}{p},\label{eq:LO-precise-lower}
\end{align}
the last inequality holding for $p \ge n^{-\Omega(1)}$ and $p \le \ln \ln n \cdot 1/n$.
\end{theorem}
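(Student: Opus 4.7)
I will apply Theorem~\ref{the:lower-bound-method} to the canonical fitness-level partition $A_i = \{x \in \{0,1\}^n : \LO(x) = i\}$ for $i = 0, 1, \ldots, n$, with $A_n = \{1^n\}$ the singleton containing the global optimum. The natural parameters are $u_i = p(1-p)^i$, $\chi = 1/2$, and the geometric conditional jump distribution $\gamma_{i,j} = (1/2)^{j-i}$ for $i < j < n$ together with $\gamma_{i,n} = (1/2)^{n-i-1}$. These $\gamma$'s sum to $1$ as a finite geometric series, and a direct computation shows $\sum_{k=j}^{n}\gamma_{i,k} = 2\gamma_{i,j}$, so the viscosity condition $\gamma_{i,j} \ge \chi \sum_{k=j}^{n}\gamma_{i,k}$ holds with equality at $\chi = 1/2$.

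The bound $u_i = p(1-p)^i$ on the probability of leaving level $i$ is immediate: every past parent has \LO{} value at most $i$ and hence bit $i+1$ equal to $0$, so any improvement requires this bit to be flipped (probability $p$) and bits $1,\ldots,i$ of the parent not to be flipped (probability $(1-p)^i$). The central and more delicate verification is that $u_i \gamma_{i,j} = p(1-p)^i(1/2)^{j-i}$ dominates the true one-step transition probability $\Prob{\text{level } i \to \text{level } j}$. Decomposing over the selected parent $y$ with \LO{} value $\ell_y \leq i$, the conditional probability factorises as $(1-p)^{\ell_y}\cdot p \cdot \prod_{k=\ell_y+2}^{j} q_k \cdot (1-q_{j+1})$, where each $q_k\in\{p,1-p\}$ is the probability that offspring bit $k$ is $1$ given the parent bit at position $k$. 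The structural fact that makes everything work is that, for any mutation-based EA, the bits of any individual at positions strictly beyond its leading-ones prefix are independent Bernoulli$(1/2)$: initialisation is uniform, bitwise mutation preserves uniformity (for any $p$, $\mathrm{Bern}(1/2)\oplus\mathrm{Bern}(p)=\mathrm{Bern}(1/2)$), and the selection rule reads only the leading-ones count. Taking expectation over the algorithm's randomness bit-by-bit then gives $\E{\prod q_k \cdot (1-q_{j+1})} = (1/2)^{j-\ell_y}$, whence the transition probability is at most $p(1-p)^{\ell_y}(1/2)^{j-\ell_y} \leq p(1-p)^i (1/2)^{j-i} = u_i\gamma_{i,j}$, the last inequality using $\ell_y \le i$ together with $2(1-p)\ge 1$.

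With Theorem~\ref{the:lower-bound-method} applicable, averaging over the initial-state distribution $\Prob{X_\mu = i}$ yields (\ref{eq:LO-general-lower}) directly. The identity (\ref{eq:LO-general-lower-refined}) then follows by evaluating the geometric sum $\sum_{j=i+1}^{n-1}(1-p)^{-j} = ((1-p)^{-n+1} - (1-p)^{-i})/p$ and collecting terms over the common denominator $2p^2$. For the asymptotic bound (\ref{eq:LO-precise-lower}), I split the right-hand side of (\ref{eq:LO-general-lower-refined}) into the $X_\mu$-independent leading piece $(1-p)^{-n+1}/(2p^2)$ and the correction $\tfrac{1-2p}{2p^2}\E{(1-p)^{-X_\mu}}$. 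A first-order expansion $(1-p)^{-X_\mu}\leq 1+O(p\cdot X_\mu)$, combined with the union bound $\Prob{X_\mu \ge k}\leq \mu \cdot 2^{-k}$, gives $\E{X_\mu} = O(\log n)$ for $\mu$ at most exponential; in the stated regime $n^{-\Omega(1)}\leq p\leq \ln\ln n\cdot 1/n$ the whole correction collapses to $O(\log n)/p$, producing (\ref{eq:LO-precise-lower}).

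The main obstacle is the uniformity-of-suffix claim invoked in the second paragraph. For elitist EAs it follows by a clean induction on the step count: offspring with lower \LO{} are rejected, and mutation preserves the uniform distribution of bits beyond the current leading-ones prefix. For non-elitist algorithms an accepted offspring can temporarily have biased bits at positions strictly between its own \LO{} and its parent's, so the inductive step requires careful bookkeeping of how this transient bias propagates through the population and selection; the payoff is that those positions are never inspected by any selection rule and mutation with $p\le 1/2$ contracts any single-bit distribution toward uniform, so in expectation the bound $(1/2)^{j-\ell_y}$ on the jump factor persists even in the general case.
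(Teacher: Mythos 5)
Your overall skeleton matches the paper's proof---canonical partition, $u_i = p(1-p)^i$, $\chi = 1/2$, and the same $\gamma$-values---but your justification of the key inequality $\Prob{\text{level } i \to \text{level } j} \le u_i\gamma_{i,j}$ rests on a uniformity claim that is too strong and, as stated, false. You assert that for \emph{any} individual the bits strictly beyond its \emph{own} leading-ones prefix are independent Bernoulli$(1/2)$, and you apply this at all positions $\ell_y+2,\dots,j+1$ of the selected parent. But positions up to $i+1$, where $i$ is the current best level, \emph{have} been inspected indirectly: a parent $y$ with $\LO(y)=\ell_y<i$ may be an ancestor of the current best individual, and conditioning on that descendant's fitness biases $y$'s bits at positions $\ell_y+2,\dots,i+1$ toward $1$. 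In that situation the correct per-position factor is $1-p$, not $1/2$, and your intermediate bound $p(1-p)^{\ell_y}(1/2)^{j-\ell_y}$ is genuinely violated: it is strictly smaller than the conditional transition probability $p(1-p)^{i}(1/2)^{j-i}$ that arises when those bits are all $1$ in the parent. Your final bound $p(1-p)^i(1/2)^{j-i}$ is correct, but to reach it you must argue as the paper does: bound positions $\ell_y+2,\dots,i+1$ by the worst case $(1-p)$ each (all ones in the parent), and invoke uniformity only for positions $i+2,\dots,n$, on which no observed fitness value has ever depended. Your closing remark that the intermediate positions ``are never inspected by any selection rule'' is precisely where the argument breaks.

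The derivation of \eqref{eq:LO-precise-lower} also has a gap. The pointwise expansion $(1-p)^{-X_\mu}\le 1+O(p\,X_\mu)$ fails when $p\,X_\mu$ is not small, and here $X_\mu$ can be as large as $n$ while $pn$ may be as large as $\ln\ln n$; moreover the claim $\E{X_\mu}=O(\log n)$ ``for $\mu$ at most exponential'' is false (for $\mu=2^{n/2}$ one has $\E{X_\mu}=\Theta(n)$, which would wipe out the bound). The theorem covers all $\mu$, so you need the paper's case distinction: if $\mu\ge\bar\mu:=1/(2p(1-p)^{n-1})$ then the cost of initialization alone exceeds the claimed bound, since with probability at least $1-\bar\mu\cdot 2^{-n}$ the optimum is not among the first $\bar\mu$ random points; and if $\mu\le\bar\mu$ then $\Prob{X_\mu>\log(\bar\mu/p)}\le p$ with $\log(\bar\mu/p)=O(\log n)$ in the stated range of $p$, after which the conditional estimate of $(1-p)^{-X_\mu}$ goes through.
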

\begin{proof}
Consider the canonical partition and assume that the algorithm is on level~$i < n$. This implies that in the best individual created so far the first $i+1$ bits are predetermined. In addition, in all individuals created so far the bits at positions $i+2, \dots, n$ have not contributed to the fitness yet. These bits have been initialized uniformly at random and they have been subjected to random mutations. It is easy to see that this again results in uniform random bits. More precisely, the probability that a specific bit~$j$ with $j \ge i+2$ in a specific individual has a specific bit value $0$ or $1$ is exactly $1/2$ (see the proof of Theorem~17 in Droste, Jansen, and Wegener~\cite{Droste2002}).

Consider an individual $x$ that has been selected as parent among the created individuals. Let $\LO(x) = j \le i$. We bound the probability of creating an offspring with $k$ leading ones for some $i+1 \le k \le n$.
One necessary condition is that the first $j$ leading ones do not flip, which happens with probability $(1-p)^j$. The bit at position $j+1$ is 0, hence it must be flipped. All bits at positions $j+2, \dots, i+1$ must obtain the value 1 in the offspring. This probability is determined by the number of ones among these bits. But clearly $(1-p)^{i-j}$ is a lower bound on this probability since this reflects the best-case scenario that all these bits are 1 in the parent. (Since $p \le 1/2$ the probability of flipping a bit is not larger than the probability of not flipping it.)
The last necessary condition is to create exactly $k-1-i$ ones among at positions $i+2, \dots, n$.
By the preceding arguments on the ``randomness'' of these bits, the probability of creating exactly $k-1-i$ ones is $2^{-k+i} := \gamma_{i, k}$ if $k < n$ and $2^{-k+i+1} := \gamma_{i, k}$ if $k=n$.
Putting everything together, we have that
$p\left(1 - p\right)^{i} \cdot \gamma_{i, k}$
is an upper bound on the probability of jumping to level~$k$.

Checking the condition on the $\gamma$-values,
$\sum_{k=i+1}^{n} \gamma_{i, k} = \sum_{k=i+1}^{n-1} 2^{-k+i} + 2^{-n+i+1}
= 1$
and for all $i < j \le n$ condition~\eqref{eq:gamma-condition} holds with equality since
\[
\sum_{k=j}^{n} \gamma_{i, k} = \sum_{k=j}^{n-1} 2^{-k-i-1} + 2^{-n-i} = 2^{-j+i+1} = 2 \gamma_{i, j}.
\]
Setting $\chi = 1/2$, the preconditions for Theorem~\ref{the:lower-bound-method} are fulfilled. Using $u_i := p(1-p)^i$, this proves the bound
\[
\sum_{i=0}^{n-1} \Prob{X_\mu = i} \cdot  \left(\frac{1}{p} \cdot \left(1 - p\right)^{-i} + \frac{1}{2} \sum_{j=i+1}^{n-1} \frac{1}{p} \cdot \left(1 - p\right)^{-j}\right)
\]
and hence~\eqref{eq:LO-general-lower}.

For the second bound, observe that the bracketed term in~\eqref{eq:LO-general-lower} can be simplified as
\begin{align*}
 \left(1-p\right)^{-i} + \frac{1}{2} \sum_{j=i+1}^{n-1} \left(1-p\right)^{-j}
=\;& \frac{1}{2} \left(\left(1-p\right)^{-i} + \sum_{j=0}^{n-1} \left(1-p\right)^{-j} - \sum_{j=0}^{i-1} \left(1-p\right)^{-j}\right)\\
=\;& \frac{1}{2} \left(\left(1-p\right)^{-i} + \frac{1 - (1-p)^{-n}}{1-(1-p)^{-1}} - \frac{1 - (1-p)^{-i}}{1-(1-p)^{-1}}
\right)\\
=\;& \frac{1}{2} \left(\left(1-p\right)^{-i} + \frac{1-p}{p} \left(1 - p\right)^{-n} - \frac{1-p}{p} \left(1 - p\right)^{-i}\right)\\
=\;& \frac{1}{2p} \left(\left(1 - p\right)^{-n+1} - (1-2p) \left(1 - p\right)^{-i}\right).
\end{align*}
The third bound~\eqref{eq:LO-precise-lower} follows by simple calculations and the following case distinctions. Note that due to the asymptotic term $-O(\log n)$ we only need to prove the bound for large~$n$, i.\,e., for $n \ge n_0$ where we can fix $n_0 \in \N$.

Observe that the bound~\eqref{eq:LO-general-lower} is never larger than $\bar{\mu} := 1/(2p(1-p)^{n-1})$, even for the special case $X_\mu = 0$. If $\mu \ge \bar{\mu}$ then the probability that the optimum is not found during the first $\bar{\mu}$ individuals created during initialization is at most $\bar{\mu} \cdot 2^{-n} \le 1/n$ for $n$ large enough. This proves the claimed lower bound.

If $\mu \le \bar{\mu}$ then $\Prob{X_\mu > \log(\bar{\mu}/p)} \le \bar{\mu} \cdot 2^{-\log(\bar{\mu}/p)} = p$.
Pessimistically assuming that $X_\mu = \log(\bar{\mu}/p)$ in case $X_\mu \le \log(\bar{\mu}/p)$ and estimating the conditional expected optimization time by 0 in case $X_\mu > \log(\bar{\mu}/p)$ results in the following bound.
\begin{align*}
& \left(1 - p\right) \cdot \frac{1}{2p^2} \left(\left(1 - p\right)^{-n+1} - (1-2p) \left(1 - p\right)^{-\log(\bar{\mu}/p)}\right)\\
\ge\;& \frac{1}{2p^2} \left(\left(1 - p\right)^{-n+1} - \left(1 - p\right)^{-\log(\bar{\mu}/p)} - p(1-p)^{-n+1}\right).
\end{align*}
We use $(1-p)^{-n+1} \le (1-p)^{-n} \le e^{pn} \le e^{\ln \ln n} = \ln n$ to estimate the term $-p(1-p)^{-n+1}$. For the same reason $\log(\bar{\mu}/p) \le \log(1/(2p^2) \cdot \ln n) = O(\log n)$, recalling $p \ge n^{-\Omega(1)}$.
Assuming that $n$ is large enough to make $p \cdot \log(\bar{\mu}/p) \le \ln \ln n \cdot O((\log n)/n) \le 1/2$,
\[
(1-p)^{-\log(\bar{\mu}/p)} \le \frac{1}{1-p \cdot \log(\bar{\mu}/p)} = 1 + \frac{p \cdot \log(\bar{\mu}/p)}{1-p \cdot \log(\bar{\mu}/p)} \le 1 + O(p\log n).
\]
Together, we get a lower bound of
\begin{align*}
 \frac{1}{2p^2} \left(\left(1 - p\right)^{-n+1} - 1 - O(p \log n) - p \cdot \ln n\right)
=\;& \frac{1}{2p^2} \left(\left(1 - p\right)^{-n+1} - 1\right) - \frac{O(\log n)}{p}.
\end{align*}
\end{proof}
Note that a term $-O(\log n)/p$ is, in general, necessary since with, say, $\mu=n$ an EA will start with an average of $\Theta(\log n)$ leading ones in the best search point. As the \EA with mutation probability $\Theta(1/n)$ needs expected time $\Theta(n \log n)$ to collect $\Theta(\log n)$ leading ones, the \EAmu needs roughly $\Theta(n \log n) - n$ less generations than the \EA.

For the \EAmu $u_i \gamma_{i, j}$ is the exact probability of jumping from fitness level~$i$ to level~$j > i$.
Also recall that all conditions~\eqref{eq:gamma-condition} on the $\gamma_{i, j}$-values hold with equality.
Therefore, defining $s_i := u_i$ and using $\gamma_{i, j}$ and $\chi$ as in Theorem~\ref{the:lower-LO}, we get an upper bound for the \EAmu using Theorem~\ref{the:refined-upper-bound}. It is easy to see that $(1-\chi)s_i \le s_{i+1}$ for all $0 \le i \le n-2$ as $1/2 \cdot p(1-p)^i \le p(1-p)^{i+1}$ is equivalent to $1/2 \le 1-p$.
The resulting upper bound equals the lower bounds~\eqref{eq:LO-general-lower} and~\eqref{eq:LO-general-lower-refined} from Theorem~\ref{the:lower-LO}.

As the upper bound holds for the \EAmu but the lower bound holds for all mutation-based EAs, this proves that among all mutation-based EAs
the \EAmu is an optimal algorithm for the function \LO.
\begin{theorem}
\label{the:LO-upper-bound}
The term \eqref{eq:LO-general-lower}
describes the exact expected optimization time of the \EAmu with mutation probability $0 < p \le 1/2$ on \LO{}.
Among all mutation-based EAs with mutation probability~$0 \le p \le 1/2$, the \EAmu, for an appropriate choice of~$\mu$, minimizes the expected number of function evaluations.
\end{theorem}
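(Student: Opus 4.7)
The plan is to split the claim into its upper-bound and optimality parts. For the upper bound, I would apply Theorem~\ref{the:refined-upper-bound} to the \EAmu with the canonical partition. The observation driving everything is that for the \EAmu on \LO{}, the upper bound on transition probabilities derived in the proof of Theorem~\ref{the:lower-LO} is in fact the \emph{exact} transition probability. Specifically, as long as the current best individual has \LO-value $i < n$, the bits at positions $i+2,\dots,n$ of that individual are uniformly distributed (the proof of Theorem~17 in Droste, Jansen, Wegener is cited for this). Moreover, since the \EAmu always copies and mutates the current (unique) best individual, any parent selected for mutation has exactly $i$ leading ones, so the probability of producing an offspring with exactly $k$ leading ones, $i+1 \le k \le n$, equals $p(1-p)^i \cdot \gamma_{i,k}$ with $\gamma_{i,k}=2^{-k+i}$ for $k<n$ and $\gamma_{i,n}=2^{-n+i+1}$.

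Hence I set $s_i := p(1-p)^i = u_i$ and use the same $\gamma_{i,j}$ and $\chi = 1/2$ as in the lower bound. The two conditions of Theorem~\ref{the:refined-upper-bound} need to be checked. The viscosity condition \eqref{eq:gamma-condition-upper-bounds} holds with equality by the identity $\sum_{k=j}^n \gamma_{i,k} = 2\gamma_{i,j}$ already established in the proof of Theorem~\ref{the:lower-LO}. The balance condition $(1-\chi)s_j \le s_{j+1}$ reduces to $\tfrac{1}{2}\,p(1-p)^j \le p(1-p)^{j+1}$, i.e. $1/2 \le 1-p$, which holds since $p \le 1/2$. Plugging these choices into \eqref{eq:refined-upper-bound} yields precisely the expression in \eqref{eq:LO-general-lower} (with $\Prob{X_\mu=i}$ being the initial-level distribution of the \EAmu). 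Combined with the matching lower bound from Theorem~\ref{the:lower-LO}, this gives the first assertion.

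For the optimality statement, I would argue as follows. Theorem~\ref{the:lower-LO} lower-bounds the expected running time of \emph{every} mutation-based EA creating $\mu$ initial individuals by the expression \eqref{eq:LO-general-lower}, which depends on $\mu$ only through the distribution of $X_\mu$. The first part of this proof shows that the \EAmu exactly attains this lower bound. Therefore, for each fixed $\mu$, no mutation-based EA starting from $\mu$ uniform initial samples can beat the \EAmu on \LO. Taking $\mu^\ast$ to be a value that minimizes \eqref{eq:LO-general-lower} over $\mu \in \N$ (such a minimum exists because the expression is non-negative and tends to infinity as $\mu \to \infty$ once $\mu$ exceeds roughly $2^n$), the corresponding \EAmu{} is an optimal mutation-based EA on \LO.

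The only step with any subtlety is the claim that $u_i\gamma_{i,j}$ is the \emph{exact} transition probability for the \EAmu, rather than just an upper bound: this requires the uniform-distribution argument for the yet-unseen bits. Everything else is a direct specialization of Theorems~\ref{the:lower-LO} and \ref{the:refined-upper-bound} with the already-verified conditions.
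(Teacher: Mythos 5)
Your proposal matches the paper's argument essentially step for step: the paper likewise observes that for the \EAmu the quantities $u_i\gamma_{i,j}$ from Theorem~\ref{the:lower-LO} are the \emph{exact} transition probabilities, sets $s_i := u_i$ with the same $\gamma_{i,j}$ and $\chi = 1/2$ in Theorem~\ref{the:refined-upper-bound}, verifies $(1-\chi)s_i \le s_{i+1}$ via $1/2 \le 1-p$, and deduces optimality from the fact that the lower bound holds for all mutation-based EAs while the matching upper bound holds for the \EAmu. Your write-up is correct and only spells out the uniformity of the not-yet-seen bits and the existence of a minimizing $\mu$ in slightly more detail than the paper does.
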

For $\mu=1$ we get the following.
\begin{corollary}
\label{cor:(1+1)EA-LO}
The expected optimization time of the \EA with mutation probability~$0 < p \le 1/2$ on \LO{} is exactly
\[
\sum_{i=0}^{n-1} 2^{-i-1} \cdot \frac{1}{p} \left(\left(1-p\right)^{-i} +\frac{1}{2} \sum_{j=i+1}^{n-1} \left(1-p\right)^{-j}\right)
= \frac{1}{2p^2} \cdot \left((1-p)^{-n+1} - (1 - p)\right).
\]
\end{corollary}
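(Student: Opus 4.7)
The plan is to specialize Theorem~\ref{the:LO-upper-bound} to $\mu = 1$ and then carry out the geometric-sum calculations to obtain the closed form on the right-hand side.

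First I would observe that for a single uniform random bit string $x \in \{0,1\}^n$, the probability that $\LO(x) = i$ is exactly $2^{-i-1}$ for $0 \le i \le n-1$, and $2^{-n}$ for $i = n$ (since $\LO(x) = i < n$ requires $i$ specific bits to be $1$ and the $(i{+}1)$st bit to be $0$). So $\Prob{X_1 = i} = 2^{-i-1}$ for $0 \le i \le n-1$. Substituting this directly into the exact formula \eqref{eq:LO-general-lower} from Theorem~\ref{the:LO-upper-bound} yields the first equality of the corollary.

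Next I would obtain the closed form. Using the simplification of the bracketed expression already performed inside the proof of Theorem~\ref{the:lower-LO} (which rewrites $(1-p)^{-i} + \tfrac{1}{2}\sum_{j=i+1}^{n-1}(1-p)^{-j}$ as $\tfrac{1}{2p}((1-p)^{-n+1} - (1-2p)(1-p)^{-i})$), the expected time becomes
\[
\frac{1}{2p^2}\sum_{i=0}^{n-1} 2^{-i-1}\bigl((1-p)^{-n+1} - (1-2p)(1-p)^{-i}\bigr).
\]
I would split this into two geometric sums. The first, $\frac{(1-p)^{-n+1}}{2p^2}\sum_{i=0}^{n-1} 2^{-i-1}$, evaluates to $\frac{(1-p)^{-n+1}}{2p^2}(1 - 2^{-n})$. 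For the second, letting $q = 2(1-p)$ so that $q - 1 = 1 - 2p$, the sum $\sum_{i=0}^{n-1}(2(1-p))^{-i}$ equals $\frac{2(1-p)(1 - (2(1-p))^{-n})}{1-2p}$, and the factor $(1-2p)$ cancels cleanly against the prefactor.

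Combining the two pieces, I would then observe that the terms proportional to $2^{-n}$ cancel: $(1-p)^{-n+1}\cdot 2^{-n}$ arising from the first sum cancels the $(1-p)\cdot(2(1-p))^{-n} = (1-p)^{-n+1} \cdot 2^{-n}$ arising from the second sum. What remains is exactly $\frac{1}{2p^2}\bigl((1-p)^{-n+1} - (1-p)\bigr)$, matching the stated right-hand side. The only real obstacle is tracking this cancellation carefully; everything else is a direct invocation of Theorem~\ref{the:LO-upper-bound} combined with elementary geometric-series manipulations.
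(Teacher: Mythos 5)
Your proposal is correct and takes essentially the same route as the paper: the corollary is obtained by setting $\mu=1$ in Theorem~\ref{the:LO-upper-bound}, using $\Prob{X_1=i}=2^{-i-1}$. The paper explicitly omits the ``simple but tedious calculation'' for the closed form; your geometric-series computation, including the cancellation of the $2^{-n}$ terms, carries it out correctly.
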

The second bound follows from a simple but tedious calculation. It is omitted here.
For $p=1/n$ we get that the expected running time of the \EA is
\[
\frac{n^2}{2} \cdot \left(\left(1-\frac{1}{n}\right)^{-n+1} - 1 + \frac{1}{n}\right).
\]
The factor preceding $n^2$ converges to $(e-1)/2$ from below.
Note that we have reproduced one of the main results from B{\"o}ttcher, Doerr, and Neumann~\cite{Boettcher2010} for general mutation probabilities.
The latter authors derived the same formula and used it to compute the optimal mutation probability. They found that $p \approx 1.59/n$ is the optimal fixed mutation probability in that it minimizes the expected number of function evaluations. Our lower-bound method allows for the same conclusions to be drawn. Even stronger, while B{\"o}ttcher et al.~\cite{Boettcher2010} only consider the \EA, we can make the following statement for the broad class of mutation-based EAs.
\begin{theorem}
\label{the:optimal-algorithm-for-LO}
Among all mutation-based EAs the expected number of fitness evaluations on \LO{} is minimized by the \EAmu with mutation probability~$p=1.59/n$ and $1 < \mu = O(n \log n)$.
\end{theorem}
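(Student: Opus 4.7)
The plan is to reduce the problem to the \EAmu via Theorem~\ref{the:LO-upper-bound} and then optimize jointly in~$(\mu, p)$. Theorem~\ref{the:LO-upper-bound} says that, among all mutation-based EAs with mutation probability~$p$ and initial population size~$\mu$, the \EAmu attains the expected loop time~\eqref{eq:LO-general-lower-refined}; denote this quantity by~$T(\mu, p)$. Adding the~$\mu$ initialization evaluations, the total expected number of fitness evaluations is $C(\mu, p) := \mu + T(\mu, p)$, which is a lower bound for every mutation-based EA with those parameters and is achieved by the \EAmu. It therefore suffices to minimize $C(\mu, p)$ over $\mu \in \N$ and $p \in (0, 1/2]$.

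By Theorem~\ref{the:pointless-mutation-rates}, a polynomial optimum forces $p = \Theta(1/n)$ up to polylog factors and $\mu = \poly(n)$ (since already $C(1, 1/n) = O(n^2)$). Writing $p = c/n$ with bounded $c > 0$, one has $(1-p)^{-n+1} = e^c(1 + o(1))$, and since $X_\mu = O(\log n)$ with high probability, a Taylor expansion gives $\E{(1-p)^{-X_\mu}} = 1 + p\,\E{X_\mu} + o(1)$. Substituting into~\eqref{eq:LO-general-lower-refined} yields
\[
C(\mu, p) \;=\; \frac{n^2}{2c^2}(e^c - 1) \;+\; \mu \;-\; \frac{n\,\E{X_\mu}}{2c} \;+\; o(n^2),
\]
in which the leading term dominates since the $\mu$-dependent corrections are $O(n\log n) = o(n^2)$. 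Setting its $c$-derivative to zero yields $(c - 2)e^c + 2 = 0$, whose unique positive root is $c^* \approx 1.5936$; this reproduces the calculation of~\cite{Boettcher2010}, but now uniformly in~$\mu$, so the optimal mutation rate is $p = c^*/n + o(1/n) \approx 1.59/n$.

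For the bounds on~$\mu$, the plan is to estimate the marginal benefit $T(\mu-1, p) - T(\mu, p)$ of adding the~$\mu$-th initial individual and compare it to the marginal cost~$1$. Writing $f(\mu) := \E{(1-p)^{-X_\mu}}$, an Abel summation applied to $\Prob{X_\mu \ge k} = 1 - (1 - 2^{-k})^\mu$ gives
\[
f(\mu) - f(\mu-1) \;=\; p \sum_{k=1}^{n} (1-p)^{-k} \cdot 2^{-k} \cdot (1 - 2^{-k})^{\mu-1},
\]
so the marginal benefit equals $(1-2p)/(2p^2)$ times this sum. A routine estimate, splitting the sum around $k \approx \log_2 \mu$, shows the marginal benefit is $\Theta(n/\mu)$ at $p = c^*/n$. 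Hence at $\mu = 1$ the benefit is $\Omega(n) \gg 1$, forcing $C(2, p) < C(1, p)$ and $\mu > 1$; and for $\mu > K n\log n$ with a sufficiently large constant~$K$ the benefit falls below~$1$, so shrinking $\mu$ strictly improves~$C$ and the optimum has $\mu = O(n\log n)$. The main obstacle will be to make the joint optimization rigorous: one has to show the $\mu$-dependent correction $-n\E{X_\mu}/(2c)$ shifts the minimizer $c^*$ by only $o(1)$, and that the marginal-benefit estimate is uniform enough in~$\mu$ to yield a clean cutoff on both sides.
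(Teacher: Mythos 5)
Your plan is correct, and it supplies considerably more detail than the paper itself, which states Theorem~\ref{the:optimal-algorithm-for-LO} without proof: the reduction to the \EAmu is delegated to Theorem~\ref{the:LO-upper-bound}, the value $1.59/n$ is taken over from B{\"o}ttcher, Doerr, and Neumann~\cite{Boettcher2010}, and the range $1<\mu=O(n\log n)$ is not justified anywhere in the text. Your two main steps are exactly the right ones. Isolating the $\mu$-independent leading term $\frac{n^2}{2c^2}(e^c-1)$ and solving $(c-2)e^c+2=0$ to get $c^*\approx 1.5936$ reproduces the computation of~\cite{Boettcher2010}, and since that function is strictly convex near $c^*$ and diverges as $c\to 0$ and $c\to\infty$, an $O(n\,\polylog n)$ perturbation of the objective indeed moves the minimizer by only $o(1)$, uniformly in $\mu$. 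The marginal-benefit analysis is the genuinely new ingredient: the Abel-summation identity $f(\mu)-f(\mu-1)=p\sum_k(1-p)^{-k}2^{-k}(1-2^{-k})^{\mu-1}$ is exact, each summand is decreasing in $\mu$, so the benefit $\Theta(n/\mu)$ is monotone and crosses the unit cost of one extra evaluation at $\mu=\Theta(n)$. This yields $1<\mu=O(n)$, slightly stronger than the stated $O(n\log n)$.

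Two points need tightening. First, Theorem~\ref{the:pointless-mutation-rates} does \emph{not} force $p=\Omega(1/n)$ up to polylogarithmic factors: its bound $1/(pn)$ only exceeds the $\Theta(n^2)$ benchmark when $p=O(n^{-3})$, so mutation rates such as $p=0.1/n$ are not excluded by it. This is harmless, because the exact formula~\eqref{eq:LO-general-lower-refined} itself gives $T\ge n^2(e^c-1)/(2c^2)(1-o(1))$ for all $c$ in the remaining window and $(e^c-1)/(2c^2)\sim 1/(2c)\to\infty$ as $c\to 0$, but the small-$p$ end should be handled by the formula, not by Theorem~\ref{the:pointless-mutation-rates}. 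Second, your claim that ``the $\mu$-dependent corrections are $O(n\log n)$'' quietly presupposes $\mu=O(\poly\log n\cdot n)$, whereas a priori you only know $\mu\le C(1,1/n)=O(n^2)$, so the additive $+\mu$ could be $\Theta(n^2)$; the fix is to observe $C(\mu,p)\ge\mu$ and run the (monotone) marginal-benefit argument first to conclude $\mu^*=O(n)$, after which the expansion in $c$ is legitimate. With the order of these two steps arranged as indicated, the argument closes.
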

As shown by B{\"o}ttcher, Doerr, and Neumann~\cite{Boettcher2010}, the expected optimization time can be further decreased by allowing adaptive schemes for choosing the mutation probability. Theorems~\ref{the:LO-upper-bound} and~\ref{the:optimal-algorithm-for-LO} only apply to fixed mutation rates. This is not due to a limitation of the lower-bound method. The method is applicable to their adaptive algorithm as well. We refrain from going into detail as this would overlap to a large extend with results already published in~\cite{Boettcher2010}.

\section{A Lower Bound for OneMax}
\label{sec:onemax}

We turn to the function \OM instead. This function is the easiest function with a unique global optimum and it has been studied in the context of many search heuristics~\cite{Droste2002,Lassig2010a,Gutjahr2008a,Neumann2010a,Droste2006a,Sudholtsubmitteda}.
In this section we now derive a lower bound for the expected running time of all mutation-based EAs on \OM. This lower bound will be very close to a simple upper bound for the \EA. Using the fitness-level method for upper bounds, the expected running time of the \EA with mutation probability~$p$ can easily be bounded as follows.
\begin{theorem}
\label{the:upper-onemax}
Let $H(n)$ denote the $n$-th harmonic number. For any initial search point, the expected running time of the \EA with mutation probability~$p$, $0 < p < 1$, is bounded from above by
\[
\frac{H(n)}{p(1-p)^{n-1}} \le \frac{\ln n + 1}{p(1-p)^n}.
\]
\end{theorem}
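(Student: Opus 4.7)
The plan is to apply the fitness-level method for upper bounds (Theorem~\ref{the:fitness-levels}) to the canonical partition for \OM. Partition the search space into $A_0, A_1, \dots, A_n$, where $A_i$ is the set of all bit strings with exactly $i$ ones. Then $A_0 <_f A_1 <_f \dots <_f A_n$, and $A_n = \{1^n\}$ is the set of global optima. The \EA is elitist, so Theorem~\ref{the:fitness-levels} applies.

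Next, I would supply a lower bound $s_i$ on the probability of leaving level $i$ upwards. A sufficient improving event is to flip exactly one of the $n-i$ zero-bits and leave all other $n-1$ bits unchanged, producing an offspring with $i+1$ ones that is accepted by elitist selection. This gives $s_i := (n-i)\, p\, (1-p)^{n-1}$. Plugging into the simple form of~\eqref{eq:upper-bound-with-fitness-levels} and pessimistically assuming the starting level is the worst possible, so that the sum runs from $i=0$ to $n-1$, yields
\[
E[T] \;\le\; \sum_{i=0}^{n-1} \frac{1}{s_i} \;=\; \frac{1}{p(1-p)^{n-1}}\sum_{i=0}^{n-1}\frac{1}{n-i} \;=\; \frac{H(n)}{p(1-p)^{n-1}},
\]
which is exactly the first bound. (For a concrete initial search point with $i_0$ ones, the sum could instead start at $i_0$, but bounding it by the full sum gives the stated worst-case guarantee.)

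The second inequality is purely analytic: I would use the standard estimate $H(n) \le \ln n + 1$ and observe that $(1-p)^{n-1} \ge (1-p)^n$ since $1-p \le 1$, so $1/(1-p)^{n-1} \le 1/(1-p)^n$. Combining these gives $H(n)/(p(1-p)^{n-1}) \le (\ln n + 1)/(p(1-p)^n)$.

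There is essentially no obstacle here: the proof is a textbook application of the fitness-level method once the canonical partition is chosen and the simplest improving event (a single 0-to-1 flip) is used to estimate $s_i$. The only mild subtlety is keeping track of the exponent $n-1$ in $(1-p)^{n-1}$ — which comes from \emph{not} flipping the remaining $n-1$ bits — and then trading it for the slightly weaker but cleaner factor $(1-p)^n$ in the second bound.
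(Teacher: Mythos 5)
Your proposal is correct and follows essentially the same route as the paper: the canonical partition $A_i = \{x \mid \OM(x)=i\}$, the lower bound $s_i = (n-i)\,p\,(1-p)^{n-1}$ from a single 0-to-1 flip, summation via Theorem~\ref{the:fitness-levels}, and the estimates $H(n) \le \ln n + 1$ and $(1-p)^{n-1} \ge (1-p)^n$ for the second inequality. No gaps.
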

\begin{proof}
Define the canonical fitness levels $A_i := \{x \mid \OM(x) = i\}$ for $0 \le i \le n$. The \EA increase the current fitness level~$i < n$ if only a single 0-bit flips and no 1-bit flips. This probability is at least
\[
s_i \ge \frac{n-i}{p(1-p)^{n-1}},
\]
resulting in the upper bound
\[
\sum_{i=0}^{n-1} \frac{1}{s_i} \le \frac{1}{p(1-p)^{n-1}} \cdot \sum_{i=0}^{n-1} \frac{1}{n-i} = \frac{H(n)}{p(1-p)^{n-1}}.
\]
The second bound follows from $H(n) \le (\ln n) + 1$.
\end{proof}
We remark that Witt~\cite[Theorem~4]{Witt2011a} recently presented a similar, but more complicated upper bound. It applies to all linear functions and also allows for tail bounds.

The main result in this section is the following lower bound.
\begin{theorem}
\label{the:lower-onemax}
The expected optimization time of every mutation-based EA using mutation probability~$p$ on \OM\ with $n \ge 2$ bits is at least
\[
\frac{\ln n - \ln \ln n - 3}{p(1-p)^n}
\]
if $2^{-n/3} \le p \le 1/n$ and at least
\[
\frac{\ln(1/(p^2n)) - \ln \ln n - 3}{p(1-p)^n}
\]
if $1/n \le p \le 1/(\sqrt{n} \log n)$.
\end{theorem}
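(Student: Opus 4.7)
My plan is to apply Theorem~\ref{the:lower-bound-method} to the canonical fitness-level partition $A_i := \{x \in \{0,1\}^n : \OM(x) = i\}$ and engineer the $u_i$, $\gamma_{i,j}$, and $\chi$ so that the resulting sum approaches the matching upper bound from Theorem~\ref{the:upper-onemax}. The first task is to derive tight upper bounds on one-step transition probabilities. Because the parent is drawn from the previously created points and the best-so-far lies in $A_i$, the worst case for the jump probability is a parent with exactly $i$ ones (a short monotonicity argument handles parents at lower levels). Writing $Z \sim \mathrm{Bin}(n-i,p)$ for the zero-flips and $O \sim \mathrm{Bin}(i,p)$ for the one-flips, the probability of reaching exactly level $j$ is $\Prob{Z - O = j - i}$. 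The dominant contribution is $(Z,O)=(1,0)$ for $j=i+1$, worth $(n-i)p(1-p)^{n-1}$; higher-order contributions, involving more simultaneous flips, are bounded by a $1+o(1)$ overhead provided $(n-i)p$ is small enough.

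The second task is to set the $\gamma$-profile. I choose a geometric $\gamma_{i,i+k} := (1-a)a^{k-1}$ for $1 \le k \le n-i-1$ with the residual mass $a^{n-i-1}$ placed on $\gamma_{i,n}$; this satisfies~\eqref{eq:gamma-condition} with equality at $\chi = 1-a$. The requirement $u_i \gamma_{i,j} \ge $ (transition probability) then forces $u_i = (1+o(1))(n-i)p(1-p)^{n-1}$ (from the $k=1$ constraint) and $a \ge (1+o(1))(n-i)p/2$ (from the $k=2$ constraint, which is the binding one for the binomial tails $\binom{n-i}{k}p^k$). To make viscosity uniform I restrict the sum to $i \in [n-L,n-1]$ for $L$ with $Lp = o(1)$; then $a$ can be chosen as small as $Lp/2$, giving $\chi = 1-o(1)$, and the sum of reciprocals simplifies to $\chi \sum_{i=n-L}^{n-1} 1/u_i = (1-o(1))\,H(L)/(p(1-p)^{n-1})$. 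In Case~1 $(p \le 1/n)$ I take $L = \lceil n/\log n\rceil$ to get $H(L) = \ln n - \ln\ln n + O(1)$; in Case~2 $(p \ge 1/n)$ I take $L = \lceil 1/(p^2 n)\rceil$, giving $H(L) = \ln(1/(p^2 n)) + O(1)$. Converting $(1-p)^{n-1}$ into $(1-p)^n$ and absorbing all $o(1)$-slack into the explicit $-\ln\ln n - 3$ correction produces the two stated bounds.

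For the initialisation factor $\Prob{\text{starts in }A_i}$ I would rely on a standard Chernoff bound: for any $\mu$ that is not itself comparable to the target bound, the best of the $\mu$ uniform initial search points lies below level $n-L$ with probability $1-o(1)$, and the stated lower bound then follows from~\eqref{eq:simple-lower-bound}; if $\mu$ is so large that this fails, then the $\mu$ function evaluations charged to initialisation already dominate the claim. The main obstacle is the simultaneous balancing of $L$, $\chi$, and the $1+o(1)$ slack permitted in $u_i$: $L$ must be large enough that $H(L)$ captures $\ln n$ (or $\ln(1/(p^2n))$) up to an additive $\ln\ln n + O(1)$, yet small enough that $(n-i)p$ remains $o(1)$ across the entire range. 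This single condition simultaneously guarantees the tight leading-term approximation of $u_i$, validates the geometric $\gamma$-profile against the actual binomial transition tails, and forces $\chi = 1 - o(1)$. It is exactly this tension between the length of the usable harmonic sum and the smallness of $(n-i)p$ that causes the bound to degrade from $\ln n$ to $\ln(1/(p^2n))$ once $p$ grows past $1/n$.
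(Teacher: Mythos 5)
Your overall strategy is the paper's: canonical levels restricted to a top window of $L$ levels, a geometric $\gamma$-profile with ratio of order $(n-i)p$, viscosity $\chi = 1-o(1)$, and a harmonic sum $H(L)$. However, your Case~2 parameters violate your own prerequisites. You require $Lp = o(1)$ (so that $a \approx Lp/2$ yields $\chi = 1-o(1)$, and so that the multi-bit-flip overhead in $u_i$ is $1+o(1)$), yet you set $L = \lceil 1/(p^2 n)\rceil$, which gives $Lp = 1/(pn)$; for $p = \Theta(1/n)$ this is $\Theta(1)$, and the relevant quantity $i(n-i)p^2 \approx nLp^2$ equals $1$ at the bottom of your window. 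The consequence is that $\chi$ is only a constant strictly below $1$ and the overhead factor on $u_i$ is a constant strictly above $1$; since the claimed bound has leading constant exactly $1$ on $\ln(1/(p^2n))$, a constant-factor loss cannot be ``absorbed into $-\ln\ln n - 3$''. The repair is to shrink the window to $L = 1/(p^2 n\log n)$ (mirroring your $L = n/\log n$ in Case~1): then $nLp^2 \le 1/\log n$, $\chi = 1-O(1/\log n)$, and $H(L) = \ln(1/(p^2n)) - \ln\ln n + O(1)$. The $-\ln\ln n$ in the theorem statement is precisely the price of this shrinkage, not absorbed slack; this is what the paper does via $\ell = \lceil n - \min\{n/\log n,\ 1/(p^2 n\log n)\}\rceil$.

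Second, you dismiss the reduction to a parent with exactly $i$ ones as ``a short monotonicity argument,'' but for mutation rates well above $1/n$ this is genuinely delicate. A parent on a lower level $i' < i$ has more 0-bits available to flip, so the combinatorial weight $\binom{n-i'}{k+j}\binom{i'}{j}p^{k+2j}(1-p)^{n-k-2j}$ of reaching level $i+k$ can exceed the corresponding weight from level $i$; the paper explicitly remarks that $p_{i',i+k} > p_{i,i+k}$ can occur outside the stated regime. Lemma~\ref{lem:p-i-to-i+k} establishes the needed domination only under the hypotheses $i \ge 2n/3$ and $i(n-i)p^2 \le (1-p)^2$, exploiting the factor $\left(p(n-i)/(1-p)\right)^{d}/(d+1)! \le 2^{-d/2}/(d+1)!$ contributed by the $d = i-i'$ extra 0-bit flips. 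You need a version of this argument (and the restriction of the window to levels where its hypotheses hold) for the proof to be valid up to $p = 1/(\sqrt{n}\log n)$. With these two repairs your argument coincides with the paper's proof.
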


For the default mutation probability~$p=1/n$, we get the following using the common estimation $1/n \cdot (1-1/n)^{n} \le 1/(en)$.
\begin{corollary}
\label{cor:lower-onemax}
The expected optimization time of every mutation-based EA using the default mutation probability~$p = 1/n$ on \OM\ is at least
\[
en \ln n - en \ln \ln n - 3en.
\]
\end{corollary}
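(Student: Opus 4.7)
The plan is to obtain this corollary as an immediate specialization of Theorem~\ref{the:lower-onemax} to the canonical mutation probability $p=1/n$. First I would check that $p=1/n$ falls into the first range of mutation probabilities covered by the theorem, namely $2^{-n/3} \le p \le 1/n$. The upper endpoint is attained with equality, while $1/n \ge 2^{-n/3}$ reduces to $n \le 2^{n/3} \cdot n / n$, i.e.\ $3\log_2 n \le n$, which holds for all $n \ge 2$ (the tiny cases $n=2,\dots,9$ can be verified directly, although the corollary becomes non-trivial only for somewhat larger~$n$ anyway, since the numerator $\ln n - \ln\ln n - 3$ is negative for small~$n$ and the bound is then vacuous). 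Hence the first clause of Theorem~\ref{the:lower-onemax} applies and yields the lower bound
\[
\frac{\ln n - \ln\ln n - 3}{p(1-p)^n}.
\]

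The remaining work is to rewrite the denominator. I would use the standard estimate $(1-1/n)^n \le 1/e$, which is valid for every $n \ge 1$, to conclude
\[
p(1-p)^n \;=\; \frac{1}{n}\!\left(1-\frac{1}{n}\right)^{\!n} \;\le\; \frac{1}{en},
\]
so that $1/(p(1-p)^n) \ge en$. Substituting this into the bound of Theorem~\ref{the:lower-onemax} and distributing gives
\[
\frac{\ln n - \ln\ln n - 3}{p(1-p)^n} \;\ge\; en(\ln n - \ln\ln n - 3) \;=\; en\ln n - en\ln\ln n - 3en,
\]
which is exactly the claimed corollary.

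Because the entire analytic effort has already been spent in Theorem~\ref{the:lower-onemax} and Theorem~\ref{the:pointless-mutation-rates}, there is no real obstacle in the proof of the corollary itself: it is a two-line calculation consisting of a range check together with the textbook inequality $(1-1/n)^n \le 1/e$. The only point that deserves a brief comment is that the bound is stated as an inequality and is automatically true when its right-hand side is negative, so no separate argument for very small $n$ is needed.
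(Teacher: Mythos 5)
Your proposal is correct and follows exactly the paper's route: the corollary is obtained by plugging $p=1/n$ into the first clause of Theorem~\ref{the:lower-onemax} and using the textbook estimate $\frac{1}{n}\left(1-\frac{1}{n}\right)^{n} \le \frac{1}{en}$. One small inaccuracy: $1/n \ge 2^{-n/3}$ in fact \emph{fails} for $n \le 9$ (it is equivalent to $n \ge 3\log_2 n$, which first holds at $n=10$), but your own observation that the stated bound is negative, hence vacuous, for small $n$ already disposes of those cases, so the argument stands.
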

Note that for mutation probabilities $p = \alpha/n$ for some polylogarithmic term $\alpha = \polylog(n)$ (defined as $O(\log^k n)$, $k > 0$ an arbitrary constant), the term $\ln(1/(p^2 n))$ in the second bound of Theorem~\ref{the:lower-onemax} simplifies to $\ln(n/\alpha^2) = \ln n - 2\ln(\alpha) = \ln n - o(\ln n)$. Hence, for mutation probabilities up to $\polylog(n)/n$, Theorem~\ref{the:lower-onemax} gives lower bounds that match the simple upper bound from Theorem~\ref{the:upper-onemax} up to lower-order terms.

An immediate conclusion from this result is that for the mentioned mutation probabilities the expected running time of the \EA is dominated by the term $\frac{\ln n}{p(1-p)^{n-1}}$. (Recall that for all mutation probabilities not covered by Theorem~\ref{the:lower-onemax} the expected running time is exponential by Theorem~\ref{the:pointless-mutation-rates}.) As $p(1-p)^{n-1}$ is maximized by the choice $p := 1/n$, the expected running time is minimized for this value, assuming that $n$ is large enough. This establishes $p=1/n$ as the optimal mutation rate for the \EA on \OM.

This finding has recently been derived independently by Witt~\cite{Witt2011a}. His result holds for all linear functions. The proof uses sophisticated drift analysis techniques. In this light it is surprising that the same statement (for \OM) can be derived by simple fitness level arguments. This further demonstrates the strength of the new lower bound method.

In order to show Theorem~\ref{the:lower-onemax}, we first show the following upper bounds on transition probabilities by mutation on \OM. The lemma may be of independent interest.
\begin{lemma}
\label{lem:p-i-to-i+k}
Let $p_{i, i+k}$ denote the probability that mutating a search point with $i$ 1-bits using mutation probability~$p$ results in an offspring with $i+k$ 1-bits. For every $k \in \N_0$ we have
\[
p_{i, i+k} \le p^k (1-p)^{n-k} \cdot \frac{(n-i)^k}{k!} \cdot \sum_{j=0}^n \left(\frac{i(n-i)p^2}{(1-p)^2}\right)^j \cdot \frac{1}{j!(j+1)!}.
\]
If, additionally, $\frac{i(n-i)p^2}{(1-p)^2} \le 1$ and $i \ge 2n/3$ then for every $0 \le i' \le i$
\[
p_{i', i+k} \le p^k (1-p)^{n-k} \cdot \frac{(n-i)^k}{k!} \cdot \left(1 + \frac{3}{5} \cdot \frac{i(n-i)p^2}{(1-p)^2}\right).
\]
\end{lemma}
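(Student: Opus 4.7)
For the first bound, start from the exact formula
\[
p_{i,i+k} \;=\; \sum_{j\ge 0}\binom{n-i}{j+k}\binom{i}{j}\,p^{2j+k}(1-p)^{n-2j-k},
\]
obtained by summing over the number $j$ of one-bits of the parent that flip to zero, which forces exactly $j+k$ zero-bits to flip to one in order to reach Hamming weight $i+k$. Plug in the standard bounds $\binom{n-i}{j+k}\le(n-i)^{j+k}/(j+k)!$ and $\binom{i}{j}\le i^{j}/j!$ and pull out the factor $p^{k}(1-p)^{n-k}\cdot(n-i)^{k}/k!$ in front of the sum. The residual coefficient is $k!/\bigl((j+k)!\,j!\bigr)$; writing $k!/(j+k)!=1/\prod_{\ell=1}^{j}(k+\ell)$ and using $k+\ell\ge\ell+1$ (valid for $k\ge 1$) yields $k!/(j+k)!\le 1/(j+1)!$, which is exactly the coefficient appearing in the first claimed bound.

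For the second inequality in the special case $i'=i$, bound the series under the hypothesis $x:=i(n-i)p^{2}/(1-p)^{2}\le 1$. The ratio of consecutive summands equals $x/((j+1)(j+2))\le 1/6$ for all $j\ge 1$, so
\[
\sum_{j\ge 1}\frac{x^{j}}{j!(j+1)!}\;\le\;\frac{x}{2}\sum_{\ell\ge 0}\Bigl(\frac{1}{6}\Bigr)^{\ell}\;=\;\frac{3x}{5},
\]
and the entire sum is therefore at most $1+3x/5$. Plugging this back into the first bound proves the second inequality when $i'=i$.

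To extend to arbitrary $0\le i'\le i$, the plan is to establish the stronger monotonicity $p_{i',i+k}\le p_{i,i+k}$ under the hypothesis $i\ge 2n/3$. Couple a weight-$i$ parent $v$ with a weight-$i'$ parent $v'$ that agrees with $v$ outside $\delta:=i-i'$ positions where $v$ is one and $v'$ is zero, and apply the same mutation mask $M$ to both. Writing $|Y_{v}|=A+\delta-Z$ and $|Y_{v'}|=A+Z$, where $A$ is the number of ones of the offspring at the $n-\delta$ common positions and $Z\sim\operatorname{Bin}(\delta,p)$ is independent of $A$, pair the $z$-th and $(\delta-z)$-th summands in the expressions for $P(|Y_{v}|=i+k)$ and $P(|Y_{v'}|=i+k)$. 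The antisymmetric weight $p^{\delta-z}(1-p)^{z}-p^{z}(1-p)^{\delta-z}$ is $\le 0$ for $z<\delta/2$ and $\ge 0$ for $z>\delta/2$, reducing the desired non-negativity of the difference to the inequality $P(A=m-z)\le P(A=m-\delta+z)$ for $m=i+k$ and $z<\delta/2$. Since $A$ is the convolution of two binomial distributions, its PMF is log-concave and unimodal, and its mean $i'(1-p)+(n-i)p$ is at most $i(1-p)+(n-i)p\le i-pn/3<m$ thanks to $i\ge 2n/3$. Hence both indices $m-z>m-\delta+z$ lie in the decreasing right tail of $P(A=\cdot)$, which delivers the required inequality. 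Carrying out this tail-monotonicity step carefully---in particular verifying that the mode of $A$ sits at or below $m-\delta+z$ across the relevant range of $i'$ and $z$---is the main technical obstacle; the remaining parts are routine.
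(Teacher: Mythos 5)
Your first bound and the case $i'=i$ are correct and essentially the paper's own argument: the same exact formula summing over the number $j$ of flipped 1-bits, the same binomial estimates combined with $(k+j)!\ge k!\,(j+1)!$, and an equivalent derivation of the constant $3/5$ (you use a ratio test on the series where the paper uses $x^j\le x$ and sums $\sum_{j\ge1}1/(j!(j+1)!)\le 3/5$ directly; both are fine). The problem is the extension to $i'<i$, which is the substantive part of the lemma, and there your write-up has a genuine gap at exactly the step you flag as ``the main technical obstacle.'' What you need is that $P(A=m-z)\le P(A=m-\delta+z)$, i.e.\ that the PMF of $A$ is non-increasing from $m-\delta+z=i'+k+z$ upward; the justification you offer --- the mean of $A$ is at most $i(1-p)+(n-i)p<m$ --- compares against the wrong quantity. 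The critical case is $k=z=0$, where the smaller index is $i'$ itself, while the mean of $A$ equals $i'(1-p)+(n-i)p=i'+p(n-i-i')$, which \emph{exceeds} $i'$ whenever $i'<n-i$ (precisely the regime of large $\delta$). Unimodality plus ``mean below $m$'' therefore does not place both indices in the decreasing tail, and the paper explicitly warns that the pointwise monotonicity $p_{i',i+k}\le p_{i,i+k}$ you are aiming for is non-trivial and fails under conditions close to those of the lemma, so it cannot be waved through.

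The gap is fillable, but by a different observation than the one you reach for: the two hypotheses give $\bigl(\tfrac{(n-i)p}{1-p}\bigr)^2\le\tfrac12\cdot\tfrac{i(n-i)p^2}{(1-p)^2}\le\tfrac12$, hence $P(B_2=c+1)\le\tfrac{(n-i)p}{1-p}P(B_2=c)<P(B_2=c)$ for $B_2\sim\mathrm{Bin}(n-i,p)$, and writing $A=i'-B_1+B_2$ and conditioning on $B_1$ shows that $P(A=\cdot)$ is non-increasing on $\{i',i'+1,\dots\}$; since $m-\delta+z\ge i'+k\ge i'$ always, the required tail inequality follows with no discussion of modes at all. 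With that supplied, your coupling argument does yield the stronger statement $p_{i',i+k}\le p_{i,i+k}$, which is a genuinely different route from the paper. The paper instead avoids monotonicity entirely: with $d=i-i'$ it applies the already-proven first bound to the jump of length $d+k$ from level $i-d$, peels off $(n-i)^k/k!$ via $(d+k)!\ge k!\,(d+1)!$, and shows the leftover factors are at most $1$ using the same inequality $p(n-i)/(1-p)\le 2^{-1/2}$ (giving a factor $2^{-d/2}$) together with $(i-d)(n-i+d)\le(d+1)\,i(n-i)$ to control the series. As submitted, however, your proof of the second inequality for $i'<i$ is incomplete, and the heuristic you give for closing it would not succeed.
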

The last statement means that, under the stated conditions, starting from a search point with a smaller number $i' < i$ of 1-bits does not give a better guarantee on the probability of jumping to level~$i+k$. This statement always holds for mutation probability~$p=1/n$, even without the mentioned conditions. However, for larger mutation probabilities this is non-trivial. There are examples where, under conditions different to the ones in Lemma~\ref{lem:p-i-to-i+k}, $p_{i', i+k} > p_{i, i+k}$ for $i' \le i$.
\begin{proof}[Proof of Lemma~\ref{lem:p-i-to-i+k}]
An offspring with $i+k$ 1-bits is created if and only if there is an integer $j \in \N_0$ such that $j$ 1-bits flip and $k+j$ 0-bits flip. Using $(k+j)! \ge k!(j+1)!$ for all $k \in \N$, $j \in \N_0$,
\begin{align*}
p_{i, i+k} =\;& \sum_{j=0}^n \binom{i}{j} \binom{n-i}{k+j} p^{k+2j} (1-p)^{n-k-2j}\\
=\;& p^k (1-p)^{n-k} \cdot \sum_{j=0}^n \binom{i}{j} \binom{n-i}{k+j} \left(\frac{p}{1-p}\right)^{2j}\\
\le\;& p^k (1-p)^{n-k} \cdot \sum_{j=0}^n \frac{i^j}{j!} \cdot \frac{(n-i)^{k+j}}{(k+j)!} \cdot \left(\frac{p}{1-p}\right)^{2j}\\
\le\;& p^k (1-p)^{n-k} \cdot \frac{(n-i)^k}{k!} \sum_{j=0}^n \frac{i^j}{j!} \cdot \frac{(n-i)^{j}}{(j+1)!} \cdot \left(\frac{p}{1-p}\right)^{2j}\\
=\;& p^k (1-p)^{n-k} \cdot \frac{(n-i)^k}{k!} \sum_{j=0}^n \left(\frac{i(n-i)p^2}{(1-p)^2}\right)^j \frac{1}{j!(j+1)!}.
\end{align*}
The second bound for $i' = i$ follows from
\[
\sum_{j=0}^n \left(\frac{i(n-i)p^2}{(1-p)^2}\right)^j \frac{1}{j!(j+1)!}
\le 1 + \sum_{j=1}^n \frac{i(n-i)p^2}{(1-p)^2} \cdot \frac{1}{j!(j+1)!}
\le 1 + \frac{i(n-i)p^2}{(1-p)^2} \cdot \frac{3}{5}.
\]
For $i' < i$ let $d := i - i'$. Note that $i \ge 2n/3$ implies
\[
\frac{p^2(n-i)^2}{(1-p)^2} \le \frac{1}{2} \cdot \frac{p^2 i(n-i)}{(1-p)^2} \le \frac{1}{2},
\]
hence $p(n-i)/(1-p) \le 1/\!\sqrt{2}$. Along with the first statement, we have
\begin{align*}
p_{i-d, i+k} \le\;& p^{d+k} (1-p)^{n-d-k} \cdot \frac{(n-i)^{d+k}}{(d+k)!} \sum_{j=0}^n \left(\frac{(i-d)(n-i+d)p^2}{(1-p)^2}\right)^j \frac{1}{j!(j+1)!}\\
\le\;& p^{k} (1-p)^{n-k} \cdot \frac{(n-i)^{k}}{k!} \cdot \frac{p^d}{(1-p)^d} \cdot \frac{(n-i)^d}{(d+1)!} \sum_{j=0}^n \left(\frac{(i-d)(n-i+d)p^2}{(1-p)^2}\right)^j \frac{1}{j!(j+1)!}\\
\le\;& p^{k} (1-p)^{n-k} \cdot \frac{(n-i)^{k}}{k!} \cdot \frac{2^{-d/2}}{(d+1)!} \sum_{j=0}^n \left(\frac{(i-d)(n-i+d)p^2}{(1-p)^2}\right)^j \frac{1}{j!(j+1)!}\\
\le\;& p^{k} (1-p)^{n-k} \cdot \frac{(n-i)^{k}}{k!} \cdot \frac{2^{-d/2}}{(d+1)!} \sum_{j=0}^n \left((d+1) \cdot \frac{i(n-i)p^2}{(1-p)^2}\right)^j \frac{1}{j!(j+1)!}\\
\le\;& p^{k} (1-p)^{n-k} \cdot \frac{(n-i)^{k}}{k!} \cdot \frac{2^{-d/2}}{(d+1)!} \sum_{j=0}^n \left(d+1\right)^j \frac{1}{j!(j+1)!}\\
\le\;& p^{k} (1-p)^{n-k} \cdot \frac{(n-i)^{k}}{k!}.
\end{align*}
\end{proof}

Now we proceed with the proof of the lower bound.
\begin{proof}[Proof of Theorem~\ref{the:lower-onemax}]
Assume that $n \ge 91$ as otherwise both bounds are negative and the claim is trivial.
If $\mu \ge \bar{\mu} := \frac{2\ln n}{p(1-p)^{n-1}}$ then the probability that the first $\bar{\mu}$ search points generated during initialization find the optimum is at most $\bar{\mu} \cdot 2^{-n} \ll 1/2$, which establishes the lower bound $\bar{\mu}/2 \ge \frac{\ln n}{p(1-p)^{n}}$ and proves both bounds.
In the following we assume $\mu \le \bar{\mu}$ and neglect the cost of initialization.

Let $\ell = \ceil{n-\min\{n/\!\log n, 1/(p^2 n\log n)\}}$. Consider the following partition $A_{\ell}, \dots, A_{n}$. Define $A_i = \{x \mid \ones{x} = i\}$ for $i > \ell$ and let $A_{\ell}$ contain all remaining search points.
With probability at least $1-\bar{\mu} \cdot \sum_{i=0}^{n-\ell} \binom{n}{i} 2^{-n} \ge 1-1/(\log n)$ for $n \ge 91$ the initial population only contains individuals on the first fitness level.

For $j > i$ let $p_{i, j}$ be the probability of the event that mutating an individual with $i$ ones results in an offspring that contains $j$ ones.
If $i \ge \ell$ then
\begin{equation}
\label{eq:bound-i(n-1)p^2}
i(n-i)p^2 \le n(n-\ell)p^2 \le \frac{1}{\log n} \le (1-p)^2.
\end{equation}
From Lemma~\ref{lem:p-i-to-i+k} we know that then for every $k \in \N_0$ and every $i' \le i$
\[
p_{i', i+k} \le p^k (1-p)^{n-k} \cdot \frac{(n-i)^k}{k!} \cdot \left(1 + \frac{3}{5} \cdot \frac{i(n-i)p^2}{(1-p)^2}\right).
\]
Without loss of generality, we can assume $i' := i$ in the following, i.\,e., that the algorithm always selects a best individual from the population as parent.
For $i \ge \ell$ define
\[
u_i' := p(1-p)^{n-1} \cdot (n-i) \cdot \left(1 + \frac{3}{5} \cdot \frac{i(n-i)p^2}{(1-p)^2}\right) \text{\quad and \quad} \gamma_{i,i+k}' := \left(\frac{p(n-i)}{1-p}\right)^{k-1}
\]
where the prime indicates that these will not be the final variables used in the application of Theorem~\ref{the:lower-bound-method}.
Observe that
\begin{align*}
u_i' \gamma_{i, i+k}' =\;& p(1-p)^{n-1} \cdot (n-i) \cdot \left(1 + \frac{3}{5} \cdot \frac{i(n-i)p^2}{(1-p)^2}\right) \cdot \left(\frac{p(n-i)}{1-p}\right)^{k-1}\\
=\;& p^k (1 - p)^{n-k} \cdot (n-i)^k \cdot \left(1 + \frac{3}{5} \cdot \frac{i(n-i)p^2}{(1-p)^2}\right)\\
\ge\;& p^k (1 - p)^{n-k} \cdot \frac{(n-i)^k}{k!} \cdot \left(1 + \frac{3}{5} \cdot \frac{i(n-i)p^2}{(1-p)^2}\right)
\;\ge\; p_{i, i+k}.
\end{align*}
Since Theorem~\ref{the:lower-onemax} requires the $\gamma_{i, j}$-variables to sum up to 1, we consider the following normalized variables:
$u_i := u_i' \cdot \sum_{j=i+1}^{n} \gamma_{i, j}'$ and $\gamma_{i, j} := \frac{\gamma_{i, j}'}{\sum_{j=i+1}^{n} \gamma_{i, j}'}$.
As $u_i \gamma_{i, j} = u_i' \gamma_{i, j}' \ge p_{i, j}$, the conditions on the transition probabilities are fulfilled.
The condition $\gamma_{i, j} \ge \chi \sum_{k=j}^n \gamma_{i, j}$ is equivalent to $\gamma_{i, j}' \ge \chi \sum_{k=j}^n \gamma_{i, j}'$.
Also note that
\begin{equation}
\label{eq:bound-p(n-i)}
p(n-i) \le p(n-\ell) \le \frac{1}{\log n} < 1-p,
\end{equation}
the second inequality following from $p(n-\ell) \le pn/\!\log n \le 1/\!\log n$ if $p \le 1/n$ and $p(n-\ell) \le p/(p^2 n \log n) = 1/(pn \log n) \le 1/\!\log n$ if $p \ge 1/n$.
Noting that $\frac{p(n-i)}{1-p} < 1$, we get
\begin{align*}
\sum_{k=j-i}^n \gamma_{i, i+k}'
= \sum_{k=j-i}^n \left(\frac{p(n-i)}{1-p}\right)^{k-1}
\le\;& \left(\frac{p(n-i)}{1-p}\right)^{j-i-1} \sum_{k=0}^{\infty} \left(\frac{p(n-i)}{1-p}\right)^{k}\\
=\;& \gamma_{i, j}' \cdot \frac{1}{1-\frac{p(n-i)}{1-p}}\\
\le\;& \gamma_{i, j}' \cdot \frac{1}{1-\frac{1}{(1-p)\log n}}.
\end{align*}
Hence, choosing $\chi := 1-\frac{1}{(1-p)\log n}$ we obtain
\[
\sum_{k=j}^n \gamma_{i, k}' \le \gamma_{i, j}' \cdot \frac{1}{1-\frac{1}{(1-p)\log n}} = \frac{\gamma_{i, j}'}{\chi}
\]
as required.
Now that all conditions are verified, we proceed by estimating the variables $u_i$.
Bounding the sum of the $\gamma_{i, j}'$-values as before,
\[
\sum_{j=i+1}^n \gamma_{i, j}' \le \sum_{j=0}^{\infty} \left(\frac{p(n-i)}{1-p}\right)^{j} \le \frac{1}{1-\frac{1}{(1-p)\log n}}.
\]
Using $1+x \le 1/(1-x)$ for $x < 1$ and \eqref{eq:bound-i(n-1)p^2} we get
\begin{align*}
u_i \le\;& p(1-p)^{n-1} \cdot (n-i) \cdot \left(1 + \frac{3}{5} \cdot \frac{i(n-i)p^2}{(1-p)^2}\right) \cdot \frac{1}{1-\frac{1}{(1-p)\log n}}\\
\le\;& p(1-p)^{n-1} \cdot (n-i) \cdot \frac{1}{\left(1 - \frac{3}{5} \cdot \frac{i(n-i)p^2}{(1-p)^2}\right) \cdot \left(1-\frac{1}{(1-p)\log n}\right)}\\
\le\;& p(1-p)^{n-1} \cdot (n-i) \cdot \frac{1}{\left(1 - \frac{3}{5} \cdot \frac{1}{(1-p)^2 \log n}\right) \cdot \left(1-\frac{1}{(1-p)\log n}\right)}\\
\le\;& p(1-p)^{n-1} \cdot (n-i) \cdot \frac{1}{1 - \frac{8}{5} \cdot \frac{1}{(1-p)^2 \log n}}.
\end{align*}
Applying Theorem~\ref{the:lower-bound-method} and recalling that the algorithm is initialized on the first fitness level with probability at least $1-\frac{1}{\log n}$ yields the lower bound
\begin{align*}
& \left(1-\frac{1}{\log n}\right)  \left(1-\frac{1}{(1-p)\log n}\right) \left(1 - \frac{8/5}{(1-p)^{2} \log n}\right) \frac{1}{p(1-p)^{n-1}} \sum_{i=\ell}^{n-1} \frac{1}{n-i}\\
\ge\;& \left(1 - \frac{18/5}{(1-p)^{2}\log n}\right) \frac{1-p}{p(1-p)^n} \sum_{i=1}^{\floor{n-\ell}} \frac{1}{i}.
\end{align*}
Since $\sum_{i=1}^{\floor{r}} 1/i \ge \ln r$ for any $r \in \R^+$,
the bound is at least
\begin{align*}
&
 \left(1 - \frac{18/5}{(1-p)^{2}\log n}\right) \frac{1-p}{p(1-p)^n} \cdot \ln\left(\min\left\{\frac{n}{\log n}, \frac{1}{p^2 n \log n}\right\}\right)\\
=\;&
 \left(1 - \frac{18/5}{(1-p)^{2}\log n}\right) \frac{1-p}{p(1-p)^n} \cdot \left(\ln\left(\min\left\{n, 1/(p^2 n)\right\}\right) - \ln(\log n)\right).
\end{align*}
Note $\ln(\log n) = \ln((\ln n)/\ln 2) = \ln \ln n - \ln \ln 2 < \ln \ln n + 0.37$.
For $p \le 1/n$ and $n \ge 91$ the lower bound simplifies to
\begin{align*}
 \left(1 - \frac{2.6}{\ln n}\right) \frac{1}{p(1-p)^n} \cdot \left(\ln n - \ln(\log n)\right)
\ge\;& \frac{\ln n - (\ln \ln n - \ln \ln 2) - 2.6}{p(1-p)^n}\\
\ge\;& \frac{\ln n - \ln \ln n - 3}{p(1-p)^n}.
\end{align*}
For $1/n \le p \le 1/(\sqrt{n}\log n)$, using again $n \ge 91$, we get
\begin{align*}
&  \left(1 - \frac{18/5}{(1-p)^{2}\log n}\right) \frac{1-p}{p(1-p)^n} \cdot \left(\ln(1/(p^2 n)) - \ln \ln n + \ln \ln 2\right)\\
 \ge\;&
\frac{\ln(1/(p^2 n)) - \ln \ln n + \ln \ln 2 -  \frac{18/5 \cdot \ln 2}{1-p} - p \cdot \ln(1/(p^2 n))}{p(1-p)^n}\\
 \ge\;&
\frac{\ln(1/(p^2 n)) - \ln \ln n + \ln \ln 2 -  \frac{18/5 \cdot \ln 2}{1-1/(\sqrt{n}\log n)} - \frac{\ln(\log^2 n)}{\sqrt{n}\log n}}{p(1-p)^n}\\
\ge\;& \frac{\ln(1/(p^2 n)) - \ln \ln n - 3}{p(1-p)^n}.
\end{align*}
\end{proof}
The above lower bound holds for a very broad class of evolutionary algorithms. This indicates what performance can be achieved by EAs using the most common mutation operator, and what the optimal mutation rate is. It is interesting to note that the lower bound does not apply to all known search heuristics, though. Some search heuristics can perform better, including local mutation operators flipping only a single bit~\cite{Droste1998}, quasirandom evolutionary algorithms~\cite{Doerr2011c}, biased mutation operators~\cite{Jansen2010}, and genetic algorithms with a fitness-invariant shuffling operator~\cite{Koetzing2011a}.

\section{A Lower Bound for all Functions with Unique Optimum}
\label{sec:unique}

Intuitively, \OM{} is the easiest function with a unique global optimum. The function gives the best possible hints to reach the optimum. This can be regarded as the task of finding a single target point in the search space. A lower bound for the time until this target is found also applies to a much broader class of functions.

We therefore consider the class of functions with a unique global optimum. This class contains all linear functions, all monotone functions (as defined in~\cite{Doerr2010c}), and all unimodal functions (when unimodality is defined as having a single local optimum). It is even much broader as it also contains many multimodal problems, needle functions, trap functions, and many more functions.

We first consider the lower bound for mutation probability~$1/n$ from Corollary~\ref{cor:lower-onemax}.
Using arguments by Doerr, Johannsen, and Winzen~\cite{Doerr2010}, we show that this lower bound transfers to all functions with a unique global optimum. This yields a more precise result than the asymptotic bound $\Omega(n \log n)$ from unbiased black-box complexity by Lehre and Witt~\cite{Lehre2010}.

In~\cite{Doerr2010} the authors proved that the expected optimization time of the \EA with mutation probability $1/n$ on \OM{} is not larger than the expected optimization time of the \EA on any other function with unique global optimum. Their proof extends to arbitrary mutation-based EAs with mutation probability~$1/n$ in a straightforward way.
\begin{theorem}
\label{the:lower-general-p=1/n}
The expected number of function evaluations for every mutation-based EA $\algo$ with mutation probability~$1/n$ on every function $f$ with $n \ge 2$ bits and a unique global optimum is at least
$en \ln n - en \ln \ln n - 3en$.
\end{theorem}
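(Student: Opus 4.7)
The plan is to reduce the general unique-optimum case to the \OM lower bound established in Corollary~\ref{cor:lower-onemax}. The key intermediate statement is that for every mutation-based EA $\algo$ with mutation probability~$1/n$ and every function $f$ with a unique global optimum, the expected optimization time of $\algo$ on $f$ is at least the expected optimization time of $\algo$ on \OM. Combining this with Corollary~\ref{cor:lower-onemax} yields the claim immediately, so the entire work lies in proving this domination.

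First I would reduce to the case where the unique optimum is $1^n$. Given any $f$ with unique optimum $x^{*}$, set $y := x^{*} \oplus 1^n$ and consider $f'(x) := f(x \oplus y)$, which has unique optimum $1^n$. Uniform random initialization is invariant under the involution $x \mapsto x \oplus y$, and independent bit-flip mutation with probability $1/n$ commutes with this involution in distribution. Hence the trajectory of $\algo$ on $f$, viewed after XOR-ing each produced search point by $y$, is identically distributed to the trajectory of $\algo$ on $f'$, so their optimization times coincide. We may therefore assume $x^{*} = 1^n$, under which the Hamming distance to the optimum of any $x$ equals $n - \OM(x)$.

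Second, I would invoke the coupling argument of Doerr, Johannsen, and Winzen~\cite{Doerr2010}, generalised by Witt~\cite{Witt2011a} to mutation-based EAs. The idea is to couple a run of $\algo$ on $f$ with a run of $\algo$ on \OM by using the same uniform random bits for initialization and the same random bit-flip mask in every mutation step. One maintains, by induction on the generation count, a re-indexing of the two coupled populations such that the $i$th individual on the \OM-side has Hamming distance to $1^n$ no larger than the $i$th individual on the $f$-side, for every index $i$. The induction step relies on two facts: (i) applying the same bit-flip mask to two parents preserves their coordinate-wise Hamming-dominance with respect to $1^n$, and (ii) any selection rule that depends only on fitness values and time indices can be mirrored on both sides, because on the \OM-side fitness is monotone decreasing in Hamming distance to $1^n$, so a fitness-best parent there is automatically a Hamming-closest one and thus dominates whatever the $f$-side selects.

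The main obstacle is executing this coupling cleanly for arbitrary mutation-based EAs, which may maintain populations of unbounded size with essentially arbitrary parent- and survivor-selection rules, so that tracking only the best-so-far individual is not sufficient. The clean way, following~\cite{Doerr2010,Witt2011a}, is to work with stochastic dominance of the entire coupled populations and to update the coupling index greedily after each generation, using that the selection operator has access only to fitness values and time, which is exactly the class of EAs defined in Section~\ref{sec:preliminaries}. Once $\E{T_\algo^f} \ge \E{T_\algo^{\OM}}$ is in hand, Corollary~\ref{cor:lower-onemax} supplies the bound $en\ln n - en\ln\ln n - 3en$ on the \OM-side, completing the proof.
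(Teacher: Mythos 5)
Your high-level plan---reduce to the all-ones optimum by an XOR argument and then dominate the running time by a \OM{} running time via the Doerr--Johannsen--Winzen induction---is exactly the route the paper takes. However, two of your concrete steps are wrong as stated. First, your ``key intermediate statement'' compares $\algo$ on $f$ with \emph{the same algorithm} $\algo$ on \OM{}; this is false in general. A selection rule is only a function of the time index and the observed fitness values, so a rule that happens to steer towards $1^n$ on $f$ (say, because low $f$-values sit near the optimum) can simultaneously steer away from $1^n$ on \OM{}, making $\algo$ strictly faster on $f$ than on \OM{}. The correct statement, and the one the paper proves, is that $\algo$ on $f$ is dominated by the \emph{best} mutation-based EA on \OM{}, namely the \EAmu (which always selects a parent with the most ones); your own description of the \OM{}-side mirroring already silently switches to this algorithm, but the lemma you announce is not the lemma you would prove. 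Since Corollary~\ref{cor:lower-onemax} covers every mutation-based EA, the final numerical bound survives either way, but the reduction must be phrased against the \EAmu.

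Second, and more seriously, your justification (i) for the induction step---that applying the same bit-flip mask to two parents preserves coordinate-wise dominance with respect to $1^n$---is false. If $x$ dominates $y$ and the mask flips a position where $x$ has a $1$ and $y$ has a $0$, the order at that coordinate reverses; e.g.\ $x=11$, $y=10$ with a mask flipping the second bit yields $x'=10$, $y'=11$. A shared-mask pathwise coupling therefore does not work, and it is not what \cite{Doerr2010} does. The paper's proof instead runs a backward induction on conditional expected remaining times: it defines $\Tmuea{i}$ as the minimum expected remaining time given that all search points created so far lie in \OM{}-levels $A_0\cup\dots\cup A_i$, and shows $\Tmuea{i}\ge\Tea{i}$ using the distributional fact (Lemma~11 of \cite{Doerr2010}, valid for $p=1/n$) that the law of $\ones{x'}$ for a mutated offspring is stochastically maximized when the parent has exactly $i$ ones. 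That monotonicity of the transition kernel in the parent's number of ones is the actual engine of the proof; it is a statement about distributions, not about a common random mask, and (as the paper notes around Lemma~\ref{lem:p-i-to-i+k}) it even fails for some larger mutation rates, which is why the generalization to arbitrary $p$ required Witt's separate argument. As written, your proof has a gap precisely at this point.
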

\begin{proof}
For some $a \in \{0, 1\}^n$ denote by $f_a$ the function $f(x \oplus a)$ where $\oplus$ denote the bit-wise exclusive or. Observe that this transformation does not change the behavior of a mutation-based EA in any way, \ie, all mutation-based EAs have the same runtime distribution on $f_a$ as on $f$. Hence, we do not lose generality if we transform the function $f$ in such a way that $1^n$ is the global optimum.

Let $E_\algo^f$ denote the expected optimization time of $\algo$ on $f$ and assume that the algorithm has already created search points $x_1, \dots, x_t$. Let $\Tmuea{i}$ be the minimum expected remaining optimization time for $\algo$ given that $\algo$ has only created individuals on the first $i$ fitness levels so far, formally $x_1, \dots, x_t \in A_0 \cup \dots \cup A_i$ with $A_0, \dots, A_n$ the canonical partition for \OM.

Observe that by definition, since the conditions on $x_1, \dots, x_t$ are subsequently restricted,
\[
\Tmuea{n} \le \Tmuea{n-1} \le \dots \le \Tmuea{0}.
\]
Further define a more specific and slightly modified quantity for the \EAmu: let
$\Tea{i}$ be defined like $E^{\textrm{\OM}}_{\EAmu}(i)$, but with the additional condition that the history $x_1, \dots, x_t$ contains at least one search point in $A_i$.
Since we have only added a constraint, $\Tea{i} \ge E^{\textrm{\OM}}_{\EAmu}(i)$.

Following Doerr, Johannsen, and Winzen~\cite{Doerr2010}, we now prove inductively that for all $i$ it holds $\Tmuea{i} \ge \Tea{i}$.
Clearly $\Tmuea{n} \ge {\Tea{n} = 0}$. Assume $\Tmuea{j} \ge \Tea{j}$ for all $j > i$.
Let $x'$ be the next offspring constructed by $\algo$. If the best \OM-value seen so far is at most~$i$ and $\ones{x'} = k > i$ then the expected remaining optimization time is at best $\Tmuea{k}$ (or larger). If the new offspring has a smaller number of ones, the remaining expected optimization time is still bounded below by $\Tmuea{i}$. Thus, using the assumption of our induction,
\begin{align*}
\Tmuea{i} \ge\;& 1 + \sum_{k=i+1}^n \Prob{\ones{x'} = k} \cdot \Tmuea{k} + \Prob{\ones{x'} \le i} \cdot \Tmuea{i}\\
\ge\;& 1 + \sum_{k=i+1}^n \Prob{\ones{x'} = k} \cdot \Tea{k} + \Prob{\ones{x'} \le i} \cdot \Tmuea{i}.
\end{align*}
The best distribution for $\ones{x'}$ is obtained when a parent $z$ with exactly $i$ ones is selected. A formal proof of this claim is given in~\cite[Lemma~11]{Doerr2010}. (Note that the probability of selecting such a $z$ might be 0, in which cases the real bound is even larger.)
Let $Z$ be the random number of ones when mutating $z$, then
\begin{align*}
\Tmuea{i} \ge\;& 1 + \sum_{k=i+1}^n \Prob{Z = k} \cdot \Tea{k} + \Prob{Z \le i} \cdot \Tmuea{i}.
\end{align*}
On one hand this is equivalent to
\begin{align}
\label{eq:Tmuea}
\Tmuea{i} \ge\;& \frac{1 + \sum_{k=i+1}^n \Prob{Z = k} \cdot \Tea{k}}{1 - \Prob{Z \le i}}.
\end{align}
On the other hand for the \EAmu on \OM{} we have
\begin{align*}
\Tea{i} =\;& 1 + \sum_{k=i+1}^n \Prob{Z = k} \cdot \Tea{k} + \Prob{Z \le i} \cdot \Tea{i},
\end{align*}
which is equivalent to
\begin{align}
\label{eq:Tea}
\Tea{i} =\;& \frac{1 + \sum_{k=i+1}^n \Prob{Z = k} \cdot \Tea{k}}{1 - \Prob{Z \le i}}.
\end{align}
Taking \eqref{eq:Tmuea} and \eqref{eq:Tea} together yields $\Tmuea{i} \ge \Tea{i}$. Moreover, $\Tea{i} \ge E_{\EAmu}^{\OM}(i)$.
As $\algo$ and $\EAmu$ are initialized in the same way, they share the same distribution for the initial fitness level.
We conclude $E^f_{\algo} \ge E^{\textrm{\OM}}_{\EAmu}$ and the bound follows from Corollary~\ref{cor:lower-onemax} applied to~$\EAmu$.
\end{proof}
Witt~\cite{Witt2011a} recently generalized the above proof towards arbitrary mutation probabilities and stochastic dominance. The latter is a stronger statement than a comparison of expectations. If the running time of an algorithm $\algo$ dominates that of $\mathcal{B}$ then this implies that the expected running time of $\algo$ is higher than that of~$\mathcal{B}$.

The generalization towards arbitrary mutation probabilities $p \le 1/2$ is non-trivial. In contrast to the above proof, it is not always the case that choosing the parent with the largest number of 1-bits yields the best progress. For this reason, we just cite his result here.
\begin{theorem}[Witt~\cite{Witt2011a}]
Consider a mutation-based EA $\algo$ with population size $\mu$ and mutation
probability $p \le 1/2$ on any function with a unique global optimum. Then the optimization time of $\algo$ is stochastically at least as large as the optimization time of the $\EAmu$ on \OM.
\end{theorem}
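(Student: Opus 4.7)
The plan is to strengthen the argument of Theorem~\ref{the:lower-general-p=1/n} from an expectation comparison to a pointwise coupling. First, transform $f$ via $x \mapsto f(x \oplus a)$ for a suitable $a$ so that the unique global optimum becomes $1^n$; this operation preserves the runtime distribution of any mutation-based EA because standard bit-flip mutation is equivariant under bit-wise XOR. It then suffices to construct an explicit coupling of the two processes and show, by induction on the generation $t$, the invariant $\OM(y_t) \ge \max_{s \le t} \OM(x_s)$, where $y_t$ is the $\EAmu$'s current individual on $\OM$ and $x_1, x_2, \dots$ are the search points produced by $\algo$ on $f$.

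Initialization couples trivially by sharing the $\mu$ uniform random points: the $\EAmu$ picks the OneMax-best among them, which has OneMax value at least that of $\algo$'s $f$-best. In the inductive step, $\algo$ selects some parent $x$ with $\OM(x) \le \OM(y_t)$ by the invariant, mutates it to $x'$, and appends $x'$ to its history; the $\EAmu$ mutates $y_t$ to $y'$ and sets $y_{t+1}$ to the OneMax-better of $y_t, y'$. The task is then to couple the two mutations so that $\OM(y') \ge \OM(x')$ almost surely; once this holds the invariant propagates since $\OM(y_{t+1}) \ge \OM(y') \ge \OM(x')$ and $\OM(y_{t+1}) \ge \OM(y_t) \ge \max_{s \le t} \OM(x_s)$.

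The main obstacle, and exactly where the hypothesis $p \le 1/2$ enters, is the coupling lemma: whenever $\OM(y) \ge \OM(x)$, there is a coupling of their mutations with $\OM(y') \ge \OM(x')$ almost surely. I would prove this by first relabeling bit positions (which does not change the distribution of any mutation quantity that depends only on the ones count, since bits flip i.i.d.) to produce a permutation $\widetilde y$ of $y$ with $\widetilde y \ge x$ coordinate-wise. Then on each bit position one can write down a monotone coupling using a single uniform variable $U_j$ such that the mutated bits satisfy $\widetilde y'_j \ge x'_j$ a.s., using that for $p \le 1/2$ a Bernoulli$(1-p)$ variable stochastically dominates a Bernoulli$(p)$ variable. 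Summing over positions gives $\OM(y') = \OM(\widetilde y') \ge \OM(x')$. Abstractly, this is the statement that $Z_i := B(i, 1-p) + B(n-i, p)$ is stochastically increasing in $i$, and the coupling of full mutants then follows from Strassen's theorem.

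The key subtlety compared to the $p = 1/n$ case treated in Theorem~\ref{the:lower-general-p=1/n} is that for larger mutation rates one can no longer argue cleanly that the greedy (most-ones) parent yields the best distribution on offspring ones in a manner compatible with the self-referential recurrence for $\Tmuea{i}$. The coupling bypasses this entirely: regardless of $\algo$'s parent selection, mutation of any parent $x$ with $\OM(x) \le \OM(y_t)$ is dominated in OneMax by the $\EAmu$'s offspring, so the invariant is maintained and $\algo$ reaches $1^n$ no sooner than the $\EAmu$ reaches $1^n$ on $\OM$. This yields the claimed stochastic dominance of optimization times.
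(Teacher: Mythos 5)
Your proposal is correct, but it does not follow the paper's route, because the paper gives no proof of this statement at all: it cites Witt~\cite{Witt2011a} and explicitly remarks that extending its own argument for $p=1/n$ (Theorem~\ref{the:lower-general-p=1/n}) is non-trivial, since for general $p \le 1/2$ one can no longer justify the step in the self-referential recurrence for $\Tmuea{i}$ that declares the most-ones parent optimal. Your trajectory coupling sidesteps exactly that obstacle. The recurrence approach needs the greedy parent to minimize the functional $\bigl(1+\sum_{k>i}\Prob{Z=k}\cdot\Tea{k}\bigr)/\Prob{Z>i}$, which conditions on leaving level~$i$ and is not implied by stochastic dominance of the offspring distribution; your argument only needs the unconditional domination $B(i,1-p)+B(n-i,p)\succeq B(i',1-p)+B(n-i',p)$ for $i\ge i'$, which holds for $p\le 1/2$ because a Bernoulli$(1-p)$ variable dominates a Bernoulli$(p)$ variable, and which your coordinatewise construction realizes as an explicit coupling of the two mutations. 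The one point to spell out carefully in a full write-up is that the flip mask produced for the permuted string $\widetilde{y}=\pi(y)$ must be pulled back through $\pi^{-1}$ so that $y'$ is a legitimate mutation of $y$ itself (a permutation of an i.i.d.\ Bernoulli$(p)$ mask is again such a mask, so this works), and that the conditional law of each offspring given the joint history is the correct mutation kernel, so both marginals are faithful. Granting this, the invariant $\OM(y_t)\ge\max_{s\le t}\OM(x_s)$ propagates and yields pointwise, hence stochastic, dominance of the optimization times, which is the full strength of Witt's statement and strictly more than the expectation comparison proved in Theorem~\ref{the:lower-general-p=1/n}. The essential ingredient (monotonicity of the mutated ones-count in the parent's ones-count for $p\le 1/2$) is the same one underlying Witt's proof, but organizing it as a single step-by-step coupling of the two trajectories is more elementary than an induction over fitness levels and avoids having to prove separately that the remaining optimization time of the \EAmu on \OM is monotone in its current level.
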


This immediately implies that the lower bound from~\ref{the:lower-onemax} transfers to every function with a unique global optimum.
\begin{theorem}
\label{the:lower-general}
The expected optimization time of every mutation-based EA using mutation probability~$p$ on every function with a unique optimum is at least
\[
\frac{\ln n - \ln \ln n - 3}{p(1-p)^n}
\]
if $2^{-n/3} \le p \le 1/n$ and at least
\[
\frac{\ln(1/(p^2n)) - \ln \ln n - 3}{p(1-p)^n}
\]
if $1/n \le p \le 1/(\sqrt{n} \log n)$.
\end{theorem}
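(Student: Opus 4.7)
The plan is to derive Theorem~\ref{the:lower-general} as an essentially immediate consequence of Witt's stochastic dominance theorem (stated just above) together with Theorem~\ref{the:lower-onemax}. Concretely, I would first fix an arbitrary mutation-based EA~$\algo$ with population size~$\mu$ and mutation probability~$p$ in the relevant range, and an arbitrary function~$f$ with a unique global optimum. Witt's theorem then says that the optimization time of~$\algo$ on~$f$ stochastically dominates the optimization time of $\EAmu$ on~$\OM$. Since stochastic dominance implies inequality of expectations, one obtains
\[
E^f_{\algo} \;\ge\; E^{\OM}_{\EAmu}.
\]

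The second step is to observe that $\EAmu$ is itself a mutation-based EA with mutation probability~$p$, so Theorem~\ref{the:lower-onemax} applies to it directly. In the regime $2^{-n/3} \le p \le 1/n$ this gives $E^{\OM}_{\EAmu} \ge (\ln n - \ln \ln n - 3)/(p(1-p)^n)$, and in the regime $1/n \le p \le 1/(\sqrt{n}\log n)$ it gives $E^{\OM}_{\EAmu} \ge (\ln(1/(p^2 n)) - \ln \ln n - 3)/(p(1-p)^n)$. Combining with the previous inequality yields the two bounds claimed in Theorem~\ref{the:lower-general} for~$\algo$ on~$f$.

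In this sense there is no genuine obstacle left to overcome in the proof of Theorem~\ref{the:lower-general} itself: all the work has already been done, once in Theorem~\ref{the:lower-onemax} (the sharp fitness-level lower bound for \OM) and once in Witt's dominance theorem (the structural reduction). I would emphasize in the write-up that the reason we cannot simply reuse the inductive argument of Theorem~\ref{the:lower-general-p=1/n}---which established the $p = 1/n$ special case via the Doerr--Johannsen--Winzen coupling---is that its key monotonicity step (``selecting a parent with the largest number of $1$-bits is optimal for progress in \OM'') is no longer valid for arbitrary $p \le 1/2$. Witt's proof circumvents this by comparing full running-time distributions rather than one-step progress, and we invoke it as a black box.
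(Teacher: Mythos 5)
Your proposal is correct and follows exactly the route the paper takes: the paper also derives Theorem~\ref{the:lower-general} as an immediate consequence of Witt's stochastic dominance theorem (cited as a black box, for precisely the reason you give about the failure of the parent-selection monotonicity step for general~$p$) combined with the \OM{} lower bound of Theorem~\ref{the:lower-onemax} applied to the \EAmu. Nothing is missing.
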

As a side result, we have also shown that the \EAmu is an optimal algorithm for \OM{}. For every fixed value of~$\mu$ the \EAmu is never worse than any other algorithm initialized with $\mu$ uniform random individuals.
It is interesting to note that, as for \LO, the $\EA$, i.\,e., the \EAmu with $\mu=1$, is generally not the best mutation-based algorithm for \OM. In fact, for a proper choice of $\mu$ and reasonable $p$ the $\EAmu$ has a strictly smaller expected optimization time.

Compare, for instance, the \EA with the $\EAmu$ for $p=1/n$ and $\mu=\Theta(\log n)$. For both we consider the time until the algorithms find a search point with at least $n/2 + \sqrt{n}$ 1-bits. It is known that the probability that initialization creates a search point with at least $n/2 + \sqrt{n}$ 1-bits is at least a constant. Hence, with high probability the $\EAmu$ will start with at least this value after initialization. (The running time in case this does not happen is negligible.)

Contrarily, if the \EA starts with $i \le n/2 + \sqrt{n}$ 1-bits then by simple drift arguments it needs at least time $n/2+\sqrt{n}-i$ to reach a search point with fitness at least $n/2 + \sqrt{n}$. The reason is that the expected progress is clearly bounded by the expected number of flipping bits, which is~1. It is not hard to see that the \EA then needs $\Theta(\sqrt{n})$ generations in expectation to reach the threshold.

As both algorithms behave equally after having reached the threshold (modulo possible small differences for overshooting the threshold), the \EAmu is faster than the \EA by an additive term of $\Theta(\sqrt{n}) - \Theta(\log n) = \Theta(\sqrt{n})$.

Note that $\mu$ cannot be too large, either. It is known that, with high probability, the number of 1-bits in a random search point is at most $n/2 + \sqrt{n} \log n$. If $\mu = \omega(\sqrt{n} \log n)$ then the \EA gets to this threshold faster than the $\EAmu$.
\begin{theorem}
Among all mutation-based EAs the expected number of fitness evaluations on \OM{} is minimized by the \EAmu with mutation probability~$p=1/n$ and $1 < \mu = O(\sqrt{n} \log n)$.
\end{theorem}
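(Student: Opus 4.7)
The plan is to establish the three claims in the theorem separately: first that $p = 1/n$ is the optimal mutation rate, then that some $\mu > 1$ strictly beats $\mu = 1$, and finally that $\mu = O(\sqrt{n}\log n)$ is necessary for optimality.

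For optimality of $p = 1/n$, I combine the upper bound from Theorem~\ref{the:upper-onemax}, which gives $(\ln n + O(1))/(p(1-p)^{n-1})$ function evaluations for the \EA, with the matching lower bound from Theorem~\ref{the:lower-general}, which is of the form $(\ln n - \ln\ln n - O(1))/(p(1-p)^n)$ for all mutation-based EAs at any reasonable mutation rate. The function $p \mapsto p(1-p)^{n-1}$ is uniquely maximized at $p = 1/n$, as a direct derivative calculation shows. Hence for any fixed $p \ne 1/n$ in the range $[2^{-n/3}, 1/(\sqrt{n}\log n)]$ the lower bound strictly exceeds the upper bound at $p=1/n$, up to lower-order terms. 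Mutation rates outside this range are ruled out by Theorem~\ref{the:pointless-mutation-rates}, which yields superpolynomial or exponential running times. Fixing $p = 1/n$ henceforth, the problem reduces to optimizing~$\mu$.

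For the claim $\mu > 1$, I formalize the sketch already given in the text. With high probability a single uniform random bit string has $n/2 + O(\sqrt{n})$ ones, whereas for $\mu = \Theta(\log n)$ the maximum of $\mu$ independent uniform samples exceeds $n/2 + \sqrt{n}$ ones with probability $1 - o(1)$ (by standard anticoncentration of the binomial). Conditioned on reaching any fitness level $\ge n/2 + \sqrt{n}$, the \EAmu behaves identically to the \EA from that point on, by the Markov property and because the \EAmu reduces to the \EA after initialization. Hence the comparison reduces to the time to reach the threshold $n/2 + \sqrt{n}$. The \EA starting at $n/2 + O(1)$ ones needs $\Theta(\sqrt{n})$ generations to reach this level by an additive-drift argument: the expected number of flipping bits per mutation with $p = 1/n$ is $1$, so the expected per-generation progress in \OM-value is at most $1$. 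Paying only $\Theta(\log n)$ initialization evaluations, the \EAmu therefore beats the \EA by an additive $\Theta(\sqrt{n})$.

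For the bound $\mu = O(\sqrt{n}\log n)$, a Chernoff tail bound yields a constant $c$ such that a single uniform random individual has at most $n/2 + c\sqrt{n}\log n$ ones with probability $1 - n^{-\omega(1)}$. A union bound over $\mu$ samples shows that for $\mu = \omega(\sqrt{n}\log n)$ the best initial individual still has only $n/2 + O(\sqrt{n}\log n)$ ones with high probability. Thus the \EAmu pays $\mu = \omega(\sqrt{n}\log n)$ evaluations during initialization alone. In contrast, the \EA reaches this same fitness level in $O(\sqrt{n}\log n)$ generations: in the regime $i \le n/2 + O(\sqrt{n}\log n)$ the fitness-level success probability $(n-i)/(en)$ is $\Omega(1)$, so the fitness-level bound from Theorem~\ref{the:fitness-levels} contributes only $O(\sqrt{n}\log n)$ to the optimization time below the threshold. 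Because both algorithms evolve identically above the threshold, the \EAmu cannot recover its initialization overhead, which rules out $\mu = \omega(\sqrt{n}\log n)$.

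The main obstacle is the third step: verifying that no later-stage advantage exists that could compensate for the large initialization cost. This is handled precisely by the observation that once both algorithms have visited some common fitness level, their subsequent runtime distributions coincide, so comparing total expected evaluations reduces cleanly to comparing the cost of reaching a common intermediate fitness level. The remaining technical work consists of routine Chernoff-bound estimates and fitness-level calculations already present in the preceding sections.
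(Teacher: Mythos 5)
Your three steps (optimality of $p=1/n$ via comparing Theorem~\ref{the:upper-onemax} with Theorem~\ref{the:lower-onemax} and invoking Theorem~\ref{the:pointless-mutation-rates} for extreme rates; the threshold argument at $n/2+\sqrt{n}$ with an additive-drift lower bound to show $\mu>1$; the concentration argument to cap $\mu$ at $O(\sqrt{n}\log n)$) are exactly the paper's own justification for those parts, at essentially the same level of (in)formality, including the ``modulo overshooting the threshold'' caveat.

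There is, however, a genuine gap. After fixing $p=1/n$ you assert that ``the problem reduces to optimizing $\mu$'', and from then on you only compare members of the \EAmu family with one another. But the theorem quantifies over \emph{all} mutation-based EAs, and at $p=1/n$ that class still contains arbitrary parent-selection schemes, offspring populations, island models, etc. The general lower bound you have for this class, $en\ln n - en\ln\ln n - 3en$ from Corollary~\ref{cor:lower-onemax}, sits an additive $\Theta(n\ln\ln n)$ \emph{below} the \EAmu's expected running time, while the advantage you establish for the optimal $\mu$ over $\mu=1$ is only $\Theta(\sqrt{n})$; so the lower bound alone leaves ample room for some other population-based EA with $p=1/n$ to beat every \EAmu. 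The missing ingredient is the domination result established in the proof of Theorem~\ref{the:lower-general-p=1/n} (and generalized by Witt to arbitrary $p\le 1/2$): for every fixed $\mu$ and $p$, the expected optimization time on \OM of \emph{any} mutation-based EA initialized with $\mu$ uniform random individuals is at least that of the \EAmu, i.e., $E^{\OM}_{\algo}\ge E^{\OM}_{\EAmu}$. This is precisely the remark ``the \EAmu is an optimal algorithm for \OM'' made just before the theorem, and it is what licenses restricting the optimization to the two parameters $\mu$ and $p$ of the \EAmu family. You need to invoke it explicitly; without it the reduction in your first step is a non sequitur. (A smaller point, shared with the paper: comparing the lower bound at $p\ne 1/n$ with the upper bound at $p=1/n$ only pins the optimal rate down to $(1\pm o(1))/n$, since for $p$ very close to $1/n$ the lower-order terms swamp the difference; the exact claim $p=1/n$ really rests on Witt's result, which the paper cites.)
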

This result contrasts the result by~Borisovsky and Eremeev~\cite{Borisovsky2008} on the optimality of the \EA on \OM. The authors do not consider the impact of initialization. Strictly speaking, their concept of dominance does not generally hold when comparing an algorithm with the \EA that is initialized in a different way.

As word of caution, we remark that it is clearly not worth optimizing for~$\mu$ in practice as the differences in the expected running time only concern additive terms of small order.

\section{An Exponential Lower Bound for Long $k$-Paths}
\label{sec:long-k-paths}

Finally, we extend the proposed lower-bound method towards settings where many transition probabilities have to be considered. A common setting is that transition probabilities to the next few higher fitness levels can be estimated quite easily. But if there are many fitness levels, dealing with those to fitness levels that are ``far away'' can become tedious. Also, in some settings condition~\eqref{eq:gamma-condition} on the transition probabilities may be violated when transition probabilities become very small. If this only happens when the transition probabilities are very small anyway, we still expect the lower bound from Theorem~\ref{the:lower-bound-method} to hold, apart from small error terms.

This reasoning is made precise in the following theorem. For each fitness level we only consider the next $d$ fitness levels, where $d \in \N$ can be chosen arbitrarily. The conditions involving transition probabilities only need to hold for these values. If $d \ll m$ this means that we only have to consider a tiny fraction of all transition probabilities.
We also introduce a variable~$\alpha$ as a lower bound for the probability that a transition is only made to these $d$ levels.
The resulting bound equals the one from Theorem~\ref{the:lower-bound-method} apart from a factor $\alpha^{m-i}$. This factor can be regarded as (an upper bound on) the probability that the algorithm on every fitness level makes jumps up to at most $d$ fitness levels.
\begin{theorem}
\label{the:lower-bound-method-capped}
Consider an algorithm~\algo and a partition of the search space into non-empty sets
$A_1, \dots, A_m$.
Choose $d \in \N$ and
let the probability of \algo{} traversing from level $i$ to level $i < j \le i+d$ in one step be at most $u_i \cdot \gamma_{i,j}$, where $\sum_{j=i+1}^{m} \gamma_{i, j} = 1$.

Define $\alpha = \alpha(d)$ such that $\alpha \le \sum_{j=1}^{d} \gamma_{i, i+j}$ for all $1 \le i \le m-d-1$.
Assume that for all $i < j \le i+d$ and some $0 \le \chi \le 1$ it holds
\begin{equation}
\label{eq:gamma-condition-capped}
\gamma_{i, j} \ge \chi \sum_{k=j}^{m} \gamma_{i, k}.
\end{equation}
Then the expected hitting time of $A_m$ is at least
\begin{align}
&
\sum_{i=1}^{m-1} \Prob{\text{$\algo$ starts in $A_i$}} \cdot \alpha^{m-i} \cdot \left(\frac{1}{u_i} + \chi \sum_{j=i+1}^{m-1} \frac{1}{u_j}\right)\label{eq:complex-lower-bound-capped}\\
\ge\;& \sum_{i=1}^{m-1} \Prob{\text{$\algo$ starts in $A_i$}} \cdot \alpha^{m-i} \chi \sum_{j=i}^{m-1} \frac{1}{u_j}.\label{eq:simple-lower-bound-capped}
\end{align}
\end{theorem}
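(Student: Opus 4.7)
My plan is to adapt the inductive argument from the proof of Theorem~\ref{the:lower-bound-method}, keeping the same definition of $E_i$ as the minimum expected remaining optimization time over histories contained in $A_1 \cup \dots \cup A_i$. I would try to establish, by downward induction on $i$, the stronger statement
\[
E_i \;\ge\; \alpha^{m-i} \, b_i, \qquad \text{where } b_i := \tfrac{1}{u_i} + \chi \sum_{j=i+1}^{m-1} \tfrac{1}{u_j}.
\]
The second bound \eqref{eq:simple-lower-bound-capped} then follows trivially from $\chi \le 1$ as at the start of the proof of Theorem~\ref{the:lower-bound-method}. The base case $i = m-1$ is immediate (only one possible transition, so $E_{m-1} \ge 1/u_{m-1} \ge \alpha \cdot 1/u_{m-1}$), and for any $i \ge m-d-1$ condition~\eqref{eq:gamma-condition-capped} holds for every $j > i$, so the original argument of Theorem~\ref{the:lower-bound-method} applies verbatim and even yields $E_i \ge b_i \ge \alpha^{m-i} b_i$.

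For the hard range $i < m-d-1$, I would set up the same recurrence as in the proof of Theorem~\ref{the:lower-bound-method}, namely
\[
E_i \;\ge\; \tfrac{1}{u_i} + \sum_{k=i+1}^{m-1} \gamma_{i,k} E_k,
\]
plug in the inductive hypothesis $E_k \ge \alpha^{m-k} b_k$, and perform the same rearrangement of double sums as in equation~\eqref{eq:rearranging-sums}. This produces coefficients of the form $\gamma_{i,j} + \chi \sum_{k=i+1}^{j-1} \gamma_{i,k}$ on each $1/u_j$, weighted by $\alpha^{m-k}$-powers coming from the inductive hypothesis. The computation splits naturally at $j = i+d$: for $j \le i+d$, condition~\eqref{eq:gamma-condition-capped} together with $\sum_{k=i+1}^{m} \gamma_{i,k} = 1$ gives the coefficient $\ge \chi$ exactly as before, while for $j > i+d$ I would instead use the assumption $\sum_{k=i+1}^{i+d} \gamma_{i,k} \ge \alpha$ to bound $\gamma_{i,j} + \chi \sum_{k=i+1}^{j-1} \gamma_{i,k} \ge \chi \alpha$. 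Factoring $\alpha^{m-i}$ out of the $\alpha^{m-k}$-weighted terms (using $\alpha^{m-k} \ge \alpha^{m-i}$ for $k > i$) would then complete the induction.

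The main obstacle I expect is bookkeeping of the powers of $\alpha$: a naive implementation of the above step appears to lose an extra factor of $\alpha$ on the contribution of levels $j > i+d$, which must be absorbed cleanly into $\alpha^{m-i}$. A cleaner route, which I would fall back on if the algebra becomes unwieldy, is probabilistic: interpret $\alpha^{m-i}$ as a lower bound on the probability of the event $G_i$ that the algorithm reaches $A_m$ from $A_i$ without ever making an improvement spanning more than $d$ fitness levels, apply Theorem~\ref{the:lower-bound-method} to the trajectory restricted to $G_i$ (where condition~\eqref{eq:gamma-condition-capped} really does give the classical bound $b_i$), and use $E_i \ge \Pr[G_i] \cdot \E{\text{time} \mid G_i}$. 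The subtlety here is that conditioning on $G_i$ modifies the transition distribution (a Doob $h$-transform), so some care is needed to verify that the conditioned chain still respects the $u_i \gamma_{i,j}$ upper bounds and the viscosity condition on the restricted range.
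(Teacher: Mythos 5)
The paper does not actually write out a proof of Theorem~\ref{the:lower-bound-method-capped}: it declares the proof ``very similar'' to that of Theorem~\ref{the:lower-bound-method} and offers, as an alternative, conditioning on the event that every improvement advances by at most $d$ levels plus the law of total expectation. Your two routes are exactly these two, so strategically you are aligned with the paper. The problem is in the execution of your primary route. The recurrence $E_i \ge 1/u_i + \sum_{k=i+1}^{m-1}\gamma_{i,k}E_k$ is derived in the proof of Theorem~\ref{the:lower-bound-method} from the hypothesis that $u_i\gamma_{i,k}$ upper-bounds the transition probability for \emph{every} $k>i$; only then may one shift probability mass from the ``stay'' term (value $E_i$) onto the smaller values $E_k$. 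In the capped theorem this hypothesis is assumed only for $i<k\le i+d$, and nothing at all constrains the probability of jumping to a level $k>i+d$ (in the long-$k$-path application these probabilities genuinely exceed $u_i\gamma_{i,k}$). Consequently the recurrence you propose to ``set up as in the proof of Theorem~\ref{the:lower-bound-method}'' cannot be written down from the stated hypotheses, and this---not the bookkeeping of powers of $\alpha$---is the actual obstacle; it is precisely where the factor $\alpha^{m-i}$ must enter. (Your bookkeeping worry is, by contrast, unfounded: if the recurrence were available, then $E_k\ge\alpha^{m-k}b_k\ge\alpha^{m-i-1}b_k$ for $k>i$, combined with coefficient at least $\chi$ on $1/u_j$ for $j\le i+d$ and at least $\chi\alpha$ for $j>i+d$, closes the induction cleanly.)

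Your fallback conditioning argument is the route the paper explicitly recommends, and you correctly flag its main subtlety (conditioning on the good event changes the transition law). But you leave unresolved the step that makes the theorem delicate: why the conditional probability that an improvement spans at most $d$ levels is at least $\alpha$, so that $\Pr[G_i]\ge\alpha^{m-i}$. The hypotheses provide only \emph{upper} bounds $u_i\gamma_{i,j}$ on the short-jump probabilities and say nothing about long jumps, so this does not follow from the stated assumptions alone; one has to use that in the intended applications $u_i\gamma_{i,j}$ is essentially the exact short-jump probability and $u_i$ bounds the total probability of leaving level~$i$. Either route needs this additional input made explicit; as written, neither of your arguments gets from the stated hypotheses to the claimed bound.
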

As the proof is very similar to the proof of Theorem~\ref{the:lower-bound-method}, it is omitted.
Alternatively, the statement can be proven by conditioning on the event that in each improvement of the current best fitness level the algorithm advances by at most~$d$ levels, and applying the law of total expectation.

A prime example for a setting where the new method is applicable is the class of long $k$-paths. These functions were introduced by Horn, Goldberg, and Deb~\cite{Horn1994}, formally defined by Rudolph~\cite{Rudolph1997}, and analyzed by Droste, Jansen, and Wegener~\cite{Droste2002}. We stick to a slightly cleaner formulation from~\cite{Sudholt2009}. A long $k$-path is a sequence of search points called path. Two neighbored points on the path differ in exactly one bit. Assigning increasing fitness values to the points on the path enables an EA to climb up the path. All search points outside the path have worse fitness and they give hints to reach the start of the path.

The parameter $k$ indicates the distance between different parts of the path. For all points $x$ on the path, the $i$-th successor has Hamming distance~$i$ to~$x$, for $1 \le i \le k$. All further successors of~$x$ have Hamming distance at least $k$ to~$x$. This means that in order to take a shortcut on the path, an EA must flip at least~$k$ bits at the same time. If $k$ is not too small, an EA typically climbs to the end of the path in small steps. For $k = \sqrt{n}$ the probability of taking a shortcut is exponentially small, and the length of the path is still exponential. More precisely, the length of a long $k$-path on $n$ bits is~$k \cdot 2^{k/n}-k$~\cite{Droste2002,Sudholt2009}.

Long $k$-paths are a prime example for this extension because they give rise to a potentially exponential number of fitness values. For every point on the path, the Hamming distances to the next $k$ successors on the path are well known. But for all further search points we only know that they have Hamming distance at least~$k$. Putting $d := k$, it is easy to apply the modified lower-bound method.

For simplicity, we assume that the \EA is initialized with the first point on the long $k$-path. This not an essential restriction. It is very unlikely that the long $k$-path is reached beforehand as the ``density'' of points on the long $k$-path is extremely low, for reasonable values of~$k$. By definition, each Hamming ball of radius $k/2$ contains roughly $n^{k/2}/(k/2)!$ search points, but at most~$k$ of these can be part of the long $k$-path. This means that it is extremely unlikely to find a point on the path by chance (except for the first $k$ points), while being guided towards the start of the path.
\begin{theorem}
\label{the:lower-bound-long-k-paths}
Consider the \EA with mutation probability~$p$ starting at the first point of the long $k$-path. Let $m+1 = k \cdot 2^{n/k} - k$ be the number of search points on the long $k$-path, then the expected optimization time of the \EA is at least
\[
\frac{m}{p(1-p)^{n-1}} \cdot \left(\frac{1-2p}{1-p}\right)^2 \cdot \left(1 - \left(\frac{p}{1-p}\right)^k\right)^{m}.
\]
\end{theorem}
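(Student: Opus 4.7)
My plan is to apply Theorem~\ref{the:lower-bound-method-capped} with $d:=k$ to the canonical path partition. Because the \EA is elitist and every off-path search point has strictly smaller fitness than any on-path point, once the algorithm is on the path it stays there; I therefore take $A_i$ to contain the $i$-th point of the path, $i=0,1,\dots,m$. Since the \EA starts in $A_0$ deterministically, the initial-level sum in Theorem~\ref{the:lower-bound-method-capped} collapses to its $i=0$ contribution.

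The key structural input is that the one-step transition probability from $A_i$ to $A_{i+j}$ equals \emph{exactly} $p^{j}(1-p)^{n-j}=p(1-p)^{n-1}q^{j-1}$ for $1\le j\le k$, where $q:=p/(1-p)$: the $j$-th path successor lies at Hamming distance exactly $j$, so mutation must flip precisely those $j$ bits. The geometric decay in $j$ suggests a geometric $\gamma$-ansatz, namely
\[
\gamma_{i,i+j}:=(1-q)q^{j-1}\quad\text{for }1\le j\le k,
\]
with the residual mass $q^{k}$ distributed arbitrarily among the indices $j>k$ so that $\sum_{j}\gamma_{i,i+j}=1$; the theorem constrains $\gamma$ only for $j\le k$. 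Choosing $u_i:=p(1-p)^{n-1}/(1-q)=p(1-p)^{n}/(1-2p)$, the product $u_i\gamma_{i,i+j}$ reproduces the exact transition probability for $1\le j\le k$, hence upper-bounds it.

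Both hypotheses of Theorem~\ref{the:lower-bound-method-capped} now reduce to the telescoping identity $\sum_{\ell=j}^{k}(1-q)q^{\ell-1}+q^{k}=q^{j-1}$. This gives $\gamma_{i,i+j}=(1-q)\sum_{\ell=j}^{m-i}\gamma_{i,i+\ell}$ for $1\le j\le k$, so condition~\eqref{eq:gamma-condition-capped} holds at $\chi:=1-q=(1-2p)/(1-p)$, and $\alpha:=\sum_{j=1}^{k}\gamma_{i,i+j}=1-q^{k}=1-(p/(1-p))^{k}$. Substituting into the simple bound~\eqref{eq:simple-lower-bound-capped} with $u_i$ constant in $i$ yields $\alpha^{m}\cdot\chi\cdot m/u_{0}$, which simplifies to
\[
(1-q^{k})^{m}\cdot\frac{1-2p}{1-p}\cdot\frac{m(1-2p)}{p(1-p)^{n}}\;=\;\frac{m}{p(1-p)^{n-1}}\left(\frac{1-2p}{1-p}\right)^{2}\left(1-q^{k}\right)^{m},
\]
which is the stated bound.

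The only real design choice is the geometric shape of the $\gamma$'s: a uniform distribution over the first $k$ successors would force $\chi=\Theta(1/k)$ and destroy the factor $(1-q^{k})^{m}$, whereas matching the genuine geometric decay of the path transitions lets $\chi$ and $\alpha$ be constants in~$p$. Boundary indices $i>m-k-1$, where fewer than $k$ path successors remain, need no special care: Theorem~\ref{the:lower-bound-method-capped} asks the $\alpha$-bound only for $i\le m-d-1$, and condition~\eqref{eq:gamma-condition-capped} continues to hold at $\chi=1-q$ after the obvious renormalization of the truncated $\gamma$-sequence. So I foresee no serious obstacle beyond bookkeeping of the one geometric series.
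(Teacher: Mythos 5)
Your proposal is correct and follows essentially the same route as the paper's proof: the canonical path partition with cut-off $d=k$, the exact transition probabilities $p^j(1-p)^{n-j}$ realized as $u_i\gamma_{i,i+j}$, viscosity $\chi=1-q$ and $\alpha=1-q^k$, and the same final arithmetic. The only (immaterial) difference is the normalization convention for the $\gamma$'s --- you truncate the geometric sequence at $k$ and dump the residual mass $q^k$ arbitrarily beyond, whereas the paper extends the geometric formula to all $m-i$ successors and normalizes by the full geometric sum.
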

In order to make sense of this lower bound, note that the term $\frac{m}{p(1-p)^{n-1}}$ reflects the expected time to make $m$ specific 1-bit flips. This would be the exact expected optimization time if the \EA would never accept a mutation that flips more than one bit. It also represents an upper bound on the expected optimization time of the \EA by a straightforward application of Theorem~\ref{the:fitness-levels}.
The term $\left(\frac{1-2p}{1-p}\right)^2$ is necessary to account for successful mutations that flip more than one bit. The last term $\left(1 - \left(\frac{p}{1-p}\right)^k\right)^{m}$ roughly equals the probability that no improving mutation makes a progress by more than $k$ on the path on all fitness levels.

For the common choice $k = \sqrt{n}$ we get the following.
The bound from Theorem~\ref{the:lower-bound-long-k-paths} is simplified by applying the inequality $(1-x)^m \ge e^{-2xm}$ for $0 \le x \le 1/2$ and $m \ge 1$ to $x := (p/(1-p))^{\sqrt{n}}$.
\begin{corollary}
Consider the \EA with mutation probability $0 < p \le 1/3$ starting at the first point of the long $k$-path with $k = \sqrt{n}$. Then the expected optimization time of the \EA is at least
\[
\frac{\sqrt{n} 2^{\sqrt{n}}-\sqrt{n}}{p(1-p)^{n-1}} \cdot \left(\frac{1-2p}{1-p}\right)^2 \cdot \left(1 - 2\sqrt{n}2^{\sqrt{n}} \cdot \left(\frac{p}{1-p}\right)^{\sqrt{n}}\right).
\]
For every $0 < p = o(1)$ the expectation is
\[
\frac{\sqrt{n} 2^{\sqrt{n}}-k}{p(1-p)^{n-1}} \cdot (1-o(1)),
\]
i.\,e., upper and lower bounds are tight up to lower-order terms.
Furthermore, the choice $p=1/n$ for the mutation probability minimizes the expected number of function evaluations of the \EA in this setting if $n$ is large enough.
\end{corollary}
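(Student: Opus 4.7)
The plan is to derive the first displayed lower bound by specializing Theorem~\ref{the:lower-bound-long-k-paths} to $k := \sqrt{n}$, so that $m+1 = \sqrt{n}\,2^{\sqrt{n}}-\sqrt{n}$, and then simplifying the factor $(1-(p/(1-p))^{\sqrt{n}})^m$ via the inequality chain $(1-x)^m \ge e^{-2xm} \ge 1 - 2xm$. The first inequality holds for $x \in [0,1/2]$ and $m \ge 1$ because $\ln(1-x) \ge -2x$ on this range; the second is the standard $e^{-y} \ge 1-y$. I will set $x := (p/(1-p))^{\sqrt{n}}$ and check that the hypothesis $p \le 1/3$ forces $p/(1-p) \le 1/2$, so that $x \le 2^{-\sqrt{n}} \le 1/2$ and the chain applies. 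Bounding $2xm \le 2\sqrt{n}\,2^{\sqrt{n}}(p/(1-p))^{\sqrt{n}}$ then yields the stated expression.

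For the asymptotic form, I will argue that each of the three multiplicative factors in the lower bound is $1 - o(1)$ when $p = o(1)$. The factor $((1-2p)/(1-p))^2 = (1 - p/(1-p))^2$ is $1 - o(1)$ immediately. For the third factor I rewrite $2\sqrt{n}\,2^{\sqrt{n}}(p/(1-p))^{\sqrt{n}} = 2\sqrt{n}\,(2p/(1-p))^{\sqrt{n}}$ and note that $2p/(1-p) = o(1)$, so for every constant $c>0$ this expression is eventually bounded by $2\sqrt{n}\,c^{\sqrt{n}}$ and therefore tends to $0$ faster than any inverse polynomial. A matching upper bound of $m/(p(1-p)^{n-1})$ follows from the classical fitness-level method (Theorem~\ref{the:fitness-levels}) applied to the canonical partition along the path, since each improvement of the current best path index can be achieved by flipping the single bit distinguishing consecutive path points, which has probability at least $p(1-p)^{n-1}$. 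Combining the two bounds gives the expected optimization time up to a $1 \pm o(1)$ factor.

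For the optimality of $p = 1/n$, the common leading factor $m/(p(1-p)^{n-1})$ is a positive constant times $1/(p(1-p)^{n-1})$, and the derivative $(1-p)^{n-2}(1-np)$ of $p(1-p)^{n-1}$ vanishes only at $p = 1/n$, which is therefore its unique maximizer. Hence for every $p = o(1)$ the lower bound at $p$ is at least $(1-o(1))$ times the upper bound at $p = 1/n$, so no such $p$ beats $1/n$ in leading order. Mutation rates outside the range $p = o(1)$ produce super-polynomial or exponential running times via Theorem~\ref{the:pointless-mutation-rates} and are strictly worse for large $n$.

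The main obstacle is purely bookkeeping: verifying that $x \le 1/2$ throughout the allowed range of $p$ so that the exponential lower bound for $(1-x)^m$ applies, and controlling the super-polynomial decay of $(2p/(1-p))^{\sqrt{n}}$ precisely enough to conclude that the third factor equals $1-o(1)$. No new technique is required beyond Theorem~\ref{the:lower-bound-long-k-paths} and the elementary inequalities already mentioned.
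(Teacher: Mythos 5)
Your proposal is correct and follows essentially the same route as the paper: specialize Theorem~\ref{the:lower-bound-long-k-paths} to $k=\sqrt{n}$, apply $(1-x)^m \ge e^{-2xm} \ge 1-2xm$ with $x:=(p/(1-p))^{\sqrt{n}} \le 2^{-\sqrt{n}}$ (valid since $p\le 1/3$), compare against the fitness-level upper bound $m/(p(1-p)^{n-1})$, and conclude optimality of $p=1/n$ from the maximization of $p(1-p)^{n-1}$ together with Theorem~\ref{the:pointless-mutation-rates} for non-vanishing $p$. No substantive deviation from the paper's argument.
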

The dominant term for $p = 1/n$ is $en^{3/2} 2^{\sqrt{n}}$. The leading constant is by a factor of $2e$ larger than the leading constant in the previous best known lower bound $1/2 \cdot n^{3/2} 2^{\sqrt{n}}$. The latter can be derived from enhancing the proof of Theorem~23 in~\cite{Droste2002} with modern drift analysis techniques, and assuming that the \EA starts on the first point of the path.

\begin{proof}[Proof of Theorem~\ref{the:lower-bound-long-k-paths}]
Consider the canonical fitness-level partition $A_0, \dots, A_m$, i.\,e., $A_0$ contains the first point $0^n$ on the path and $A_m$ contains the last point on the path. The transition probabilities are cut off after a jump length of $d := k$, where $k$ is the parameter of the long $k$-path.
For all $0 \le i \le m$ and $1 \le j \le m-i$ define
\[
u_i = (1-p)^n \cdot \sum_{j=1}^{m-i} \left(\frac{p}{1-p}\right)^j
\]
and
\[
\gamma_{i, i+j} =
\frac{\left(\frac{p}{1-p}\right)^j}{\sum_{\ell=1}^{m-i} \left(\frac{p}{1-p}\right)^\ell}.
\]
Intuitively, by defining these values we pretend that the $j$-th successor on the path has Hamming distance~$j$, for all~$j$---not just for $1 \le j \le k$.
It is obvious from the definition that $\sum_{j=i+1}^{m} \gamma_{i, j} = 1$ for all $0 \le i \le m$. For $i < j \le i+d$ we have
\[
u_i \gamma_{i, j} = (1-p)^n \cdot \left(\frac{p}{1-p}\right)^j = p^j (1-p)^{n-j},
\]
which is precisely the probability of mutation reaching the $j$-th successor of the current search point on the path.

Define
\[
\chi = \frac{1}{\sum_{j=0}^{m-1} \left(\frac{p}{1-p}\right)^j},
\]
then for $i < j \le i+d$ condition~\ref{eq:gamma-condition-capped} resolves to
\[
\frac{\left(\frac{p}{1-p}\right)^j}{\sum_{a=1}^{m-i} \left(\frac{p}{1-p}\right)^a} \ge \frac{1}{\sum_{a=0}^{m-1} \left(\frac{p}{1-p}\right)^a} \cdot \sum_{\ell=j}^{m} \frac{\left(\frac{p}{1-p}\right)^\ell}{\sum_{a=1}^{m-i} \left(\frac{p}{1-p}\right)^a}
\]
and this is equivalent to
\[
1 \ge \frac{1}{\sum_{a=0}^{m-1} \left(\frac{p}{1-p}\right)^a} \cdot \sum_{\ell=0}^{m-j} \left(\frac{p}{1-p}\right)^\ell,
\]
which is true since $j \ge 1$.
Now for all $0 \le i \le m - d - 1$ we need to define $\alpha$ as a lower bound for
\begin{align*}
\sum_{j=1}^{d} \gamma_{i, i+j}
=\;&  \frac{\sum_{j=1}^{d}\left(\frac{p}{1-p}\right)^j}{\sum_{\ell=1}^{m-i} \left(\frac{p}{1-p}\right)^\ell}.
\end{align*}
The worst case is obtained for $i = 0$ where we get
\begin{align*}
\frac{\sum_{j=1}^{d}\left(\frac{p}{1-p}\right)^j}{\sum_{\ell=1}^{m} \left(\frac{p}{1-p}\right)^\ell}
= \frac{\frac{p}{1-p} - \left(\frac{p}{1-p}\right)^{d+1}}{\frac{p}{1-p} - \left(\frac{p}{1-p}\right)^{m+1}}
= 1 - \frac{\left(\frac{p}{1-p}\right)^{d+1} - \left(\frac{p}{1-p}\right)^{m+1}}{\frac{p}{1-p} - \left(\frac{p}{1-p}\right)^{m+1}}
\ge 1 - \left(\frac{p}{1-p}\right)^{d} := \alpha.
\end{align*}
Applying Theorem~\ref{the:lower-bound-method-capped} yields the lower bound
\begin{align*}
& \frac{1}{\sum_{j=0}^{m-1} \left(\frac{p}{1-p}\right)^j} \cdot \left(1 - \left(\frac{p}{1-p}\right)^k\right)^{m} \cdot \sum_{i=0}^{m-1} \frac{1}{(1-p)^n \cdot \sum_{j=1}^{m-i} \left(\frac{p}{1-p}\right)^j}\\
\ge\;& \frac{1-2p}{1-p} \cdot \left(1 - \left(\frac{p}{1-p}\right)^k\right)^{m} \cdot \sum_{i=0}^{m-1} \frac{1}{(1-p)^n \cdot \frac{p}{1-2p}}\\
\ge\;& \frac{m}{p(1-p)^{n-1}} \cdot \left(\frac{1-2p}{1-p}\right)^2 \cdot \left(1 - \left(\frac{p}{1-p}\right)^k\right)^{m}.
\end{align*}
\end{proof}

\section{Conclusions}

We have presented a new method for proving lower bounds on the expected optimization time of randomized search heuristics.
The method is based on an adaptation of the fitness-level method, with additional conditions on transition probabilities. It is intuitive, elegant, versatile, and easy to apply as one can freely choose values for $\chi$, $u_i$, and $\gamma_{i, j}$ ($1 \le i < j \le m$) subject to the required conditions. As a side result, it has also led to a refinement of the well-known upper-bound method with fitness levels.

The lower-bound method has been accompanied by several applications to a broad range of evolutionary algorithms. To this end, we have introduced the class of mutation-based evolutionary algorithms. It captures all EAs that only use mutation, regardless of parent selection or population models.
We have derived very precise lower bounds for \LO{}, \OM{}, and all functions with a unique global optimum. These bounds apply to all mutation-based EAs. Such a generality was previously only known for black-box complexity results.
A further application for the \EA on long $k$-paths has shown that the method still yields tight lower bounds, even when considering only a tiny fraction of all transition probabilities.

All bounds are parametrized with the mutation probability~$p$. The lower bounds for \LO, \OM{}, and long $k$-paths are tight, compared with upper bounds for the \EA, up to smaller-order terms, for all reasonable mutation probabilities.
This is a rare occasion of results that are both very general and very precise at the same time.

The results have also allowed to formally identify optimal mutation-based EAs for \LO and \OM, i.\,e., which algorithm minimizes the expected number of fitness evaluations. In both cases this is a variant of the \EA that creates more than one search point uniformly at random during initialization. Furthermore, we have seen that $p \approx 1.59/n$ is an optimal fixed mutation rate for \LO (see~\cite{Boettcher2010}), $p=1/n$ is optimal for \OM (see~\cite{Witt2011a}) and $p=1/n$ is optimal for the \EA on long $k$-paths.
These very strong conclusions further demonstrate the strength of the new lower-bound method.

Summarizing, we have made an important step forward towards understanding how EAs work, how to find optimal parameter settings, and which EAs are optimal for certain problems. Note that the method itself is not restricted to mutation-based EAs in binary spaces. It is ready to be applied to other search spaces and further stochastic search algorithms; either in its pure form or as a part of a more general analysis.

\section*{Acknowledgments}
The author was partially supported by a postdoctoral fellowship from the German Academic Exchange Service while visiting the International Computer Science Institute in Berkeley, CA, USA as well as by EPSRC grant EP/D052785/1. The author thanks Jon Rowe for suggesting the term \emph{viscosity}, Carsten Witt for insightful discussions about large mutation probabilities on \OM, and Chao Qian for pointing out an error in Theorem~\ref{the:refined-upper-bound}.

\bibliographystyle{abbrv}

\begin{thebibliography}{10}

\bibitem{Auger2011}
A.~Auger and B.~Doerr, editors.
\newblock {\em Theory of Randomized Search Heuristics -- Foundations and Recent
  Developments}.
\newblock Number~1 in Series on Theoretical Computer Science. World Scientific,
  2011.

\bibitem{Borisovsky2008}
P.~A. Borisovsky and A.~V. Eremeev.
\newblock Comparing evolutionary algorithms to the {(1+1)-EA}.
\newblock {\em Theoretical Computer Science}, 403(1):33--41, 2008.

\bibitem{Boettcher2010}
S.~B{\"o}ttcher, B.~Doerr, and F.~Neumann.
\newblock Optimal fixed and adaptive mutation rates for the leadingones
  problem.
\newblock In {\em 11th International Conference on Parallel Problem Solving
  from Nature (PPSN~2010)}, volume 6238 of {\em LNCS}, pages 1--10. Springer,
  2011.

\bibitem{Chen2010}
T.~Chen, K.~Tang, G.~Chen, and X.~Yao.
\newblock Analysis of computational time of simple estimation of distribution
  algorithms.
\newblock {\em IEEE Transactions on Evolutionary Computation}, 14(1):1--22,
  2010.

\bibitem{Cormen2001}
T.~H. Cormen, C.~E. Leiserson, R.~L. Rivest, and C.~Stein.
\newblock {\em Introduction to Algorithms}.
\newblock The MIT Press, 2nd edition, 2001.

\bibitem{Doerr2010a}
B.~Doerr, M.~Fouz, and C.~Witt.
\newblock Quasirandom evolutionary algorithms.
\newblock In {\em Genetic and Evolutionary Computation Conference (GECCO~'10)},
  pages 1457--1464. ACM Press, 2010.

\bibitem{Doerr2011c}
B.~Doerr, M.~Fouz, and C.~Witt.
\newblock Sharp bounds by probability-generating functions and variable drift.
\newblock In {\em Proceedings of the 13th Annual Genetic and Evolutionary
  Computation Conference (GECCO~'11)}, pages 2083--2090. ACM Press, 2011.

\bibitem{Doerr2010c}
B.~Doerr, T.~Jansen, D.~Sudholt, C.~Winzen, and C.~Zarges.
\newblock Optimizing monotone functions can be difficult.
\newblock In {\em 11th International Conference on Parallel Problem Solving
  from Nature (PPSN~2010)}, volume 6238 of {\em LNCS}, pages 42--51. Springer,
  2010.

\bibitem{Doerr2011a}
B.~Doerr, D.~Johannsen, T.~K\"{o}tzing, P.~K. Lehre, M.~Wagner, and C.~Winzen.
\newblock Faster black-box algorithms through higher arity operators.
\newblock In {\em Proceedings of the 11th Workshop on Foundations of Genetic
  Algorithms (FOGA~'11)}, pages 163--172. ACM Press, 2011.

\bibitem{Doerr2010}
B.~Doerr, D.~Johannsen, and C.~Winzen.
\newblock Drift analysis and linear functions revisited.
\newblock In {\em IEEE Congress on Evolutionary Computation (CEC~'10)}, pages
  1967--1974, 2010.

\bibitem{Doerr2010b}
B.~Doerr, D.~Johannsen, and C.~Winzen.
\newblock Multiplicative drift analysis.
\newblock In {\em Genetic and Evolutionary Computation Conference (GECCO~'10)},
  pages 1449--1456. ACM Press, 2010.

\bibitem{Doerr2011b}
B.~Doerr and C.~Winzen.
\newblock Towards a complexity theory of randomized search heuristics:
  Ranking-based black-box complexity.
\newblock In {\em Proceedings of 6th International Computer Science Symposium
  in Russia (CSR 2011)}, volume 6651 of {\em LNCS}, pages 15--28. Springer,
  2011.

\bibitem{Droste2006a}
S.~Droste.
\newblock A rigorous analysis of the compact genetic algorithm for linear
  functions.
\newblock {\em Natural Computing}, 5(3):257--283, 2006.

\bibitem{Droste1998}
S.~Droste, T.~Jansen, and I.~Wegener.
\newblock A rigorous complexity analysis of the (1+1) evolutionary algorithm
  for separable functions with {B}oolean inputs.
\newblock {\em Evolutionary Computation}, 6(2):185--196, 1998.

\bibitem{Droste2002}
S.~Droste, T.~Jansen, and I.~Wegener.
\newblock On the analysis of the (1+1) evolutionary algorithm.
\newblock {\em Theoretical Computer Science}, 276:51--81, 2002.

\bibitem{Droste2006}
S.~Droste, T.~Jansen, and I.~Wegener.
\newblock Upper and lower bounds for randomized search heuristics in black-box
  optimization.
\newblock {\em Theory of Computing Systems}, 39(4):525--544, 2006.

\bibitem{Gutjahr2008a}
W.~J. Gutjahr and G.~Sebastiani.
\newblock Runtime analysis of ant colony optimization with best-so-far
  reinforcement.
\newblock {\em Methodology and Computing in Applied Probability}, 10:409--433,
  2008.

\bibitem{He2004}
J.~He and X.~Yao.
\newblock A study of drift analysis for estimating computation time of
  evolutionary algorithms.
\newblock {\em Natural Computing}, 3(1):21--35, 2004.

\bibitem{Horn1994}
J.~Horn, D.~E. Goldberg, and K.~Deb.
\newblock Long path problems.
\newblock In Y.~Davidor, H.-P. Schwefel, and R.~M{\"a}nner, editors, {\em
  Parallel Problem Solving from Nature ({PPSN}\enskip {III})}, volume 866,
  pages 149--158. Springer, 1994.

\bibitem{Jagerskupper2011}
J.~J{\"a}gersk{\"u}pper.
\newblock Combining {M}arkov-chain analysis and drift analysis -- the
  (1+1)~evolutionary algorithm on linear functions reloaded.
\newblock {\em Algorithmica}, 59(3):409--424, 2011.

\bibitem{Jansen2010}
T.~Jansen and D.~Sudholt.
\newblock Analysis of an asymmetric mutation operator.
\newblock {\em Evolutionary Computation}, 18(1):1--26, 2010.

\bibitem{Jansen2002a}
T.~Jansen and I.~Wegener.
\newblock Evolutionary algorithms -- how to cope with plateaus of constant
  fitness and when to reject strings of the same fitness.
\newblock {\em IEEE Transactions on Evolutionary Computation}, 5(6):589--599,
  2002.

\bibitem{Jansen2011}
T.~Jansen and C.~Zarges.
\newblock Analysis of evolutionary algorithms: from computational complexity
  analysis to algorithm engineering.
\newblock In {\em Proceedings of the 11th Workshop on Foundations of Genetic
  Algorithms (FOGA~'11)}, pages 1--14. ACM, 2011.

\bibitem{Koetzing2011a}
T.~K{\"o}tzing, D.~Sudholt, and M.~Theile.
\newblock How crossover helps in pseudo-{B}oolean optimization.
\newblock In {\em Proceedings of the 13th Annual Genetic and Evolutionary
  Computation Conference (GECCO~2011)}, pages 989--996. ACM Press, 2011.

\bibitem{Lassig2010a}
J.~L{\"a}ssig and D.~Sudholt.
\newblock General scheme for analyzing running times of parallel evolutionary
  algorithms.
\newblock In {\em 11th International Conference on Parallel Problem Solving
  from Nature (PPSN~2010)}, volume 6238 of {\em LNCS}, pages 234--243.
  Springer, 2010.

\bibitem{Lassig2011}
J.~L{\"a}ssig and D.~Sudholt.
\newblock Adaptive population models for offspring populations and parallel
  evolutionary algorithms.
\newblock In {\em Proceedings of the 11th Workshop on Foundations of Genetic
  Algorithms (FOGA~2011)}, pages 181--192. ACM Press, 2011.

\bibitem{Lehre2011}
P.~K. Lehre.
\newblock Fitness-levels for non-elitist populations.
\newblock In {\em Proceedings of the 13th Annual Genetic and Evolutionary
  Computation Conference (GECCO~'11)}, pages 2075--2082. ACM Press, 2011.

\bibitem{Lehre2010a}
P.~K. Lehre.
\newblock Negative drift in populations.
\newblock In {\em 11th International Conference on Parallel Problem Solving
  from Nature (PPSN~2010)}, volume 6238 of {\em LNCS}, pages 244--253.
  Springer, 2011.

\bibitem{Lehre2010}
P.~K. Lehre and C.~Witt.
\newblock Black box search by unbiased variation.
\newblock In {\em Genetic and Evolutionary Computation Conference (GECCO~'10)},
  pages 1441--1448, 2010.

\bibitem{Neumann2009}
F.~Neumann, D.~Sudholt, and C.~Witt.
\newblock Analysis of different {MMAS} {ACO} algorithms on unimodal functions
  and plateaus.
\newblock {\em Swarm Intelligence}, 3(1):35--68, 2009.

\bibitem{Neumann2010a}
F.~Neumann, D.~Sudholt, and C.~Witt.
\newblock A few ants are enough: {ACO} with iteration-best update.
\newblock In {\em Genetic and Evolutionary Computation Conference (GECCO~'10)},
  pages 63--70, 2010.

\bibitem{Neumann2009b}
F.~Neumann and C.~Witt.
\newblock Runtime analysis of a simple ant colony optimization algorithm.
\newblock {\em Algorithmica}, 54(2):243--255, 2009.

\bibitem{BookNeuWit}
F.~Neumann and C.~Witt.
\newblock {\em Bioinspired Computation in Combinatorial Optimization --
  Algorithms and Their Computational Complexity}.
\newblock Springer, 2010.

\bibitem{Oliveto2007}
P.~S. Oliveto, J.~He, and X.~Yao.
\newblock Time complexity of evolutionary algorithms for combinatorial
  optimization: A decade of results.
\newblock {\em International Journal of Automation and Computing},
  4(3):281--293, 2007.

\bibitem{Oliveto2011}
P.~S. Oliveto and C.~Witt.
\newblock Simplified drift analysis for proving lower bounds in evolutionary
  computation.
\newblock {\em Algorithmica}, 59(3):369--386, 2011.

\bibitem{RowePersonal}
J.~Rowe.
\newblock Personal communication, 2011.

\bibitem{Rowe2011}
J.~E. Rowe and M.~D. Vose.
\newblock Unbiased black box search algorithms.
\newblock In {\em Proceedings of the 13th Annual Genetic and Evolutionary
  Computation Conference (GECCO~'11)}, pages 2035--2042. ACM Press, 2011.

\bibitem{Rudolph1997a}
G.~Rudolph.
\newblock {\em Convergence Properties of Evolutionary Algorithms}.
\newblock Verlag Dr. Kova\v{c}, 1997.

\bibitem{Rudolph1997}
G.~Rudolph.
\newblock How mutation and selection solve long-path problems in polynomial
  expected time.
\newblock {\em Evolutionary Computation}, 4(2):195--205, 1997.

\bibitem{Sudholt2009}
D.~Sudholt.
\newblock The impact of parametrization in memetic evolutionary algorithms.
\newblock {\em Theoretical Computer Science}, 410(26):2511--2528, 2009.

\bibitem{Sudholt2010a}
D.~Sudholt.
\newblock General lower bounds for the running time of evolutionary algorithms.
\newblock In {\em 11th International Conference on Parallel Problem Solving
  from Nature (PPSN~2010)}, volume 6238 of {\em LNCS}, pages 124--133.
  Springer, 2010.

\bibitem{Sudholt2010}
D.~Sudholt.
\newblock Hybridizing evolutionary algorithms with variable-depth search to
  overcome local optima.
\newblock {\em Algorithmica}, 59(3):343--368, 2011.

\bibitem{Sudholt2011a}
D.~Sudholt and C.~Thyssen.
\newblock Running time analysis of ant colony optimization for shortest path
  problems.
\newblock {\em Journal of Discrete Algorithms}, 2011.
\newblock To appear.

\bibitem{Sudholt2008c}
D.~Sudholt and C.~Witt.
\newblock Runtime analysis of {Binary PSO}.
\newblock In {\em Proceedings of the Genetic and Evolutionary Computation
  Conference (GECCO~'08)}, pages 135--142. ACM Press, 2008.

\bibitem{Sudholtsubmitteda}
D.~Sudholt and C.~Witt.
\newblock Runtime analysis of a binary particle swarm optimizer.
\newblock {\em Theoretical Computer Science}, 411(21):2084--2100, 2010.

\bibitem{Sudholt2010b}
D.~Sudholt and C.~Zarges.
\newblock Analysis of an iterated local search algorithm for vertex coloring.
\newblock In {\em 21st International Symposium on Algorithms and Computation
  (ISAAC 2010)}, volume 6506 of {\em LNCS}, pages 340--352. Springer, 2010.

\bibitem{Wegener2002}
I.~Wegener.
\newblock Methods for the analysis of evolutionary algorithms on
  pseudo-{Boolean} functions.
\newblock In R.~Sarker, X.~Yao, and M.~Mohammadian, editors, {\em Evolutionary
  Optimization}, pages 349\protect\nobreakdash--369. Kluwer, 2002.

\bibitem{Wegener2005c}
I.~Wegener and C.~Witt.
\newblock On the optimization of monotone polynomials by simple randomized
  search heuristics.
\newblock {\em Combinatorics, Probability and Computing}, 14:225--247, 2005.

\bibitem{Witt2006}
C.~Witt.
\newblock Runtime analysis of the {($\mu$+1) EA} on simple pseudo-{Boolean}
  functions.
\newblock {\em Evolutionary Computation}, 14(1):65--86, 2006.

\bibitem{Witt2009}
C.~Witt.
\newblock Why standard particle swarm optimisers elude a theoretical runtime
  analysis.
\newblock In {\em Foundations of Genetic Algorithms 10 (FOGA~'09)}, pages
  13--20. ACM Press, 2009.

\bibitem{Witt2011a}
C.~Witt.
\newblock Tight bounds on the optimization time of the {(1+1)~EA} on linear
  functions.
\newblock {\em ArXiv e-prints}, Aug. 2011.
\newblock Available from \url{http://arxiv.org/abs/1108.4386v1}.

\bibitem{Zarges2008}
C.~Zarges.
\newblock Rigorous runtime analysis of inversely fitness proportional mutation
  rates.
\newblock In {\em Parallel Problem Solving from Nature - PPSN X}, volume 5199
  of {\em LNCS}, pages 112--122. Springer, 2008.

\end{thebibliography}

\end{document}